\newtheorem{defi}{Definition}[section]
\renewenvironment{proof}[1][]{\par\noindent{\bf Proof #1\ }}{\hfill\BlackBox\\[2mm]}
\newenvironment{proof sketch}[1][]{\par\noindent{\bf Proof Sketch#1\ }}{\hfill\BlackBox\\[2mm]}
\newcommand{\hf}{\hat{f}}
\newcommand{\cV}{{\cal V}}
\newcommand{\E}{\mathbb{E}}
\newcommand{\F}{\mathcal{F}}
\newcommand{\mY}{\mathcal{Y}}
\newcommand{\ind}{\mathbbm{1}}
\newcommand{\cX}{{\cal X}}
\newcommand{\cF}{{\cal F}}
\newcommand{\cS}{{\cal S}}
\newcommand{\eqdef}{\triangleq}
\newcommand{\reals}{\mathbb{R}}
\newcommand{\polylog}{{\rm polylog}}
\newcommand{\PCS}{\rm PCS}
\newcommand{\LESS}{\rm LESS}
\newcommand{\iLESS}{{\rm{ILESS}}}
\newcommand{\ActiveiLESS}{{\textrm{Active-ILESS}}}
\newcommand{\BatchiLESS}{{\textrm{Batch-ILESS}}}
\newcommand{\argmin}{\mathop{\rm argmin}}
\newsavebox{\savepar}
\begin{document}
	\bibliographystyle{ieeetr}

		\title{The Relationship Between Agnostic Selective Classification
			Active Learning and the Disagreement Coefficient}
		
		\author{%
			\name Roei Gelbhart
			\email roeige@cs.technion.ac.il\\
			\addr Department of Computer Science\\
			Technion -- Israel Institute of Technology\\
			\name Ran El-Yaniv
			\email rani@cs.technion.ac.il\\
			\addr Department of Computer Science\\
			Technion -- Israel Institute of Technology}

		\maketitle
		
		\begin{abstract}A selective classifier $(f,g)$ comprises a classification function $f$ and a binary selection function $g$,
			which determines if the classifier abstains from prediction, or uses $f$ to predict.
			The classifier is called pointwise-competitive if it classifies each point identically to the best classifier 
			in hindsight (from the same class), whenever it does not abstain.
			The quality of such a classifier is quantified by its rejection mass, defined to be the probability mass of the points it rejects. A ``fast'' rejection rate is achieved if the rejection mass is bounded from above 
			by $\tilde{O}(1/m)$ where $m$ is the number of labeled examples used to train the classifier 
			(and $\tilde{O}$ hides logarithmic factors).
			Pointwise-competitive selective (PCS) classifiers are intimately related to disagreement-based
			active learning and it is known that in the realizable case, a fast rejection rate 
			of a known PCS algorithm (called Consistent Selective Strategy) is equivalent to an exponential speedup of 
			the well-known CAL active algorithm.
			
			We focus on the agnostic setting, for which  there is a known algorithm called LESS that learns a 
			PCS classifier and achieves a fast rejection rate (depending on Hanneke’s disagreement coefficient) under strong assumptions. 
			We present an improved PCS learning algorithm called ILESS for which we show a fast rate
			(depending on Hanneke's disagreement coefficient) without any assumptions.
			Our rejection bound smoothly interpolates the realizable and agnostic settings.
			The main result of this paper is an equivalence between the following three
			entities: 
			(i) the existence of a fast rejection rate for any PCS learning algorithm (such as ILESS);
			(ii) a poly-logarithmic bound for Hanneke's disagreement coefficient;
			and 
			(iii) an exponential speedup for a new disagreement-based active learner called {\ActiveiLESS}.
		\end{abstract}

		\begin{keywords}
			Active learning, selective prediction, disagreement coefficient, selective sampling, selective classification, reject option, pointwise-competitive, selective classification, statistical learning theory, PAC learning, sample complexity, agnostic case
		\end{keywords}

		\section{Introduction}
		\label{sec:intro}

		\textbf{Selective classification} is a unique and extreme instance of the broader concept of
		confidence-rated prediction \citep{Chow70,VovkGS05,BartlettW08,yuan2010classification,cortes2016boosting,wiener2012pointwise,kocak2016conjugate,zhang2014beyond}.
		Given a training sample consisting of $m$ labeled instances,
		the learning algorithm is  required to output
		a \emph{selective classifier} \citep{ElYaniv10}, defined to be a pair $(f,g)$, where $f$ is a prediction function,
		chosen from some hypothesis class  $\cF$,
		and $g: \cX \to \{0,1\}$ is a \emph{selection function},
		serving as a qualifier for $f$ as follows: for any $x$, if  $g(x) =1$, the classifier predicts $f(x)$, and otherwise it abstains.
		The general performance of a selective classifier is quantified in terms of its \emph{coverage} and \emph{risk},
		where coverage is the probabilistic mass of non-rejected instances, and risk is
		the normalized average loss of $f$ restricted to non-rejected instances.
		Let $f^*$ be any (unknown) true risk minimizer\footnote{We assume that there exists an $f^{*}$ in $\cF$. Otherwise, we can 
			artificially define $f^*$ to be any function whose risk is sufficiently close to  $\inf_{f \in \cF}(R(f))$, for instance,  not greater than a small multiplicative factor from this infimum.} 
		in $\cF$ for the given problem.
		The  selective classifier $(f,g)$ is said to be \emph{pointwise-competitive} 
		if,  for each $x$ with  $g(x)=1$, it must hold that $f(x) =f^*(x)$ for all $f^* \in \cF$ \cite{WienerE14}. 
		Thus, pointwise-competitiveness w.h.p. over choices of the training sample, is a highly desirable property: it guarantees, for each non-rejected test point, the best possible classification obtainable using the best in-hindsight classifier from $\F$. We don't restrict $g$ to be from any specific hypothesis class, however, because we use disagreement based selective prediction, the selection of $\F$ will limit the possibilities of $g$. The scenario of a predefined decision functions hypothesis class is investigated in \cite{cortes2016learning}.  
		
		Pointwise-competitive selective classification ({\PCS}) was first considered in the realizable case
		\citep{ElYaniv10},  for which a simple consistent selective strategy (CSS) was shown to achieve a
		bounded and monotonically increasing (with $m$) coverage in various non-trivial settings.
		Note that in the realizable case, any {\PCS} strategy attains zero risk (over the sub-domain it covers).
		These results were recently extended to the agnostic setting \citep{WienerE14,wiener2011agnostic} with a related but different algorithm
		called \emph{low-error selective strategy (LESS)}, for which a number of coverage bounds were shown. These bounds relied on the fact that the underlying 
		probability distribution and the hypothesis class $\cF$ will satisfy  the so-called
		``$(\beta_1,\beta_2)$-Bernstein property'' \cite{BarMenPhi04}. 
		The coverage bounds in \citep{WienerE14,wiener2011agnostic} 
		are dependent on the parameters $\beta_1,\beta_2$. 
		This Bernstein property assumption (as presented in \cite{BarMenPhi04}),
		which allows for better concentration, 
		can be problematic.
		First, it is defined with respect to a unique true risk minimizer
		$f^*$, a property which is unlikely to hold in noisy agnostic settings.  
		Moreover, for arbitrary $\cF$, even for the 0/1 loss function, it is not 
		necessarily known whether the Bernstein property can hold at all.\footnote{It was mentioned in \cite{WienerE14} that, under the Tsybakov noise condition \cite{Tsybakov04}, the desired property holds, but this is guaranteed only for cases in which the Bayes classifier is within $\cF$, which is a fairly strong assumption in itself.} We removed the Bernstein assumption from our analysis.   
				
		Assuming that a selective classifier is w.h.p. pointwise-competitive, our key goal is a small rejection rate. We will say that a learner has a \textbf{fast $R^*$ rejection rate}, if w.h.p. the rejection rate is bounded by 
		$$
		\polylog(\frac{1}{R(f^*)+1/m}) \cdot R(f^*) + \frac{\polylog(m,d,1/\delta)}{m}.
		$$	
		Selective classification is very closely related to the field of \textbf{active learning (AL)}.
		In active learning, the learner can actively influence the learning process by selecting
		the points to be labeled. The incentive for introducing this
		extra flexibility is to reduce labeling efforts.
		A key question in theoretical studies of AL is how many
		label requests are sufficient to learn a given (unknown) target concept to a specified
		accuracy, a quantity called \emph{label complexity}. For an AL algorithm
		satisfying the ``passive example complexity'' property (consuming the same number
		of labeled/unlabeled examples as a passive algorithm for achieving
		the same error; see Definition \ref{passive example complexity}), we will say it has \textbf{$R^*$ exponential speedup}, if w.h.p. the number of labels it requests is bounded by
		$$
		\polylog(\frac{1}{R(f^*)+1/m}) \cdot R(f^*)m+ \polylog(m,d,1/\delta).
		$$ 
		
		The connection between AL and confidence-rated prediction is quite intuitive.
		A (pointwise-competitive) selective classifier $P$ can be straightforwardly used as the querying component of an active learning algorithm.
		This reduction is most naturally demonstrated in the stream-based AL model:
		at each iteration,
		the active algorithm trains a selective classifier on the currently available labeled samples, and then decides to query
		a newly introduced (unlabeled) point $x$ if $P$ abstains on $x$.


		Hanneke's \textbf{disagreement coefficient} \cite{Hanneke07} (see Definition \ref{disagreementCoefficient}), is a well-known parameter of the hypothesis class and the marginal distribution; it is used in most of the known label complexity bounds \cite{hsu:thesis, Hanneke07, ailon2011active}. The disagreement coefficient is the supremum of the relation between the disagreement mass of functions that are $r$-distanced from $f^*$ to $r$, over $r$. PCS classification is based on using generalization bounds to estimate the empirical error of $f^*$, and more specifically, its distance from the empirical error of the ERM. Whenever there is a unanimous agreement of all the functions that reside within a ball around the ERM, the classifier choses to classify. Thus, the abstain rate is dependent on the disagreement mass of the functions within the ball. The radius of the ball depends on the generalization bounds. The generalization bounds we use are of the form $\tilde{O}(1/m)$ for the realizable case (we consider the realizable case here for simplicity). After observing $m$ examples, we can bound the disagreement mass of a ball around the ERM, by multiplying the radius of the ball, which is $\tilde{O}(1/m)$, with the disagreement coefficient. Thus, if for example, the disagreement coefficient is bounded by a constant, the abstain rate of some PCS algorithms can be bounded by $\tilde{O}(1/m)$ for the realizable case. This 
		gives a basic idea of the disagreement coefficient, which will be formally presented later on.	
		
		Note that in principle, the disagreement coefficient can 
		be replaced by another important quantity, namely,
		the \textbf{version space compression set size},
		recently shown to be equivalent to it \cite{JMLR:v16:wiener15a,el2015version}. Specifically, an $O(\polylog(m)log(1/\delta))$ version space compression set size minimal bound was shown in \cite[Corollary 11]{JMLR:v16:wiener15a}, to be equivalent to an $O(\polylog(1/r))$ disagreement coefficient.

		The first contribution of this paper is a novel selective classifier, called {\iLESS}, which utilizes a tighter generalization error bound than 
		LESS and depends on $R(f^*)$ (and interpolates the agnostic and realizable cases). Most importantly,
		the new strategy can be analyzed completely without the Bernstein condition. 

		We derive an active learning algorithm, called {\ActiveiLESS}, corresponding to our selective classifier, {\iLESS}. {\ActiveiLESS} is constructed to work in a stream-based AL model and its querying function is extremely conservative: for each unlabeled example, the algorithm requests its label if and only if the labeling of the optimal classifier (from the same class) on this point cannot be inferred from information already acquired.
		This querying strategy, which is often termed ``disagreement-based,'' has been used in
		a number of stream-based AL algorithms such as
		$A^2$ (\emph{Agnostic Active}), developed in  \cite{balcan2006agnostic},
		RobustCAL, studied by the authors of \cite{Hanneke11,Hanneke_book} and \cite{hanneke:12b},
		or the general agnostic AL algorithm of \cite{dasgupta2007general}. In \cite{NIPS2015_5939}, a computationally efficient algorithm for disagreement based AL.
		
		The first formal relationship between {\PCS} classification and AL was proposed in \citep{el2012active,Wiener13},
		where the aforementioned CSS algorithm was shown to be equivalent to the well-known CAL AL algorithm of
		\cite{CohAtlLad94}, in the sense that a fast coverage rate for CSS was proven to be equivalent to exponential label complexity
		speedup for CAL. This result applies to the realizable setting only.
		Our first contribution is a similar equivalence relation between pointwise-competitive selective classification and AL, which applies to the more challenging agnostic case and
		smoothly interpolates the realizable and agnostic settings.

		Our second and main contribution is to show a complete equivalence between 
		(i) selective classification with a fast $R^*$ rejection rate, 
		(ii) AL with $R^*$ exponential speedup (represented by {\ActiveiLESS}), and 
		(iii) the existence of an $f^*$ with a bounded disagreement coefficient.
		This is illustrated in Figure \ref{figure1}, where the blue errors indicate the equivalence relationships we prove in this paper, and the red arrow indicates a previously known result (from \cite{hsu:thesis, Hanneke07}) (and can also be deduced from the other arrows).

		\begin{figure}[htb]	
				\includegraphics[scale=0.4]{./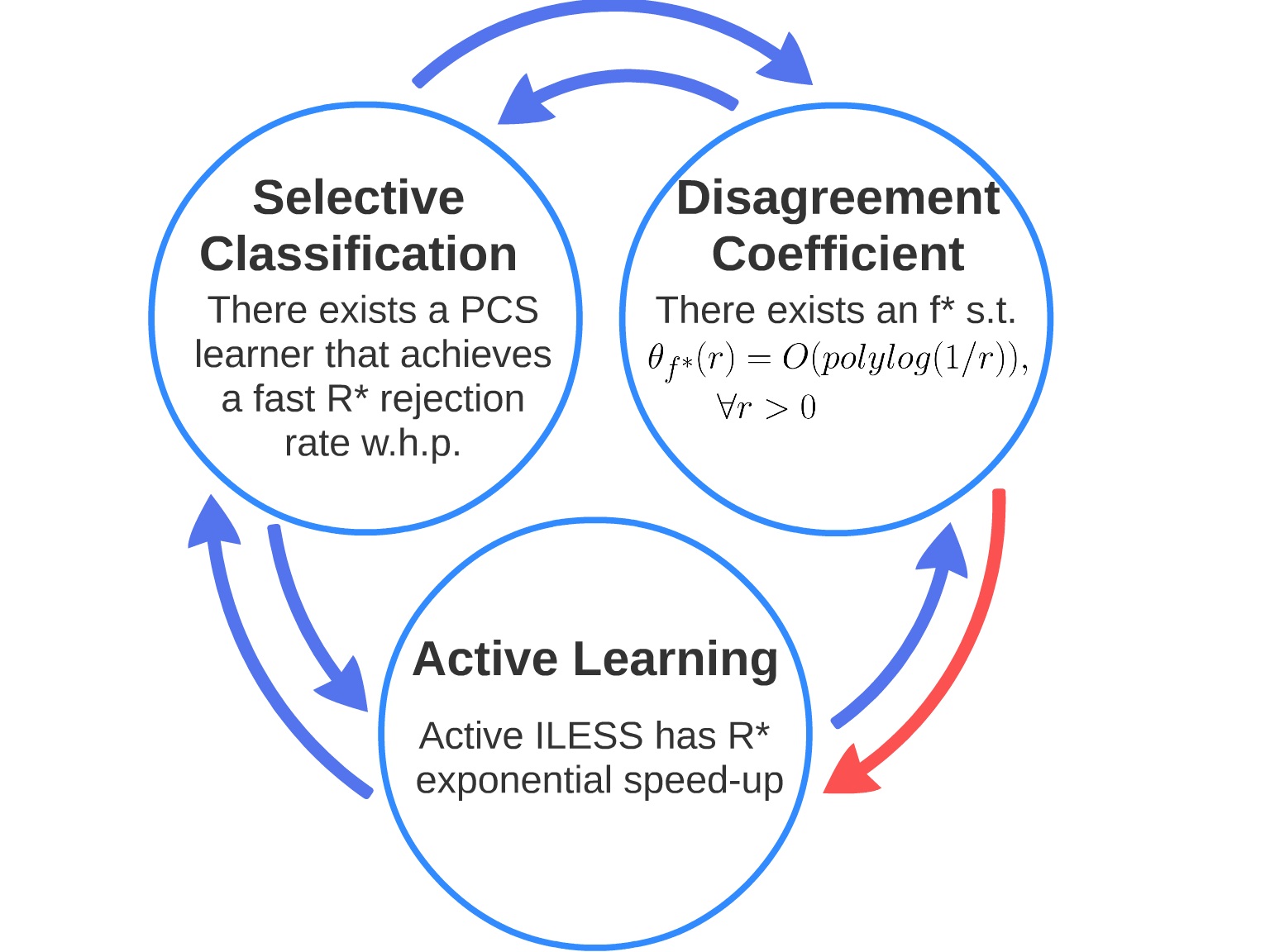}
				\centering
				    \caption{Main results \label{figure1}}
		\end{figure}
		
		\section{Definitions}
		\label{sec:definitions}
		Consider a domain  $\mathcal{X}$, and a binary label set $\mathcal{Y} = \{\pm 1\}$.
		A learning problem is specified via a hypothesis class $\mathcal{F}$ and an unknown probability distribution
		$\mathcal{P_{X,Y}}$. Given a sequence of labeled training examples $S_{m}=((x_{1},y_{1}),(x_{2},y_{2}),...,(x_{m},y_{m}))$,  such that $\forall i,(x_{i},y_{i})\in \mathcal{X}\times \mathcal{Y}$, the empirical error of a hypothesis $f$ over  $S_m$  is
		$\hat{R}(f,S_{m}) \eqdef \frac{1}{m}\sum_{i=1}^m \ell(f(x_i), y_i)$,
		where $\ell: \mY \times \mY \to \reals^+$ is a loss function. 
		In this paper we will mainly focus on the zero-one loss function, $\ell_{01}(y,y') \eqdef \ind\{y \neq y'\}$.
		The true (zero-one) error of $f$ is
		$R(f) \eqdef \E_{\mathcal{P}}\left[\ell_{01} (f(x),y)  \right]$.
		An empirical risk minimizer hypothesis (henceforth an ERM) is
		\begin{equation}
		\hf(S_{m}) \eqdef \argmin_{f \in \cF} \hat{R}(f,S_{m}) ,
		\end{equation}
		and a true risk minimizer is
		$f^* \eqdef \argmin_{f \in \cF} R(f)$.\footnote{We assume that $f^{*}$ exists, and that it need not be unique, in which case $f^{*}$  refers to any one of the minimizers.}
		
		We acquire the following definitions from \cite{WienerE14}.
		For any hypothesis class $\cF$, hypothesis $f \in \cF$, distribution $\mathcal{P_{X,Y}}$, sample $S_m$, and real number $r>0$, define
		the true and empirical \emph{low-error sets},
		\begin{equation}
		\label{eq:V}
		\cV(f,r) \eqdef \left\{f' \in \cF : R(f') \leq R(f) + r \right\}
		\end{equation}
		and
		\begin{equation}
		\label{eq:Vhat}
		\hat{\cV}(f,r) \eqdef \left\{f' \in \cF : \hat{R}(f',S_{m}) \leq \hat{R}(f,S_{m}) + r \right\}.
		\end{equation}
		Let $G \subseteq \cF$. The \emph{disagreement set} \cite{Hanneke07}
		and \emph{agreement set} \cite{ElYaniv10} w.r.t. $G$ are defined, respectively, as
		\begin{equation}
		\label{eq:dis}
		DIS(G) \eqdef \left\{x \in \cX : \exists f_1,f_2 \in G \quad \text{s.t.} \quad f_1(x) \neq f_2(x)\right\}
		\end{equation}
		\begin{equation}
		\label{eq:agr}
		\text{and} \ \ AGR(G) \eqdef \left\{x \in \cX : \forall  f_1,f_2 \in G \quad \text{s.t.} \quad f_1(x) =  f_2(x)\right\}.
		\end{equation}	
		In \emph{selective classification} \cite{ElYaniv10}, the learning algorithm receives $S_m$ and is 
		required to output
		a \emph{selective classifier}, defined to be a pair $(f,g)$, where $f \in \cF$ is a classifier, 
		and $g: \cX \to \{0,1\}$ is a \emph{selection function},
		serving as a qualifier for $f$ as follows.
		For any $x \in \cX$, $(f,g)(x) = f(x)$ iff $g(x) =1$. Otherwise, the classifier outputs ``I don't know''. 
		For any selective classifier $(f,g)$ we define its coverage to be
		$$
		\Phi(f,g) \eqdef \Pr_{X \sim \mathcal{P_{X}}}(g(X)=1),
		$$ and its complement, $1- \Phi$,  is called the \textbf{abstain rate}.  
		For any $f \in \cF$ and $r > 0$, define the set
		$B(f,r)$ of all hypotheses that reside within a ball of radius $r$ around $f$,
		$$
		B(f,r) \eqdef \left\{f' \in \cF : \Pr_{X \sim \mathcal{P_{X}}}\left\{f'(X) \neq f(X)\right\} \leq r \right\}.
		$$
		For any $G \subseteq \cF$, and distribution $\mathcal{P_X}$, we denote by $\Delta G$ the volume of the disagreement set 
		of $G$ (see (\ref{eq:dis})),
		$\Delta G \eqdef \Pr\left\{DIS(G)\right\}$.
		\begin{defi} [Disagreement Coefficient]	\label{disagreementCoefficient}
		Let $r_0 \geq 0$. Then,
		Hanneke's \emph{disagreement coefficient} \cite{Hanneke07} 
		of a classifier $f \in \cF$ with respect to the target distribution $\mathcal{P_X}$ is		
		\begin{equation}
		\label{eq:disagreementCoefficient1}
		\theta_{f}(r_0)\eqdef\sup_{r>r_0}\frac{\Delta B(f,r)}{r},
		\end{equation}		
		and the general \emph{disagreement coefficient} of the entire hypothesis class $\cF$ is
		\begin{equation}
		\label{eq:disagreementCoefficient}
		\theta(r_0) \eqdef \sup_{f \in \cF}\theta_{f}(r_0).
		\end{equation}
		\end{defi}
		Notice that this definition of the disagreement coefficient is independent of $\mathcal{P_{Y|X}}$.
		Another commonly used definition of the disagreement coefficient does depend on a true risk minimizer
		$f^*$, as follows:
				\begin{equation}
				\label{eq:disagreementCoefficient2}
				\theta'(r_0) = \sup_{r>r_0}\frac{\Delta B(f^*,r)}{r}.
				\end{equation}
		Clearly, it always holds that $\theta' \leq \theta$. The independence of $\theta$ of unknown quantities
		such as the underlying distribution (and $f^*$), however, is a convenient
		property that sometimes allows for a direct estimation of $\theta$, which only depends on the marginal distribution, $\mathcal{P_X}$.
		This is, for example, the case in active learning, where labels are expensive but information about the marginal distribution (provided by unlabeled examples) is cheap. 
		Note also that the above definition of $\theta'$ implicitly assumes a unique $f^*$. Nevertheless, the definition can be extended to cases where
		$f^*$ is not unique, in which case the infimum over all $f^*$ can be considered (the analysis can be extended accordingly using limits). For more on the disagreement coefficient, and examples of probabilities distributions and hypothesis classes for which it is bounded, see \cite{Hanneke_book}.

		\section{Convergence Bounds and LESS}
		We use a uniform convergence bound from \cite{dasgupta2007general,BousquetBL03}.
		Define  convergence slacks  $\sigma_{R-\hat{R}}(m,\delta,d,R,\hat{R})$ and $\sigma_{\hat{R}-R}(m,\delta,d,R,\hat{R})$, given in terms of the training sample, $S_{m}$, its size, $m$, the confidence parameter, $\delta$, and the VC-dimension $d$ of the class $\cF$. For any $f \in \cF$,

		\begin{equation}
			\label{eq:slackOne}
		\sigma_{R-\hat{R}}(m,\delta,d,R,\hat{R}) \eqdef \min \{\underbrace{\frac{4d \ln( \frac{16me}{d\delta} )}{m} +  \sqrt{\frac{4d \ln( \frac{16me}{d\delta} )}{m} \cdot \hat{R}}}_{\hat{\sigma}_{R-\hat{R}}(m,\delta,d,\hat{R})} ,\underbrace{ \sqrt{\frac{4d \ln( \frac{16me}{d\delta} )}{m} \cdot R}}_{\bar{\sigma}_{R-\hat{R}}(m,\delta,d,R)}  \}
		\end{equation}
		and
		\begin{equation}
		\sigma_{\hat{R}-R}(m,\delta,d,R,\hat{R}) \eqdef \min \{\underbrace{ \frac{4d \ln( \frac{16me}{d\delta} )}{m} +  \sqrt{\frac{4d \ln( \frac{16me}{d\delta} )}{m} \cdot R}}_{\bar{\sigma}_{\hat{R}-R}(m,\delta,d,R)},\underbrace{ \sqrt{\frac{4d \ln( \frac{16me}{d\delta} )}{m} \cdot \hat{R}}}_{\hat{\sigma}_{\hat{R}-R}(m,\delta,d,\hat{R})}  \}.
		\end{equation}
		To simplify the analysis, we further decompose the above  slack terms to their empirical and non-empirical components. For~(\ref{eq:slackOne}), we thus have,
		respectively,
		\begin{equation}
		\hat{\sigma}_{R-\hat{R}}(m,\delta,d,\hat{R}) \eqdef \frac{4d \ln( \frac{16me}{d\delta} )}{m} +  \sqrt{\frac{4d \ln( \frac{16me}{d\delta} )}{m} \cdot \hat{R}}
		\end{equation}
		and
		\begin{equation}
		\hat{\sigma}_{\hat{R}-R}(m,\delta,d,\hat{R}) \eqdef \sqrt{\frac{4d \ln( \frac{16me}{d\delta} )}{m} \cdot \hat{R}} .
		\end{equation} 
		Similarly, the non-empirical part in these minimums are denoted by
		$\bar{\sigma}_{R-\hat{R}}$ and $\bar{\sigma}_{\hat{R}-R}$. 
		With this notation, we can write, for example, 
			$\sigma_{R-\hat{R}}  = \min \{ 
			\hat{\sigma}_{R-\hat{R}}
			,\bar{\sigma}_{R-\hat{R}}  \}$.
		Our Lemma \ref{lemma:agnostic} is taken from \cite[Lemma 1]{dasgupta2007general}, which is based on \cite[Theorem 7]{BousquetBL03} \footnote{In the original lemma from \cite{dasgupta2007general}, there appears $S(\mathcal{H},n)$, the growth function. We plug in Sauer's Lemma, $S(\mathcal{H},n)\leq (\frac{em}{d})^d$, into Lemma 1 from \cite{dasgupta2007general} to get our lemma.}.
		\begin{lemma} [\cite{dasgupta2007general}]
			\label{lemma:agnostic}
			Let $\mathcal{F}$ be a hypothesis class with VC-dimension $d$.
			For any $0<\delta < 1$,
			with probability of at least $1-\delta$ over the
			choice of $S_m$ from $\mathcal{P}^m$, any hypothesis $f \in \cF$ satisfies
			\begin{equation}
			R(f) \leq \hat{R}(f) + \sigma_{R-\hat{R}}\left(m,\delta,d,R(f),\hat{R}(f)\right)	\label{eq:bound1}
			\end{equation} 
			\begin{equation}
			\hat{R}(f) \leq R(f) + \sigma_{\hat{R}-R}\left(m,\delta,d,R(f),\hat{R}(f)\right).	\label{eq:bound2}
			\end{equation}

		\end{lemma}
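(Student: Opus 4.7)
The plan is immediate: the statement is essentially a VC-dimension reformulation of Lemma 1 of \cite{dasgupta2007general}, which in turn specializes Theorem 7 of \cite{BousquetBL03}. The only work specific to this paper is the substitution of Sauer's Lemma into the original growth-function-based bound, as the footnote already indicates.

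First, I would invoke the original result from \cite{dasgupta2007general}: with probability at least $1-\delta$, every $f \in \cF$ satisfies simultaneously $R(f) \le \hat{R}(f) + \sigma'$ and $\hat{R}(f) \le R(f) + \sigma''$, where $\sigma'$ and $\sigma''$ have the same structure as the slacks in \eqref{eq:slackOne}, but with each occurrence of $d \ln(16me/(d\delta))$ replaced by an expression of the form $\ln(S(\cF,2m)/\delta)$ involving the growth function $S(\cF,\cdot)$. The appearance of the minimum of an empirical and a non-empirical branch in each slack comes directly from the concentration argument in \cite{BousquetBL03}: one application of a Bernstein-type inequality bounds the variance of the $0/1$ loss by the true risk $R(f)$, yielding the $\bar{\sigma}$ branch with $R$ under the square root; the other uses the empirical variance, which is at most $\hat{R}(f)$, and produces the $\hat\sigma$ branch with $\hat{R}$ under the square root plus the additional deterministic term $4d\ln(\cdot)/m$ that arises when passing from true to empirical variance.

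Next, I would apply Sauer's Lemma, $S(\cF,n) \le (en/d)^d$ for $n \ge d$, at $n=2m$, to bound $\ln(S(\cF,2m)/\delta) \le d\ln(2em/d) + \ln(1/\delta)$, and absorb this together with the constants from a two-sided union bound (splitting $\delta$ evenly between the two directions of deviation) into the single expression $4d\ln(16me/(d\delta))$ appearing in \eqref{eq:slackOne}. Taking the minimum of the two resulting branches for each direction yields the slacks $\sigma_{R-\hat{R}}$ and $\sigma_{\hat{R}-R}$ exactly as defined, and hence \eqref{eq:bound1} and \eqref{eq:bound2} hold simultaneously with probability at least $1-\delta$.

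I do not expect any genuine obstacle: no new probabilistic argument is needed, and the only mildly tedious step is verifying that the numerical constants in \cite{dasgupta2007general,BousquetBL03}, combined with the two-way union bound and Sauer's substitution, indeed fit inside the constants $4$ and $16$ used in \eqref{eq:slackOne}.
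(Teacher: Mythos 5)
Your proposal matches the paper's treatment: the paper gives no independent proof of this lemma, but simply cites Lemma 1 of \cite{dasgupta2007general} (itself built on Theorem 7 of \cite{BousquetBL03}) and, as its footnote states, substitutes Sauer's Lemma into the growth-function term to obtain the stated slacks. Your additional explanation of where the two branches of the minimum come from (true-variance vs.\ empirical-variance Bernstein bounds) is consistent with the cited source and does not change the route.
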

		
		Strategy~\ref{alg:SemanticSort} is the $\LESS$ algorithm of \cite{WienerE14}.
		$\LESS$ learns w.h.p. a pointwise-competitive selective classifier, $(f,g)$, where $f \in \F$ and
		$g: \cX \to \{0,1\}$ is its selection function which determines whether to abstain or to classify.
		A \emph{pointwise-competitive selective classifier} must satisfy the following condition: for each $x$ with  $g(x)=1$, it must hold that $f(x) =f^*(x)$ for all $f^* \in \cF$.
		\begin{remark}
		The original definition of pointwise-competitiveness from \cite{WienerE14} requires a single $f^*$. We widen the definition to cases for which there are more than one $f^*$, and require that a pointwise-competitive selective classifier will be equal to \emph{all} $f^*$, wherever $g=1$. This extrapolation seems a bit strict. However, even if the requirement would have been relaxed to ``any $f^*$'', any pointwise-competitive selective classifier would still have been forced to identify with all $f^*$, as it is impossible to differentiate whether a set of functions are all $f^*$, or one is better than the rest.   
		\end{remark}	
		
		The main idea behind $\LESS$ is  that, w.h.p. all $f^{*}$ lie within a ball around an ERM
		hypothesis with error radius of $2\sigma(m,\delta / 4,d)$, where 
		\begin{equation}
		\label{eq:sigma}
		\sigma(m,\delta,d) \eqdef 2\sqrt{\frac{2d\left(\ln\frac{2me}{d}\right)+\ln{\frac{2}{\delta}}}{m}}
		\end{equation}
		is the slack term of a certain uniform convergence bound. 
		Therefore, if all the functions in that ball agree over the labeling of any instance $x$, we know with high probability that all $f^{*}$ label $x$ the same way as the ERM.
		This property ensures that $\LESS$ is pointwise-competitive w.h.p.

		\begin{strategy}
			\caption{Agnostic low-error selective strategy (\LESS)}
			\label{alg:one}
			\footnotesize {
				\begin{algorithmic}[1]
					\REQUIRE  Sample set of size $m$, $S_m$,\\
					Confidence level $\delta$\\
					Hypothesis class $\mathcal{F}$ with VC dimension $d$\\
					\ENSURE A selective classifier $(h,g)$ 
					\STATE Set $\hat{f} = ERM(\cF,S_m)$, i.e., $\hat{f}$ is any empirical risk minimizer from $\cF$
					\STATE Set $G = \hat{\cV}\left(\hat{f}, 2\sigma(m,\delta / 4,d)  \right)$
					\STATE Construct $g$ such that $g(x) = 1 \Longleftrightarrow x \in \left\{\cX \setminus DIS\left(G\right)\right\}$
					\STATE $f = \hat{f}$
				\end{algorithmic}
			}
			\label{alg:SemanticSort}
		\end{strategy}

	\section{ILESS}	\label{sec:iLESS}
		
	\begin{strategy}
		\caption{Improved Low-Error Selective Strategy (\iLESS)}
		\label{alg:iLESS}
		\footnotesize {
			\begin{algorithmic}[1]
				\REQUIRE  Sample set of size $m$, $S_m$,\\
				Confidence level $\delta$\\
				Hypothesis class $\mathcal{F}$ with VC dimension $d$\\
				\ENSURE A selective classifier $(h,g)$ 
				\STATE Set $\hat{f} = ERM(\cF,S_m)$, i.e., $\hat{f}$ is any empirical risk minimizer from $\cF$
				\STATE Set $\sigma_{\iLESS}= \hat{\sigma}_{R-\hat{R}}
				\left(m,\delta,d,\hat{R}(\hat{f},S_m)
				\right) + \bar{\sigma}_{\hat{R}-R}
				\left(m,\delta,d,\hat{R}(\hat{f},S_m)+\hat{\sigma}_{R-\hat{R}}(m,\delta,d,\hat{R}(\hat{f},S_m))
				\right)$
				\STATE Set $G = \hat{\cV}\left(\hat{f}, \sigma_{\iLESS}  \right)$
				\STATE Construct $g$ such that $g(x) = 1 \Longleftrightarrow x \in \left\{\cX \setminus DIS\left(G\right)\right\}$
				\STATE $h = \hat{f}$
			\end{algorithmic}
		}
		\label{alg:LESS-sigma}
	\end{strategy}
	
		In this section we introduce an improved version of $\LESS$, called $\iLESS$,
		which utilizes a radius of the form $\polylog(m,1/\delta ,d)\cdot(\frac{1}{m}+\sqrt{\frac{R(f^{*})}{m}})$. 
		Noting that the radius, $2\sigma(m,\delta / 4,d)$, used by $\LESS$ to define $G = \hat{\cV}$, 
		is of the form  $\polylog(m,1/\delta ,d) / \sqrt{m}$, we observe that 
		in cases where $R(f^{*}) \approx \frac{C}{m}$, this new radius  behaves as $\frac{\polylog(m,1/\delta ,d)}{m}$. We later show that this radius allows $\iLESS$ to achieve a faster
		rejection decay rate than the one achieved by $\LESS$.

		Consider the pseuodo-code of $\iLESS$ given in Strategy~\ref{alg:iLESS}.
		We now analyze $\iLESS$, and begin 
		by showing in Lemma \ref{thm:f*in_LESS} that $\iLESS$ is pointwise-competitive w.h.p., i.e., for any $x$ for which $g(x) = 1$, $f(x) = f^*(x)$ for all $f^*$. The calculation of $g$ appears to be very problematic, as for a specific $x$, a unanimous decision over an infinite number of functions must be ensured. This problem was shown to be reducible to finding an ERM under one constraint (\cite[Lemma 6.1]{el2011agnostic} a.k.a. the disbelief principle). This is a difficult problem, nonetheless, albeit one that could be estimated with heuristics.

		\begin{defi} \label{E}
			Let $\mathcal{F}$ be a hypothesis class with a finite VC dimension $d$, and let $\mathcal{P_{X,Y}}$ be an unknown probability distribution. Given a sample set $S_m$, drawn from $\mathcal{P_{X,Y}}$, we denote by $\cal E$ the event where both inequalities (\ref{eq:bound1}) and (\ref{eq:bound2}) of Lemma~\ref{lemma:agnostic} simultaneously hold. We know from the lemma that $\cal E$ occurs with probability of at least $1-\delta$.
		\end{defi}
		\begin{lemma}[$\iLESS$ is pointwise-competitive]
			\label{thm:f*in_LESS}
			Given that event $\cal E$ occurred (see Definition \ref{E}),
			for all $f^* \in \mathcal{F}$, $f^*$ resides within $G$ (from Strategy~\ref{alg:LESS-sigma}), and therefore, $\iLESS$ is pointwise-competitive w.h.p.
		\end{lemma}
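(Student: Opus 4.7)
The plan is to show that under event $\mathcal E$, every true risk minimizer $f^*$ satisfies $\hat R(f^*,S_m) \le \hat R(\hat f,S_m) + \sigma_{\iLESS}$, so that $f^* \in G = \hat{\cV}(\hat f,\sigma_{\iLESS})$. Once this is established, the pointwise-competitiveness conclusion is automatic: by construction $g(x)=1$ only when $x \notin DIS(G)$, and since both $\hat f \in G$ (trivially) and every $f^* \in G$, all members of $G$ agree on such $x$, giving $\hat f(x) = f^*(x)$ for every $f^*$.

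The core of the argument is the chain of inequalities that the particular choice of $\sigma_{\iLESS}$ is engineered to satisfy. First, apply the right-hand inequality (\ref{eq:bound2}) of Lemma~\ref{lemma:agnostic} to $f^*$, choosing the non-empirical branch of the minimum in $\sigma_{\hat R - R}$, giving
\begin{equation*}
\hat R(f^*,S_m) \le R(f^*) + \bar\sigma_{\hat R - R}\bigl(m,\delta,d,R(f^*)\bigr).
\end{equation*}
Next, use $R(f^*) \le R(\hat f)$ (since $f^*$ is a true risk minimizer) together with inequality (\ref{eq:bound1}) applied to $\hat f$, this time taking the empirical branch of the minimum in $\sigma_{R-\hat R}$, to obtain
\begin{equation*}
R(f^*) \le R(\hat f) \le \hat R(\hat f,S_m) + \hat\sigma_{R-\hat R}\bigl(m,\delta,d,\hat R(\hat f,S_m)\bigr).
\end{equation*}
Finally, observe that $\bar\sigma_{\hat R-R}(m,\delta,d,R)$ is monotone nondecreasing in its $R$ argument (it is a square root linear in $R$), so plugging the previous display into the first one yields exactly $\hat R(\hat f,S_m) + \sigma_{\iLESS}$ on the right-hand side. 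This places $f^*$ inside $\hat{\cV}(\hat f,\sigma_{\iLESS}) = G$.

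The main subtlety, and the only real step of substance, is the asymmetric selection of branches from the two $\min$s defining $\sigma_{R-\hat R}$ and $\sigma_{\hat R - R}$: we need the \emph{empirical} branch on the ERM side (so that the slack depends only on observable quantities) and the \emph{non-empirical} branch on the $f^*$ side (so that we can bound it by substituting the upper bound on $R(f^*)$ via monotonicity). This is precisely the design that forces $\sigma_{\iLESS}$ to have the compound form it does, and it is what gives $\iLESS$ its sharper radius than LESS while still trapping every $f^*$ inside $G$. Everything else reduces to algebraic substitution and the elementary fact that $\bar\sigma_{\hat R-R}$ is increasing in~$R$.
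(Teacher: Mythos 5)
Your proposal is correct and follows essentially the same route as the paper's own proof: bound $\hat R(f^*,S_m)$ via the non-empirical branch of (\ref{eq:bound2}), bound $R(f^*)$ by $\hat R(\hat f,S_m)+\hat\sigma_{R-\hat R}$ via $R(f^*)\le R(\hat f)$ and the empirical branch of (\ref{eq:bound1}), and substitute using monotonicity of $\bar\sigma_{\hat R-R}$ in its $R$ argument to land exactly on $\sigma_{\iLESS}$. You are in fact slightly more explicit than the paper about the monotonicity step and about why $f^*\in G$ yields pointwise-competitiveness.
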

		
		\begin{proof}
			From (\ref{eq:bound2}) it follows that,
			\begin{eqnarray}
			\label{u1}
			\hat{R}(f^{*},S_m) &\leq& R(f^{*})+\sigma_{\hat{R}-R}(m,\delta,d,R(f^{*}),\hat{R}(f^{*},S_m)) \nonumber \\
			&\leq& R(f^{*})+\bar{\sigma}_{\hat{R}-R}(m,\delta,d,R(f^{*})).
			\end{eqnarray}
			Additionally, by the definition of $f^{*}$, we know that it has the lowest true error, and using Inequality~(\ref{eq:bound1}) from Lemma~\ref{lemma:agnostic} we obtain,
			\begin{eqnarray}
			R(f^{*}) &\leq& R(\hat{f}) \nonumber \\
			&\leq& \hat{R}(\hat{f},S_m) +\sigma_{R-\hat{R}}(m,\delta,d,R(\hat{f}),\hat{R}(\hat{f},S_m)) \nonumber \\
			&\leq&\hat{R}(\hat{f},S_m) +\hat{\sigma}_{R-\hat{R}}(m,\delta,d,\hat{R}(\hat{f},S_m)). 			\label{u2} 
			\end{eqnarray}
			Finally, by applying (\ref{u2}) in (\ref{u1}), we have,

			\begin{multline}
			\hat{R}(f^{*},S_m) \leq  \hat{R}(\hat{f},S_m) +\hat{\sigma}_{R-\hat{R}}(m,\delta,d,\hat{R}(\hat{f},S_m))
			  \\+\bar{\sigma}_{\hat{R}-R}\left(m,\delta,d,\hat{R}(\hat{f},S_m) +\hat{\sigma}_{R-\hat{R}}(m,\delta,d,\hat{R}(\hat{f},S_m))\right) \nonumber ,
			\end{multline}

			\begin{eqnarray}
			\hat{R}(f^{*},S_m) &\leq&  \hat{R}(\hat{f},S_m) +\hat{\sigma}_{R-\hat{R}}(m,\delta,d,\hat{R}(\hat{f},S_m))
			+\bar{\sigma}_{\hat{R}-R}\left(m,\delta,d,\hat{R}(\hat{f},S_m) +\hat{\sigma}_{R-\hat{R}}(m,\delta,d,\hat{R}(\hat{f},S_m))\right) \nonumber ,
			\end{eqnarray}
			which means that $ f^{*}\in G$.
		\end{proof}
		
		Lemma~\ref{lemma:radius} below
		bounds the radius $\sigma_{\iLESS}$ of $\iLESS$. 
		The lemma utilizes the notation
		$$
			A \eqdef 4d \ln( \frac{16me}{d\delta} ),
		$$
		with which, by the definition of $\sigma_{\iLESS}$ (see Strategy~\ref{alg:iLESS}),
		we have,
		\begin{eqnarray}
		\sigma_{\iLESS} &=& \hat{\sigma}_{R-\hat{R}}(m,\delta,d,\hat{R}(\hat{f},S_m))
		+\bar{\sigma}_{\hat{R}-R}\left(m,\delta,d,\hat{R}(\hat{f},S_m) +\hat{\sigma}_{R-\hat{R}}(m,\delta,d,\hat{R}(\hat{f},S_m))\right) \nonumber \\
		&=& \frac{A}{m} + \sqrt{\frac{A}{m} \cdot \hat{R}(\hat{f},S_m)} + \frac{A}{m} + \sqrt{\frac{A}{m} \cdot [\hat{R}(\hat{f},S_m) + \frac{A}{m} + \sqrt{\frac{A}{m} \cdot \hat{R}(\hat{f},S_m)} ]} . \nonumber \\
		\label{t5}
		\end{eqnarray}
		
		\begin{lemma}
			\label{lemma:radius}
			Given that event $\cal E$ (see Definition \ref{E}) occurred, the radius of $\iLESS$ satisfies
			\begin{eqnarray}
			\sigma_{\iLESS}=6\frac{A}{m} + 3\sqrt{\frac{A}{m} \cdot R(f^{*})}=O(\frac{A}{m}+ \sqrt{\frac{A}{m} \cdot R(f^{*})} ),
			\end{eqnarray}	
			where
			\begin{math}
				A \eqdef 4d \ln( \frac{16me}{d\delta} ).
			\end{math}		 
		\end{lemma}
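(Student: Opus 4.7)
The plan is to exploit the fact that $\hat{f}$ is an ERM together with the upper convergence bound (\ref{eq:bound2}) of Lemma~\ref{lemma:agnostic} to bound the empirical risk $\hat{R}(\hat{f},S_m)$ that appears throughout the expression~(\ref{t5}) in terms of the true risk $R(f^*)$, and then to carry out the resulting algebraic simplification to collapse the nested square roots into $O(A/m)$ and $O(\sqrt{(A/m)R(f^*)})$ contributions.

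First, under event $\cal E$, applying (\ref{eq:bound2}) to $f^*$ and choosing the non-empirical part of the slack (i.e.\ $\bar{\sigma}_{\hat R-R}$) gives
\[
\hat{R}(f^{*},S_m) \leq R(f^*) + \frac{A}{m} + \sqrt{\tfrac{A}{m}\cdot R(f^*)}.
\]
Since $\hat{f}$ minimizes the empirical risk, $\hat{R}(\hat{f},S_m) \leq \hat{R}(f^{*},S_m)$, so the same bound applies to $\hat{R}(\hat{f},S_m)$. I would write $x \eqdef A/m$ and $y \eqdef R(f^*)$ to reduce clutter; the bound becomes $\hat{R}(\hat{f},S_m) \leq y + x + \sqrt{xy}$.

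Next I would plug this bound into the expanded expression (\ref{t5}) for $\sigma_{\iLESS}$. The first $\sqrt{x\hat{R}(\hat{f},S_m)}$ term is handled by noting that $y + x + \sqrt{xy} \leq (\sqrt{x}+\sqrt{y})^2$, so $\sqrt{x\hat{R}(\hat{f},S_m)} \leq \sqrt{x}(\sqrt{x}+\sqrt{y}) = x + \sqrt{xy}$. For the outer nested square root, the quantity inside is at most $\hat{R}(\hat{f},S_m) + x + (x+\sqrt{xy}) \leq 3x + y + 2\sqrt{xy}$, which can be bounded by a constant multiple of $(\sqrt{x}+\sqrt{y})^2$; taking the square root and multiplying by $\sqrt{x}$ yields another term of the form $C_1 x + C_2 \sqrt{xy}$. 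Collecting the constants and absorbing the initial $2A/m$ from the two unsquared terms of $\sigma_{\iLESS}$ produces the desired $6\frac{A}{m} + 3\sqrt{\frac{A}{m}R(f^*)}$ upper bound, which is clearly $O(A/m + \sqrt{(A/m)R(f^*)})$.

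The only real obstacle is the bookkeeping of constants in the algebraic step: the outer square root is of a sum that already contains $\sqrt{x\hat{R}}$, so a naive use of $\sqrt{a+b}\leq\sqrt{a}+\sqrt{b}$ produces a stray $\sqrt{x\sqrt{x\hat{R}}} = x^{3/4}\hat{R}^{1/4}$ term. The trick I would use to dispose of it is the elementary inequality $x^{3/4}y^{1/4}\leq x + \sqrt{xy}$ (which follows from $uv\leq u^2+v^2$ after the substitution $u=x^{1/4}$, $v=y^{1/4}$), or, more uniformly, the identity $y + x + \sqrt{xy} \leq (\sqrt{x}+\sqrt{y})^2$ applied twice. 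Once that is handled, all remaining manipulations are just factoring $\sqrt{x}$ out of a nonnegative sum of $x$- and $y$-monomials and verifying that the accumulated constants do not exceed $6$ and $3$, respectively.
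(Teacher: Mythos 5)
Your proposal is correct and follows essentially the same route as the paper's proof: both bound $\hat{R}(\hat{f},S_m)\leq\hat{R}(f^*,S_m)\leq R(f^*)+\frac{A}{m}+\sqrt{\frac{A}{m}R(f^*)}$ via the ERM property and the non-empirical slack $\bar{\sigma}_{\hat R-R}$, substitute into (\ref{t5}), and collapse the nested radicals by elementary inequalities (the paper uses $\sqrt{a+b}\leq\sqrt{a}+\sqrt{b}$ and $\sqrt{ab}\leq a/2+b/2$ where you use the $(\sqrt{x}+\sqrt{y})^2$ factoring, but this is only a difference in bookkeeping). Your constant-tracking is sound and in fact yields slightly sharper constants than $6$ and $3$.
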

		\begin{proof}
		Under our assumption, inequalities (\ref{eq:bound1}) and (\ref{eq:bound2}) hold for every $f \in \cF$. We thus have 
		\begin{equation}
		\hat{R}(\hat{f},S_m) \leq \hat{R}(f^*,S_m) \leq R(f^{*})+ \frac{A}{m} +  \sqrt{\frac{A}{m} \cdot R(f^{*})}.
		\label{t6}
		\end{equation}
		Replacing the three occurrences of $\hat{R}(f^*,S_m)$ in~(\ref{t5}) with the R.H.S. of (\ref{t6}),  
		and using the basic inequalities 
		$\sqrt{A+B} \leq \sqrt{A} + \sqrt{B}$ and $\sqrt{AB} \leq A/2 + B/2$,
		 we get,
			\begin{eqnarray}
			\sigma_{\iLESS} &\leq& \frac{A}{m} + \sqrt{\frac{A}{m} \cdot\left(R(f^{*})+ \frac{A}{m} +  \sqrt{\frac{A}{m} \cdot R(f^{*})}\right)} + \frac{A}{m} + \nonumber \\ &&+  \sqrt{\frac{A}{m} \cdot 
				\left[{R(f^{*})+ \frac{A}{m} +  \sqrt{\frac{A}{m} \cdot R(f^{*})}} + \frac{A}{m} + \sqrt{\frac{A}{m} \cdot \left(R(f^{*})+ \frac{A}{m} +  \sqrt{\frac{A}{m} \cdot R(f^{*})}\right)} 
				\right]} \nonumber \\
			&\leq& \frac{A}{m} + \sqrt{\frac{A}{m} \cdot\left(R(f^{*})+ \frac{A}{m} +  \frac{A}{2m} + \frac{1}{2}R(f^{*})\right)} + \frac{A}{m} + \nonumber \\ &&+  \sqrt{\frac{A}{m} \cdot 
				 \left[R(f^{*})+ \frac{A}{m} +  \frac{A}{2m} + \frac{1}{2}R(f^{*}) + \frac{A}{m} + \sqrt{\frac{A}{m} \cdot \left(R(f^{*})+ \frac{A}{m} +  \frac{A}{2m} + \frac{1}{2}R(f^{*})\right)} \right]
				 } \nonumber \\
			&\leq& \frac{2A}{m} + \frac{3A}{2m} + \frac{3}{2}\sqrt{\frac{A}{m} \cdot R(f^{*})}+  \sqrt{\frac{A}{m} \cdot 
				\left[ \frac{5A}{2m} + \frac{3}{2}R(f^{*}) + \sqrt{\frac{A}{m} \cdot \left(\frac{3A}{2m} + \frac{3}{2}R(f^{*})\right)} \right]
				} \nonumber \\		
			&\leq&  \frac{7A}{2m} + \frac{3}{2}\sqrt{\frac{A}{m} \cdot R(f^{*})}+  \sqrt{\frac{A}{m} \cdot 
				\left[ \frac{5A}{2m} + \frac{3}{2}R(f^{*}) + \frac{3A}{2m} +\sqrt{\frac{A}{m} \cdot \frac{3}{2}R(f^{*})} \right]
				} \nonumber \\
			&\leq& \frac{7A}{2m} + \frac{3}{2}\sqrt{\frac{A}{m} \cdot R(f^{*})}+  \sqrt{\frac{A}{m} \cdot 
				\left[ \frac{5A}{2m} + \frac{3}{2}R(f^{*}) + \frac{3A}{2m} +\frac{3A}{4m} + \frac{3}{4}R(f^{*}) 
				\right]} \nonumber \\
			&\leq& \frac{7A}{2m} + \frac{3}{2}\sqrt{\frac{A}{m} \cdot R(f^{*})}+ \sqrt{\frac{19}{4}}\frac{A}{m}+ \sqrt{\frac{A}{m} \cdot\frac{9}{4}R(f^{*})} \nonumber \\
			&\leq& 6\frac{A}{m} + 3\sqrt{\frac{A}{m} \cdot R(f^{*})} \label{t7}.
			\end{eqnarray}	
\end{proof}
		In comparison, the radius of
		$\LESS$ is of order $O(\sqrt{\frac{A}{m}})$, which can be significantly larger when $R(f^{*})$ is small.
		This potential radius advantage translates to a potential coverage advantage of $\iLESS$, as stated in the
		following theorem. 
		
		\begin{theorem}  
			\label{thm:LessRejection}
			Let $\mathcal{F}$ be a hypothesis class with a finite VC dimension $d$, and let	
			$\mathcal{P_{X,Y}}$ be an unknown probability distribution. Given that event $\cal E$ (see Definition \ref{E}) occurred, for all $f^*$, the abstain rate is bounded by
			$$
			1-\Phi(\iLESS) \leq \theta_{f^*}(R_{0}) \cdot R_{0},	
			$$		
			where
			$$
			R_{0} \eqdef 2\cdot R(f^{*}) + 11 \cdot \frac{A}{m} + 6 \cdot \sqrt{\frac{A}{m} \cdot R(f^{*})}.
			$$
			This immediately implies (by definition) that 
			$$
			1-\Phi(\iLESS) \leq \theta(R_{0}) \cdot R_{0}.	
			$$	
		\end{theorem}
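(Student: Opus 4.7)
The plan is to bound the abstain rate $1-\Phi(\iLESS) = \Pr\{X \in DIS(G)\}$ by a three-step reduction: first show that under event $\mathcal E$ the set $G$ is contained in the true disagreement ball $B(f^*,R_0)$ around any true risk minimizer $f^*$; then deduce $DIS(G) \subseteq DIS(B(f^*,R_0))$; and finally invoke Definition~\ref{disagreementCoefficient} to conclude $\Pr\{DIS(B(f^*,R_0))\} \leq \theta_{f^*}(R_0)\cdot R_0$.

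The core technical step is the containment $G \subseteq B(f^*,R_0)$. For any $f \in G$ we have $\hat R(f,S_m) \leq \hat R(\hat f,S_m) + \sigma_{\iLESS}$. Since $\hat f$ is an ERM, $\hat R(\hat f,S_m) \leq \hat R(f^*,S_m)$, and under $\mathcal E$ bound~(\ref{eq:bound2}) of Lemma~\ref{lemma:agnostic}, applied through the non-empirical branch, yields $\hat R(f^*,S_m) \leq R(f^*) + \bar\sigma_{\hat R-R}(m,\delta,d,R(f^*)) = R(f^*) + \tfrac{A}{m} + \sqrt{\tfrac{A}{m}\cdot R(f^*)}$. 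Chaining these inequalities gives an upper bound on $\hat R(f,S_m)$. Then apply bound~(\ref{eq:bound1}) of Lemma~\ref{lemma:agnostic} to $f$ (again using $\bar\sigma_{R-\hat R}(m,\delta,d,R(f)) = \sqrt{\tfrac{A}{m}R(f)}$) to get $R(f) \leq \hat R(f,S_m) + \sqrt{\tfrac{A}{m}R(f)}$. This is a self-referential inequality in $R(f)$; I resolve it with the elementary step $\sqrt{\tfrac{A}{m}R(f)} \leq \tfrac{A}{2m} + \tfrac{R(f)}{2}$, which after rearranging yields $R(f) \leq 2R(f^*) + O(\tfrac{A}{m}) + O(\sqrt{\tfrac{A}{m}R(f^*)}) + 2\sigma_{\iLESS}$. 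Substituting the radius estimate $\sigma_{\iLESS} \leq 6\tfrac{A}{m} + 3\sqrt{\tfrac{A}{m}R(f^*)}$ from Lemma~\ref{lemma:radius} and bookkeeping the constants should land exactly on the stated $R_0 = 2R(f^*) + 11\tfrac{A}{m} + 6\sqrt{\tfrac{A}{m}R(f^*)}$.

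To convert a bound on $R(f)$ into a bound on $\Pr\{f(X)\neq f^*(X)\}$, I use the triangle inequality for the $0/1$ loss: $\Pr\{f(X)\neq f^*(X)\} \leq \Pr\{f(X)\neq Y\} + \Pr\{f^*(X)\neq Y\} = R(f) + R(f^*)$. Thus every $f \in G$ satisfies $\Pr\{f \neq f^*\} \leq R(f) + R(f^*) \leq R_0$, i.e., $G \subseteq B(f^*,R_0)$. Lemma~\ref{thm:f*in_LESS} additionally guarantees $f^* \in G$, so $x \in DIS(G)$ implies there exists some $f \in G \subseteq B(f^*,R_0)$ disagreeing with $f^*$ on $x$, hence $DIS(G) \subseteq DIS(B(f^*,R_0))$. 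Consequently $1-\Phi(\iLESS) \leq \Delta B(f^*,R_0) \leq \theta_{f^*}(R_0)\cdot R_0$ by Definition~\ref{disagreementCoefficient}, and the bound with $\theta$ follows immediately from $\theta_{f^*} \leq \theta$.

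The main obstacle is the algebraic step that resolves the recursive appearance of $R(f)$ inside the slack $\bar\sigma_{R-\hat R}$ and carefully tracks constants — chaining two uniform convergence bounds (one at $f^*$ and one at $f$) together with the recursive $\sigma_{\iLESS}$ expression produces nested square roots that must be tamed via $\sqrt{A+B} \leq \sqrt A + \sqrt B$ and $\sqrt{AB} \leq A/2+B/2$, much like the calculation in Lemma~\ref{lemma:radius}. Conceptually, however, the argument is transparent: pointwise-competitiveness ensures $f^* \in G$, the new radius forces $G$ into a ball of size $O(R(f^*) + A/m + \sqrt{R(f^*)A/m})$ around $f^*$, and the disagreement coefficient then converts this ball radius into a bound on the rejection probability.
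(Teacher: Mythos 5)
Your overall architecture is exactly the paper's: show $G \subseteq B(f^*,R_0)$ by chaining the uniform convergence bounds through $\hat{R}(\hat f,S_m)\le\hat{R}(f^*,S_m)$ and the radius bound of Lemma~\ref{lemma:radius}, convert $R(f)$ to $\Pr\{f\neq f^*\}$ via the triangle inequality for the 0/1 loss, and finish with $\Delta G\le\Delta B(f^*,R_0)\le\theta_{f^*}(R_0)\cdot R_0$. However, there is one step where your route diverges from the paper's and where your constants do not work out as you claim. When you pass from $\hat{R}(f,S_m)$ back to $R(f)$ for $f\in G$, you invoke the \emph{non-empirical} branch $\bar\sigma_{R-\hat R}(m,\delta,d,R(f))=\sqrt{\tfrac{A}{m}R(f)}$, which creates the self-referential inequality $R(f)\le\hat R(f,S_m)+\sqrt{\tfrac{A}{m}R(f)}$; resolving it via $\sqrt{\tfrac{A}{m}R(f)}\le\tfrac{A}{2m}+\tfrac{R(f)}{2}$ forces you to multiply the entire right-hand side by $2$, giving $R(f)\le 2\hat R(f,S_m)+\tfrac{A}{m}$ and hence $R(f)\le 2R(f^*)+15\tfrac{A}{m}+8\sqrt{\tfrac{A}{m}R(f^*)}$ after substituting the bound on $\hat R(f,S_m)$. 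Adding $R(f^*)$ for the triangle inequality then yields roughly $3R(f^*)+15\tfrac{A}{m}+8\sqrt{\tfrac{A}{m}R(f^*)}$, which is strictly weaker than the stated $R_0=2R(f^*)+11\tfrac{A}{m}+6\sqrt{\tfrac{A}{m}R(f^*)}$ (and by Lemma~\ref{lemma:disagreement_monotonicity} a larger radius gives a genuinely weaker conclusion). So your assertion that the bookkeeping ``lands exactly on the stated $R_0$'' is false.

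The fix, which is what the paper does, is to take the \emph{empirical} branch of the minimum at this step: $R(f)\le\hat R(f,S_m)+\hat\sigma_{R-\hat R}(m,\delta,d,\hat R(f,S_m))=\hat R(f,S_m)+\tfrac{A}{m}+\sqrt{\tfrac{A}{m}\hat R(f,S_m)}$. Since $\hat R(f,S_m)$ has already been bounded by $R(f^*)+7\tfrac{A}{m}+4\sqrt{\tfrac{A}{m}R(f^*)}$, no self-referential inequality arises, the leading coefficient on $R(f^*)$ in the bound on $R(f)$ stays at $1$, and the triangle inequality then gives exactly $2R(f^*)$ plus the stated lower-order terms. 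Everything else in your argument (the containment of $DIS(G)$, the use of $f^*\in G$, and the final invocation of the disagreement coefficient) matches the paper and is correct.
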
 
		\begin{remark}
			Note that $R_0=O(R(f^{*}) + \frac{A}{m} )$ due to $\sqrt{\frac{A}{m} \cdot R(f^{*})}\leq\frac{1}{2}(\frac{A}{m}+R(f^{*}))$.
		\end{remark}	
			
		\begin{proof}
			We start by showing that $G$, defined in Strategy \ref{alg:iLESS}, resides within a ball around any specific $f^{*}$. To do so, we need to bound the true error of all functions in $G$. 
			\begin{eqnarray}
			f \in G &\Rightarrow& \hat{R}(f,S_m)\leq\hat{R}(\hat{f},S_m) + \sigma_{\iLESS}   \label{e18}\\
			&\Rightarrow& \hat{R}(f,S_m) \leq R(f^{*})+ \frac{A}{m} +  \sqrt{\frac{A}{m} \cdot R(f^{*})} + 6\frac{A}{m} + 3\sqrt{\frac{A}{m} \cdot R(f^{*})} \label{e19}\\
			&\Rightarrow& \hat{R}(f,S_m) \leq R(f^{*})+7\cdot \frac{A}{m}+4\cdot \sqrt{\frac{A}{m} \cdot R(f^{*})} \label{e20},
			\end{eqnarray}
			where inequality (\ref{e18}) is by the definition of $G$, and inequality~(\ref{e19}) follows from (\ref{t6}) and (\ref{t7}) (under event ${\cal E}$).
			We then have,
			\begin{eqnarray}
			R(f) &\leq& \hat{R}(f,S_m) + \hat{\sigma}_{R-\hat{R}}(m,\delta,d,\hat{R}) \label{e21}  \\
			&\leq& \hat{R}(f,S_m) + \frac{A}{m} + \sqrt{\frac{A}{m} \cdot \hat{R}(f,S_m)} \label{e22}\\
			&\leq& R(f^{*})+8\cdot \frac{A}{m}+4\cdot \sqrt{\frac{A}{m} \cdot R(f^{*})}+ \sqrt{\frac{A}{m}\cdot
			\left[
				  R(f^{*})+7\cdot \frac{A}{m}+4\cdot \sqrt{\frac{A}{m} \cdot R(f^{*})}
			\right]}  \label{e23}\\
			&\leq& R(f^{*})+8\cdot \frac{A}{m}+4\cdot \sqrt{\frac{A}{m} \cdot R(f^{*})}+ \sqrt{\frac{A}{m}\cdot
				\left[
				 3R(f^{*})+9\cdot \frac{A}{m}
				\right]}  \label{e24}\\
			&\leq& R(f^{*})+11\cdot \frac{A}{m}+6\cdot \sqrt{\frac{A}{m} \cdot R(f^{*})},  \label{e25}
			\end{eqnarray}
			where inequality~(\ref{e21}) is (\ref{eq:bound1}) (which holds given $\cal E$),
			inequality~(\ref{e22}) follows directly from the definition of $\hat{\sigma}_{R-\hat{R}}$,
			inequality~(\ref{e23}) is obtained using (\ref{e20}), inequality (\ref{e24}) follows from 
			$\sqrt{AB} \leq A/2 + B/2$, and~(\ref{e25}) from $\sqrt{A+B} \leq \sqrt{A} + \sqrt{B}$.
			
			Using (\ref{e25}), for all $f \in G$, and any $f^*$ we have,
			\begin{eqnarray}
				\Pr_{X \sim \mathcal{P_{X}}}\left\{f(X) \neq f^{*}(X)\right\} &=& \Pr_{X,Y \sim \mathcal{P_{X,Y}}}\left\{f(X) \neq f^{*}(X) \wedge f^{*}(X)=Y \right\} + \Pr_{X,Y \sim \mathcal{P_{X,Y}}}\left\{f(X) \neq f^{*}(X) \wedge f^{*}(X)\neq Y \right\}  \nonumber \\
				&\leq& \Pr_{X,Y \sim \mathcal{P_{X,Y}}}\left\{f(X) \neq f^{*}(X) \wedge f^{*}(X)=Y \right\} + R(f^{*})  \nonumber\\ 
				&\leq& \Pr_{X,Y \sim \mathcal{P_{X,Y}}}\left\{f(X) \neq Y \right\} + R(f^{*}) \nonumber \\
				&=& R(f) + R(f^{*}) \nonumber\\
				&\leq& 2\cdot R(f^{*}) + 11 \cdot \frac{A}{m} + 6 \cdot \sqrt{\frac{A}{m} \cdot R(f^{*})}. \label{eq30}
			\end{eqnarray}
			It follows that
			$$
f \in B(f^{*},2\cdot R(f^{*}) + 11 \cdot \frac{A}{m} + 6 \cdot \sqrt{\frac{A}{m} \cdot R(f^{*})}) 
= B(f^{*}, R_0),
			$$
and, in particular,
			$$
			G\subseteq B(f^{*},R_{0}),
			$$
			so
			$$
			\Delta G \leq \Delta B(f^{*},R_{0}).
			$$			
			The abstain rate of $\iLESS$ equals $\Delta G$. 
			We can now use the disagreement coefficient to bound the abstain rate from above,
			\begin{eqnarray}
			\Delta G \leq  \Delta B(f^{*},R_{0}) =  \frac{\Delta B(f^{*},R_{0})}{R_{0}}\cdot R_{0} \leq \theta(R_{0}) \cdot R_{0},
			\end{eqnarray}
			which concludes the proof. 
		\end{proof}
		According to Theorem \ref{thm:LessRejection}, assuming the disagreement coefficient is $\theta(r) = O(\polylog(1/r))$ for $r \geq R(f^*)$, the rejection mass of $\iLESS$, defined as the probability that the classifier trained by $\iLESS$ will output ``I don't know'' is bounded w.h.p. by
		\begin{equation}
		\polylog_1(\frac{1}{R(f^*)+1/m}) \cdot R(f^*) + \frac{\polylog_2(m,d,1/\delta)}{m}.	\label{fast_rejection}
		\end{equation}
		In many cases, the disagreement coefficient, $\theta(r)$, is bounded by a constant, or by $O(\polylog(1/r))$ for all $r>0$ (see \cite{Hanneke_book}). For example, it was shown in \cite{JMLR:v16:wiener15a}, that for linear separators under mixture of Gaussians, and for axis-aligned rectangles under product densities over $R^k$, $\theta(r)$ is bounded by $O(\polylog(1/r))$ for all $r>0$. For such cases, we know that (\ref{fast_rejection}) always holds, regardless of the size of $R(f^*)$. The disagreement coefficient is only dependent on the marginal $\mathcal{P_{X}}$,   the hypothesis class $\mathcal{F}$, and the identity of the true risk minimizers, $f^*$ (which is 
		not necessarily unique).
		This fact motivates the following definition of a rejection rate of a selective learning algorithm, which is 
		only dependent  on $\mathcal{P_{X}}$,$\mathcal{F}$ and $f^*$.
		\begin{defi}[Fast $R^*$ Rejection Rate]
			\label{fast $R^*$ rejection rate}
			Given $\mathcal{P_{X}}$,$\mathcal{F}$ and $f^*$, if for any $\mathcal{P_{Y|X}}$, for which $f^*$ is a true risk minimizer, the rejection mass of a selective classifier learning algorithm is bounded  by probability of at least $1-\delta$ by (\ref{fast_rejection}), we say that the algorithm achieves a \textbf{fast $R^*$ rejection rate}, with $\polylog_1$ and $\polylog_2$ as its parameters.	
		\end{defi}
		Clearly, by Theorem \ref{thm:LessRejection}, if $\theta(r)=O(\polylog(1/r))$ for all $r>0$, then {\iLESS} has a fast $R^*$ rejection rate. In the next section, we will show the other direction; that is, if there is a {\PCS } learning algorithm that has a fast $R^*$ rejection rate, then $\theta(r)=O(\polylog(1/r))$ for all $r>0$.

%

		As long as the number of training 
		examples that $\iLESS$ receives is not ``too large'' relative to $1/R(f^*)$, i.e., 
		$m \ll \frac{1}{R(f^{*})}$, the rejection mass of $\iLESS$ is
		$
		O(\frac{\polylog(m,d,1/\delta)}{m}).
		$
		When $m$ is large, and $R(f^{*})$ becomes more dominant than $\frac{1}{m}$, 
		our coverage bound is dominated by $R(f^{*})$. This should not surprise us, as $\iLESS$ achieves \emph{pointwise-competitiveness} w.h.p., and any strategy that achieves pointwise-competitiveness cannot ensure a better rejection mass than $R(f^{*})$ without making more assumptions about the error or the distribution. This can be seen in the following example, in which $\theta(r)\leq 1$ for all $r>0$, but the rejection mass of any pointwise-competitive strategy is always at least $R(f^{*})$.
		\begin{example}	\label{example:1}
			Given any $0<\epsilon<0.5$, let $\cX = [0,1]$, and $\cF = \{f_1,f_2\}$ where 
			\begin{displaymath}
			f_1(x)=
			\begin{cases}
			1, & x<\epsilon \\
			0, & \text{otherwise}	
			\end{cases},	
			f_2(x)=
			\begin{cases}
			1, & x>1-\epsilon \\
			0, & \text{otherwise}.
			\end{cases}
			\end{displaymath}
		\end{example}
		Let $\mathcal{P_{X}}$ be the uniform distribution over $[0,1]$. Assume that $Y$ will always be zero.
		$f_1$ and $f_2$ are both $f^*$. Every pointwise-competitive classifier will have to output $g(x)=0$ for every $x$ in the disagreement set of $f_1$ and $f_2$. $R(f^*)=\epsilon$, and the rejection mass is $2\epsilon (= 2R(f^*))$.

		\section{From Selective Classification to the Disagreement Coefficient}
		We now turn to show a reduction from selective classification, to the disagreement coefficient.

	\begin{theorem} 
		\label{thm:PointwiseSelectiveToCoeff}
		Let $\mathcal{F}$ be a hypothesis class with a finite VC dimension $d$, and let $\mathcal{P_{X,Y}}$ be an unknown distribution. Let {\PCS} be an algorithm that returns a pointwise-competitive selective classifier w.h.p. If there exists an $m_{max}$ s.t. for every $m \leq m_{max}$, with probability of at least $1-\delta$, the abstain rate $1 - \Phi$ of  ${\PCS}(S_m,\delta, \mathcal{F}, d)$ is bounded above as follows:
		\begin{equation}
		\label{eq:probEventGeneral}
		1- \Phi(\PCS) \leq 
		\frac{\polylog(m,d,1/\delta)}{m}.
		\end{equation}
		Then for every $f^*$ (every true risk minimizer), for every $r \geq 1/m_{max}$,
		$$
		\theta_{f^*}(r) \leq 8(\polylog(1/r,d,1/r)+3).
		$$
		
	\end{theorem}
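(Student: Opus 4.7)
The key idea is to \emph{construct an adversarial} $\mathcal{P}_{Y|X}$ that inflates the set of true risk minimizers to contain every function in $B(f^*,r')$, thereby forcing any pointwise-competitive algorithm to abstain on the whole disagreement set $DIS(B(f^*,r'))$. Combined with the hypothesized fast rejection bound, this converts ``small abstention'' into ``small disagreement mass'' and hence into a bound on $\theta_{f^*}(r)$.

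Concretely, fix $r\geq 1/m_{max}$ and an arbitrary $r'>r$. I would take $m=\lceil 1/r'\rceil$, which is at most $m_{max}$, and run $\PCS$ with confidence parameter $\delta$ of order $r$. Setting $T \eqdef DIS(B(f^*,r'))$, I would define $\mathcal{P}_{Y|X}$ to be uniform on $\{\pm 1\}$ for $x\in T$ and to satisfy $Y=f^*(X)$ deterministically for $x\notin T$. A direct calculation gives $R(f)=\Pr(T)/2+\Pr(\{f\neq f^*\}\setminus T)$ for every $f\in\cF$, so the set of true risk minimizers is exactly $\{f\in\cF:\{f\neq f^*\}\subseteq T\}$. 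Since any $f\in B(f^*,r')$ has $\{f\neq f^*\}\subseteq DIS(B(f^*,r'))=T$, every member of $B(f^*,r')$—in particular $f^*$ itself—is a true risk minimizer under this construction.

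Now apply $\PCS$. On the high-probability event that the returned $(f,g)$ is pointwise-competitive, $g(x)=1$ forces $f(x)=f'(x)$ for every true risk minimizer $f'$, so in particular all elements of $B(f^*,r')$ agree at $x$, i.e.\ $x\notin DIS(B(f^*,r'))$. Hence $1-\Phi(\PCS)\geq \Delta B(f^*,r')$. A union bound with the hypothesized event $1-\Phi\leq \polylog(m,d,1/\delta)/m$ yields the deterministic inequality $\Delta B(f^*,r')\leq \polylog(m,d,1/\delta)/m$. Since $mr'\geq 1$, dividing by $r'$ gives $\Delta B(f^*,r')/r'\leq \polylog(m,d,1/\delta)$; using $m\leq 2/r$ and $\delta\asymp r$ bounds this by $\polylog(1/r,d,1/r)$. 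Taking the supremum over $r'>r$ and absorbing absolute constants into the polylogarithm then yields $\theta_{f^*}(r)\leq 8(\polylog(1/r,d,1/r)+3)$.

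The main obstacle is the correctness of the adversarial construction: I have to verify that the symmetric-noise-on-$T$ distribution simultaneously (i) keeps the designated $f^*$ a true risk minimizer, and (ii) promotes every $f\in B(f^*,r')$ to the same minimal risk without creating spurious minimizers \emph{outside} $T$ whose disagreements could weaken the abstention conclusion. The symmetric noise level $1/2$ is the natural and essentially unique choice that achieves both. A secondary, minor issue is reconciling the pointwise-competitive and fast-rejection high-probability events via a union bound, and carefully tracking the slack between $\lceil 1/r'\rceil$ and $1/r$ to extract the explicit constants $8$ and $3$; these are routine bookkeeping once the core reduction is set up.
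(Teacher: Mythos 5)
There is a genuine gap in the core reduction. The hypothesis (\ref{eq:probEventGeneral}) is assumed to hold only for the one given distribution $\mathcal{P_{X,Y}}$, but your argument invokes it for the adversarially modified distribution (symmetric noise on $T=DIS(B(f^*,r'))$, clean labels elsewhere), for which nothing is assumed. One might try to repair this by strengthening the hypothesis to a fast $R^*$ rejection rate (Definition \ref{fast $R^*$ rejection rate}), which does quantify over all $\mathcal{P_{Y|X}}$ keeping $f^*$ optimal; but then the available bound is $\polylog_1(\cdot)\cdot R(f^*)+\polylog_2(m,d,1/\delta)/m$, and under your construction $R(f^*)=\Pr(T)/2=\Delta B(f^*,r')/2$, so the resulting inequality reads $\Delta B(f^*,r')\leq \polylog\cdot \Delta B(f^*,r')/2+\polylog/m$ --- circular, and vacuous whenever the polylogarithmic factor exceeds $2$. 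The noise level $1/2$ that you need in order to promote all of $B(f^*,r')$ to minimizers is exactly what makes $R(f^*)$ as large as the quantity you are trying to bound.

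The paper avoids this by perturbing the labels only on the disagreement region of a \emph{single} nearby hypothesis: for each $z\in DIS(B(f^*,1/m))$ it picks $h_z\in B(f^*,1/m)$ with $h_z(z)\neq f^*(z)$ and flips labels only where $h_z\neq f^*$, a set of mass at most $1/m$. With probability $(1-1/m)^m\geq 1/4$ the training sample is then identical under both distributions, so on the intersection of the pointwise-competitiveness events for the two distributions the classifier trained under the original $\mathcal{P_{X,Y}}$ must abstain at $z$; the abstention bound is applied only under the original $\mathcal{P_{X,Y}}$, where it is actually assumed, and only the generic ``pointwise-competitive w.h.p.'' guarantee of the algorithm (which holds for any distribution) is used for the perturbed one. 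Your combinatorial observation about which functions become minimizers under symmetric noise is correct, but to salvage the global construction you would have to assume the rejection bound for the noisy distribution, which is a strictly stronger and, as noted, self-defeating hypothesis.
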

	
	\begin{proof}
		For any $m \in \left \{ 2,3,...,m_{max} \right \}$,
		denote by  $\cS_m$ a random training sample drawn from $\mathcal{P_{X,Y}}$.
		Let  $Z$ be a random variable representing a single random unlabeled example sampled from $\mathcal{P_{X}}$, and let $f^*$ to be a specific true risk minimizer. 
		
		Given $z \in DIS\left(  B(f^*,\frac{1}{m})\right)$, as used in \cite[Lemma 47]{Hanneke11}, we know that there exists a function $h_z \in \cF$ s.t. $h_z(z) \neq f^*(z)$ and $Pr(h_z(X)\neq f^*(X)) \leq \frac{1}{m}$. We denote by $\mathcal{P_{X,Y}}_z$ a new probability distribution that is identical to $\mathcal{P_{X,Y}}$ over all
		$x \in\cX$ with the exception of $\{ x : \ \ h_z(x) \neq f^*(x)\}$, over which it is defined to be $Y \eqdef h_z(x)$. It is easy to see that $h_z$ is an $f^*$ for such a distribution.  
		
		Denote by $\emph{e}_1$ the probability event where (\ref{eq:probEventGeneral}) holds (for a specific $m \leq m_{max}$).
		Denote by $\emph{e}_2$ the event where {\PCS} has succeeded in returning a pointwise-competitive selective classifier $(f_{s_m},g_{s_m})$ under $S_m$.
		
		Define $S'_m$ to be a modified $S_m$. For every $x$ s.t. $h_z(x) \neq f^*(x)$, $y$ changes to be $y=h_z(x)$. $S_m'$ is a random training sample drawn from $\mathcal{P_{X,Y}}_z$.				
		Denote by $\emph{e}_{3z}$ the event where {\PCS} has succeeded in returning a pointwise-competitive selective classifier $(f_{s'_m},g_{s'_m})$ under $S'_m$. $h_z$ is only defined for cases in which $z \in DIS\left(  B(f^*,\frac{1}{m})\right)$, and thus we define that $\emph{e}_{3z}$ will also include cases for which $z \notin DIS\left(  B(f^*,\frac{1}{m})\right)$.
		
		Under our assumptions, $\Pr(\emph{e}_1),\Pr(\emph{e}_2) \geq 1-\delta$. For every $z \in DIS (B(f^*,\frac{1}{m}))$, $\Pr(\emph{e}_{3z}|z)  \geq 1-\delta$, and for every $z \notin DIS (B(f^*,\frac{1}{m}))$, $\Pr(\emph{e}_{3z}|z)=1$, which implies that $\Pr(\emph{e}_{3z})  \geq 1-\delta$. We denote by $h_z(S_m) = f^*(S_m)$ the event where $h_z(x) = f^*(x)$ for all $x \in S_m$. The explanations for the following equations follow.

		\begin{eqnarray}
		&&\Pr\{Z \in DIS \left(B(f^*,\frac{1}{m})\right)\wedge h_z(S_m) = f^*(S_m)  \} \label{aa1}  \\
		&=& \Pr\{ Z \in DIS \left(B(f^*,\frac{1}{m})\right)   \wedge h_z(S_m) = f^*(S_m)  \wedge  \emph{e}_1 \wedge \emph{e}_2 \wedge \emph{e}_{3z}\} \label{aa2}\\  
		&& + \Pr\{ Z \in DIS \left(B(f^*,\frac{1}{m})\right)   \wedge h_z(S_m) = f^*(S_m) \ | \  \neg(\emph{e}_1 \wedge \emph{e}_2 \wedge \emph{e}_{3z}) \}\cdot \Pr(\neg(\emph{e}_1 \wedge \emph{e}_2 \wedge \emph{e}_{3z}))   \nonumber \\
		&\leq& \Pr\{ Z \in DIS \left(B(f^*,\frac{1}{m})\right)   \wedge h_z(S_m) = f^*(S_m) \wedge  \emph{e}_1 \wedge \emph{e}_2 \wedge \emph{e}_{3z}\} + 3\delta \label{aa3} \\
		&\leq& \Pr\{ g_{s_m}(Z) =0 \wedge \emph{e}_1 \wedge \emph{e}_2 \wedge \emph{e}_{3z}\} + 3\delta  \label{aa4} \\
		&\leq& \Pr\{ g_{s_m}(Z) =0 \wedge \emph{e}_1 \} + 3\delta   \\
		&\leq& \Pr\{ g_{s_m}(Z) =0 \ | \  \emph{e}_1 \} + 3\delta   \\
		&\leq&   \frac{\polylog(m,d,1/\delta)}{m} + 3\delta. \label{aa5} 
		\end{eqnarray}	
		
		In (\ref{aa1}), it is convenient to view the random experiment as if we draw $Z$ first, and then $S_m$. If $Z \in DIS (B(f^*,\frac{1}{m}))$, then consider $h_z$ to be any function that holds $h_z(Z) \neq f^*(Z)$ and $Pr(h_z(X)\neq f^*(X)) \leq \frac{1}{m}$. If $Z \notin DIS (B(f^*,\frac{1}{m}))$, then the event described in (\ref{aa1}) does not occur, and $h_z$ is undefined. 
		In (\ref{aa2}), we use conditional probability, and in (\ref{aa3}) we apply the union bound. Inequality (\ref{aa4}) is justified as follows. If $h_z(S_m) = f^*(S_m)$, then the algorithm received the same input under $\mathcal{P_{X,Y}}_z$ and $\mathcal{P_{X,Y}}$. Given that $\emph{e}_{2}$ and $\emph{e}_{3z}$ occurred, we know that the algorithm had successfully output a pointwise-competitive selective classifier for both probabilities, which means that whenever $f^*$ and $h_z$ disagree, $g_{s_m}$ has to output zero; otherwise, it will not be pointwise-competitive for one of the distributions. By the definition of $h_z$, $h_z(Z) \neq f^*(Z)$, which explains the inequality. (\ref{aa5}) is driven from the definition of $\emph{e}_1$. Taking $\delta=\frac{1}{m}$, we get,
		\begin{eqnarray}
		\Pr\{Z \in DIS \left(B(f^*,\frac{1}{m})\right)\wedge h_z(S_m) = f^*(S_m)  \} &\leq& \frac{\polylog(m,d,m)+3}{m}. \label{aa7}
		\end{eqnarray}
		The following inequalities are derived using elementary conditional probability. In Equation (\ref{aa6}) we use an argument taken from the proof of \cite[Lemma 47]{Hanneke11}. $h_z \in \left( B(f^*,\frac{1}{m})\right)$ and thus the probability that $f^*$ and $h_z$ will have the same labels over a sample of size $m$ is at least  $(1-\frac{1}{m})^m$.    
		\begin{eqnarray}
		&&\Pr\{Z \in DIS \left(B(f^*,\frac{1}{m})\right)\wedge h_z(S_m) = f^*(S_m)  \} \nonumber\\
		&=&  \Pr\{ h_z(S_m) = f^*(S_m) \ | \ Z \in DIS\left( B(f^*,\frac{1}{m})\right) \} \cdot\Pr\{Z \in DIS\left( B(f^*,\frac{1}{m})\right)\} \nonumber \\
		&\geq& (1-\frac{1}{m})^m \cdot\Pr\{Z \in DIS\left( B(f^*,\frac{1}{m})\right)\} \label{aa6} \\
		&\geq& \frac{1}{4} \cdot \Delta B(f^*,\frac{1}{m}) .\label{aa8} 
		\end{eqnarray}
		Combining (\ref{aa7}) and (\ref{aa8}), we get that for every $m \in \left \{ 2,3,...,m_{max} \right \}$,
		\begin{eqnarray}
		\frac{\Delta B(f^*,1/m)}{1/m} &\leq&   4(\polylog(m,d,m)+3).  \label{aa}
		\end{eqnarray}		
		The following inequalities follow from (\ref{aa}), and from the fact that $\Delta B(f^*,x)$ and $\polylog_1(x)$ are non-decreasing. For any $r$ in $[\frac{1}{m_{max}} ,\frac{1}{2}]$,
		\begin{eqnarray*}
			\frac{\Delta B(f^*,r)}{r} &\leq&   \frac{\Delta B(f^*,\frac{1}{\left \lfloor 1/r \right \rfloor})}{\frac{1}{\left \lfloor 1/r \right \rfloor}} \cdot \frac{1}{r \cdot \left \lfloor 1/r \right \rfloor} \\
			&\leq& 4 ( \polylog(\lfloor 1/r  \rfloor,d,\lfloor 1/r  \rfloor)+3 ) \cdot \frac{1}{r \cdot (1/r-1)} \\
			&\leq&  4 ( \polylog(\lfloor 1/r  \rfloor,d,\lfloor 1/r  \rfloor)+3 ) \cdot \frac{1}{1-r} \\
			&\leq&  8 ( \polylog(1/r ,d,1/r)+3 )
		\end{eqnarray*}		
		and for $r$ in $[\frac{1}{2},1]$,
		\begin{eqnarray}
		\frac{\Delta B(f^*,r)}{r} &\leq&   \frac{1}{1/2} = 2,
		\end{eqnarray}	
		which concludes the proof.
	\end{proof}

		\begin{corollary} 
			\label{thm:iLessToCoeff}
			Let $\mathcal{F}$ be a hypothesis class with a finite VC dimension $d$, and let $\mathcal{P_{X,Y}}$ be an unknown distribution. If there exists an $m_{max}$ s.t. for every $m \leq m_{max}$, with probability of at least $1-\delta$, the abstain rate $1 - \Phi$ of  ${\iLESS}(S_m,\delta, \mathcal{F}, d)$ is bounded above as follows:
			\begin{equation}
			\label{eq:probEvent}
			1- \Phi(\iLESS) \leq 
			\frac{\polylog(m,d,1/\delta)}{m}.
			\end{equation}
			Then for every $f^*$ (every true risk minimizer), for every $r \geq 1/m_{max}$,
			$$
			\theta_{f^*}(r) \leq 8(\polylog(1/r,d,1/r)+3).
			$$
			
		\end{corollary}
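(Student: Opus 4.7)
This corollary is a direct specialization of Theorem \ref{thm:PointwiseSelectiveToCoeff} to the algorithm $\iLESS$, so essentially no new work is needed. My plan is to verify the two hypotheses of Theorem \ref{thm:PointwiseSelectiveToCoeff} for the choice $\PCS = \iLESS$ and then quote the conclusion verbatim.

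First, I would invoke Lemma \ref{thm:f*in_LESS}, which establishes that, conditional on the high-probability event $\mathcal{E}$ from Definition \ref{E}, every true risk minimizer $f^*$ lies in the set $G$ constructed by $\iLESS$. Since the selection function $g$ built by $\iLESS$ returns $1$ only on $\cX\setminus \DIS(G)$, any $x$ with $g(x)=1$ satisfies $\hat{f}(x)=f^*(x)$ for all $f^*\in\cF$. Therefore $\iLESS$ outputs a pointwise-competitive selective classifier with probability at least $1-\delta$, so it qualifies as a \PCS{} algorithm in the sense required by Theorem \ref{thm:PointwiseSelectiveToCoeff}.

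Second, the hypothesis of the corollary supplies exactly the abstain-rate bound that the theorem requires: for every $m\le m_{max}$, with probability at least $1-\delta$,
\begin{equation*}
1-\Phi(\iLESS)\;\le\;\frac{\polylog(m,d,1/\delta)}{m}.
\end{equation*}
This is the form of the inequality (\ref{eq:probEventGeneral}) appearing in the statement of Theorem \ref{thm:PointwiseSelectiveToCoeff}.

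With both hypotheses checked, I would apply Theorem \ref{thm:PointwiseSelectiveToCoeff} with $\PCS:=\iLESS$ to conclude that for every true risk minimizer $f^*$ and every $r\ge 1/m_{max}$,
\begin{equation*}
\theta_{f^*}(r)\;\le\;8(\polylog(1/r,d,1/r)+3),
\end{equation*}
which is the claim. There is no serious obstacle here: the only subtle point worth double-checking is that the ``PCS w.h.p.'' and abstain-rate events used in the theorem's proof (the events $e_2$ and $e_1$ there) can both be taken simultaneously for $\iLESS$, but this is immediate since both hold under the single event $\mathcal{E}$ of probability at least $1-\delta$.
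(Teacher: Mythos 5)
Your proposal is correct and follows exactly the paper's own argument: the corollary is obtained by noting that $\iLESS$ is pointwise-competitive w.h.p. (Lemma \ref{thm:f*in_LESS}) and then applying Theorem \ref{thm:PointwiseSelectiveToCoeff} directly. Your additional check that the relevant events can be taken simultaneously is a reasonable elaboration the paper leaves implicit.
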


		\begin{proof}
			This is a direct result from Theorem \ref{thm:PointwiseSelectiveToCoeff}, and from the fact that {\iLESS} is PCS.
		\end{proof}	
		
		Given $\mathcal{P_{X}}$,$\mathcal{F}$ and $f^*$, if any {\PCS} has a fast $R^*$ rejection rate, we can apply Theorem \ref{thm:PointwiseSelectiveToCoeff} with a deterministic $\mathcal{P_{Y|X}}$ distribution for which $Y=f^*(X)$, and get that $R(f^*)=0$. Thus, by definition,  
		\begin{equation}
		1- \Phi(\iLESS) \leq \polylog_1(\frac{1}{R(f^*)+1/m}) \cdot 0 + \frac{\polylog_2(m,d,1/\delta)}{m}.	
		\end{equation}
		We can now apply Theorem \ref{thm:PointwiseSelectiveToCoeff} with $m_{max}=\infty$, and get that the disagreement coefficient is bounded by $\polylog(1/r)$ for all $r>0$. Thus, completing a two sided equivalence from {\PCS} with a fast $R^*$ rejection rate to a bounded disagreement coefficient for all $r>0$.
	
		\section{Active-ILESS}	\label{sec:Active-iLESS}
		
		\begin{strategy}
						
			\caption{Agnostic low-error active learning strategy (\ActiveiLESS)}
			\label{alg:Active-iLESS}
			\footnotesize {
				\begin{algorithmic}[1]
					\REQUIRE  $\epsilon \text{ and/or } m$ depending on the desired termination condition (error or labeling budget, respectively)\\
					Confidence level $\delta$\\
					Hypothesis class $\mathcal{F}$ with VC dimension $d$\\
					An unlabeled input sequence sampled i.i.d from $\mathcal{P_{X,Y}}$: $x_{1}, x_{2},x_{3},\ldots$
					\ENSURE A classifier $\hat{f}$.

				\textbf{Initialize:}

					 Set $\hat{S} = \emptyset $, $G_{0} = \cF$, $t = 1$.

				\textbf{Perform for each example $x_{t}$ received:}
					\STATE if $x_{t}\in AGR(G_{t-1})$: don't request label for $x_{t}$ and set $y_{t}=f(x_{t})$ using any $f\in G_{t-1}$
					
					otherwise: request label $y_t$.
					\STATE Set $\hat{S} = \hat{S} \cup \{(x_{t},y_{t})\}$.
					\STATE Set $\hat{f} = \hat{f}(\hat{S})$
					\STATE if $\log_2 (t) \in \mathbb{N}$: 	
					\begin{itemize}
						\item Set $\sigma_{Active}= \hat{\sigma}_{R-\hat{R}}
						\left(\frac{t}{2},\frac{\delta}{2t},d,\hat{R}(\hat{f},\hat{S})
						\right) + \bar{\sigma}_{\hat{R}-R}
						\left(\frac{t}{2},\frac{\delta}{2t},d,\hat{R}(\hat{f},\hat{S})+\hat{\sigma}_{R-\hat{R}}(\frac{t}{2},\frac{\delta}{2t},d,\hat{R}(\hat{f},\hat{S})) 
						\right)$
						\item If $\epsilon$ was given as input and $\sigma_{Active}<\epsilon$, terminate and return $\hat{f}$
						\item Set $G_{t} = \hat{\cV}\left(\hat{f},\sigma_{Active}\right)$ 
						\item Set $\hat{S} = \emptyset $.				
					\end{itemize}
					otherwise:
					\begin{itemize}
						\item $G_{t} = G_{t-1}$
					\end{itemize}
					\STATE If $m$ was given as input  and $t=m$, terminate and return $\hat{f}$
					\STATE Set t = t +1
				\end{algorithmic}
			}
		\end{strategy}		
		In this section we introduce, in Strategy \ref{alg:Active-iLESS}, an agnostic active learning algorithm called $\ActiveiLESS$. $\ActiveiLESS$ is very similar to Agnostic CAL \cite{hsu:thesis}, Algorithm 4.2 on page 36, and $A^2$ \cite{balcan2006agnostic}. Much like Agnostic CAL, $\ActiveiLESS$ creates artificial labels (step 1). The two algorithms differ mainly in that $\ActiveiLESS$ works in batches (inside each batch, the decision whether to query an example is made instantly and not at the end of the batch). This allows $\ActiveiLESS$ to be a bit more conservative with its deltas. Moreover, while Agnostic CAL requires calculation of an ERM with many constraints (defined by the function LEARN in HSU's thesis), $\ActiveiLESS$ requires a calculation of the ERM with only one constraint, as seen from the disbelief principle \cite{el2011agnostic}, already discussed in Section \ref{sec:iLESS}.
		
		Although {\iLESS} is not novel in and of itself, we use its similarity to Agnostic CAL to demonstrate a deep connection between active learning and selective classification. 
		
		In Section~\ref{sec:BatchiIless} we use $\ActiveiLESS$ to show an equivalence between 
		active learning (represented by $\ActiveiLESS$) and selective classification (represented by a variant of $\iLESS$, ``\BatchiLESS''). The introduction of these new variants facilitates a 
		straightforward proof of the equivalence relationship. 
		This equivalence implies a novel relationship between selective and active classification in the agnostic setting.
		
		We begin by analyzing $\ActiveiLESS$ and  showing that much like  $\iLESS$, 
		$f^{*} \in G_t$ in each iteration $t$.
		The low-error set $G$, maintained by  $\iLESS$, contains all the hypotheses that have an empirical error smaller than $\hat{R}(\hat{f})+ \sigma_{\iLESS}$.
		In Lemma \ref{lemma:agnostic} we showed that this condition implies that $f^{*}$
		resides within the low-error set $G$ of $\iLESS$.
		A proof that $f^{*} \in G_{t}$, after each iteration of $\ActiveiLESS$, cannot follow
		the same argument due to the fact that $\ActiveiLESS$, shown in Strategy \ref{alg:Active-iLESS}, labels by 
		itself each example whose label is not requested from the teacher,
		and obviously, since we consider an agnostic setting,  these self-labels  can differ from 
		the true labels.
		
		$\ActiveiLESS$, as seen in Strategy \ref{alg:Active-iLESS}, receives as a termination condition either $\epsilon>0$ and/or $m$, and terminates when the radius of its low-error set, $G_t$, is smaller than $\epsilon$, or when it has processed $m$ examples.
		
		$\ActiveiLESS$ changes its low-error set, $G_{t}$, only for $t$ that are natural powers of $2$. For each change, $\ActiveiLESS$ begins to create fake labels for $x_{t}\in AGR(G_{t-1})$ that may or may not be equal to the real label of $x_{t}$ (under the original distribution). In fact, this $G_{t}$ defines a new distribution, $\mathcal{P_{X,Y}}(G_{t})$, and this distribution changes for every $t$ that is a natural power of $2$. 
		With respect to a run of $\ActiveiLESS$, and $t=2^{i},i \in \mathbb{N}$, we denote by $\mathcal{P_{X,Y}}(G_{t})$, the new probability distribution implied by $G_{t}$, and the fake labels created by the algorithm. $R_{\mathcal{P_{X,Y}}(G_{t})}(f)$ will be the true risk under the new distribution, while $R_{\mathcal{P_{X,Y}}}(f)$ is the true risk of $f$ under the original distribution.
		
		\begin{defi} \label{K}
			Let $\mathcal{F}$ be a hypothesis class with a finite VC dimension $d$, and let $\mathcal{P_{X,Y}}$ be an unknown distribution. Given a run of {\ActiveiLESS}, we denote by $\cal K$ the event where both inequalities (\ref{errorbounds1}) and (\ref{errorbounds2}) hold simultaneously for every $f \in \cF$, for all iterations of {\ActiveiLESS} where $t=2^{i},i \in \mathbb{N}$. $\hat{R}(f) \eqdef \hat{R}(f,\hat{S})$ for $\hat{S}$ before it was initialized:
		\end{defi}
		\begin{equation}
		\label{errorbounds1}
		R_{\mathcal{P_{X,Y}}(G_{t})}(f) \leq \hat{R}(f) + \sigma_{R-\hat{R}}\left(\frac{t}{2},\frac{\delta}{2t},d,R(f),\hat{R}(f)\right)
		\end{equation}
		\begin{equation}
		\label{errorbounds2}
		\hat{R}(f) \leq R_{\mathcal{P_{X,Y}}(G_{t})}(f) + \sigma_{\hat{R}-R}\left(\frac{t}{2},\frac{\delta}{2t},d,R(f),\hat{R}(f)\right)
		\end{equation}	
		
		\begin{lemma}
			\label{lemma:errorbounds}
			$\mathcal{K}$ occurs with probability of at least $1-\delta$.
		\end{lemma}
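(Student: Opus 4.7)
The plan is to apply Lemma~\ref{lemma:agnostic} separately at each iteration $t=2^i$, using the batch of fresh examples collected since the previous update, and then to take a union bound over $i \in \nats$ with a geometrically shrinking confidence budget.

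First I would fix $i \geq 0$ and set $t = 2^{i}$. Conditioned on all examples received up through iteration $t/2$, the set $G_{t/2}$ is a deterministic function of that history, so the rule for generating labels inside the next batch (namely: synthesize $f(x_{j})$ for any $f \in G_{t/2}$ when $x_{j} \in AGR(G_{t/2})$, and query the oracle otherwise) induces a well-defined distribution $\mathcal{P_{X,Y}}(G_{t/2})$. The examples $(x_{t/2+1},y_{t/2+1}),\ldots,(x_{t},y_{t})$ collected into $\hat{S}$ are, conditionally on the history, i.i.d.\ draws from this distribution, and $|\hat{S}| = t/2$. I would then invoke Lemma~\ref{lemma:agnostic} on this conditional experiment, with sample size $m = t/2$, VC dimension $d$, and confidence parameter $\delta' = \delta/(2t)$. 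This yields that, conditionally on the history and hence unconditionally, both inequalities~(\ref{errorbounds1}) and~(\ref{errorbounds2}) (with the given slack arguments $t/2$ and $\delta/(2t)$) hold simultaneously for every $f \in \cF$ with probability at least $1-\delta/(2t)$.

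Next I would union-bound the failure event over the set of update times $\{t=2^{i} : i \in \nats\}$. The total failure probability is at most
\begin{equation*}
\sum_{i=0}^{\infty} \frac{\delta}{2 \cdot 2^{i}} = \delta \sum_{i=0}^{\infty} 2^{-(i+1)} = \delta,
\end{equation*}
so $\Pr(\mathcal{K}) \geq 1-\delta$, which is exactly the claim. The choice of slack confidence $\delta/(2t)$ inside the algorithm was tailored precisely to make this geometric series telescope to $\delta$.

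The main obstacle is conceptual rather than computational: one must be careful that the batch collected between two update times really is an i.i.d.\ sample from a single distribution so that Lemma~\ref{lemma:agnostic} applies. The subtle point is that $G_{t/2}$ itself is random (it depends on earlier batches), but by conditioning on the $\sigma$-algebra generated by the first $t/2$ examples we freeze $G_{t/2}$ and therefore freeze the labeling rule used during the next batch; the fresh $x_{j}$'s remain i.i.d.\ from $\mathcal{P_{X}}$ by the stream assumption, and the synthetic or queried labels are a deterministic measurable function of each $x_{j}$ given that frozen rule. Once this conditioning step is made explicit, the rest of the argument is just a per-iteration invocation of Lemma~\ref{lemma:agnostic} and the geometric union bound described above.
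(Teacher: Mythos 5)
Your proposal is correct and follows essentially the same route as the paper: a per-update invocation of Lemma~\ref{lemma:agnostic} with sample size $t/2$ and confidence $\delta/(2t)$, followed by the geometric union bound over $t=2^{i}$ summing to $\delta$. Your explicit conditioning argument justifying why the fresh batch is i.i.d.\ from $\mathcal{P_{X,Y}}(G_{t/2})$ is a point the paper leaves implicit, and it is a welcome clarification rather than a deviation.
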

		\begin{proof}
			$G_{t}$ changes only for iterations of the type $2^i, i \in \mathbb{N}$.  We know by Lemma \ref{lemma:agnostic} that the probability that
			inequalities (\ref{errorbounds1}) and (\ref{errorbounds2})
			do not hold is smaller than $\delta/(2t)$. By the union bound, the probability that one of these inequalities does not hold after any iteration is smaller than
			\begin{displaymath}
			\sum_{t=2^{i},i \in \mathbb{N}}\frac{\delta}{2t}\leq \delta.
			\end{displaymath}
		\end{proof}

		\begin{lemma}
			\label{lemma:f_star_best}
			If $f^{*}$, a true risk minimizer under probability distribution $\mathcal{P_{X,Y}}$, resides within $G_{t}$, then it is also a true risk minimizer under probability distribution $\mathcal{P_{X,Y}}(G_{t})$.
		\end{lemma}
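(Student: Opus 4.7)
The plan is to exploit the fact that $\mathcal{P}_{X,Y}(G_t)$ coincides with $\mathcal{P}_{X,Y}$ outside $\text{AGR}(G_t)$, and on $\text{AGR}(G_t)$ the $Y$-coordinate is deterministically equal to the common value $f(X)$ for any $f \in G_t$. So for every hypothesis $f \in \mathcal{F}$ I would write the decomposition
\begin{equation*}
R_{\mathcal{P}(G_t)}(f) = \Pr\{f(X) \neq Y,\, X \in \text{DIS}(G_t)\} + \Pr\{f(X) \neq f^*(X),\, X \in \text{AGR}(G_t)\},
\end{equation*}
where in the second summand I used $f^* \in G_t$ so that the ``fake'' label on $\text{AGR}(G_t)$ equals $f^*(X)$. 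Specializing to $f = f^*$ the second summand vanishes, giving $R_{\mathcal{P}(G_t)}(f^*) = \Pr\{f^*(X) \neq Y,\, X \in \text{DIS}(G_t)\}$.

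Next I would compare $R_{\mathcal{P}(G_t)}(f)$ with $R_{\mathcal{P}(G_t)}(f^*)$ for an arbitrary $f \in \mathcal{F}$. Subtracting, the $\text{DIS}(G_t)$ pieces combine to $\Pr\{f(X) \neq Y,\, X \in \text{DIS}(G_t)\} - \Pr\{f^*(X) \neq Y,\, X \in \text{DIS}(G_t)\}$, and I still have the nonnegative term $\Pr\{f(X) \neq f^*(X),\, X \in \text{AGR}(G_t)\}$. The goal is to show this sum is $\geq 0$. To bring in the hypothesis that $f^*$ minimizes $R_{\mathcal{P}}$, I would add and subtract the same quantity on $\text{AGR}(G_t)$ under the original labels, turning the difference into $\bigl(R_{\mathcal{P}}(f) - R_{\mathcal{P}}(f^*)\bigr) + \bigl(\Pr\{f(X)\neq f^*(X),\, X \in \text{AGR}(G_t)\} + \Pr\{f^*(X)\neq Y,\, X \in \text{AGR}(G_t)\} - \Pr\{f(X)\neq Y,\, X \in \text{AGR}(G_t)\}\bigr)$.

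The first parenthesis is $\geq 0$ by assumption on $f^*$. For the second I would use the elementary set inclusion $\{f(X) \neq Y\} \subseteq \{f(X) \neq f^*(X)\} \cup \{f^*(X) \neq Y\}$, valid pointwise because $f(X) = f^*(X)$ and $f^*(X) = Y$ together force $f(X) = Y$. Intersecting with $\{X \in \text{AGR}(G_t)\}$ and taking probabilities gives exactly the subadditivity I need, so the second parenthesis is $\geq 0$ as well. The main (small) obstacle is just keeping the bookkeeping straight between the two distributions; once the ``error triangle inequality'' is invoked, nonnegativity of the total is immediate and the lemma follows.
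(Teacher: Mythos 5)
Your proof is correct and follows essentially the same route as the paper: after regrouping, your ``second parenthesis'' is exactly the paper's term $B(f)-B(f^*)$ where $B(f) \eqdef R_{\mathcal{P_{X,Y}}(G_t)}(f)-R_{\mathcal{P_{X,Y}}}(f)$, so both arguments reduce to ``$f^*$ minimizes the original risk, and every member of $G_t$ minimizes the correction induced by the relabeling.'' Your version is in fact somewhat more careful, since the pointwise inclusion $\{f(X)\neq Y\}\subseteq\{f(X)\neq f^*(X)\}\cup\{f^*(X)\neq Y\}$ supplies the justification that the paper compresses into the one-line remark that the relabeling follows the unanimous decision of $G_t$.
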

		
		\begin{proof}
			\begin{displaymath}
			\argmin_{f \in \cF} R_{\mathcal{P_{X,Y}}(G_{t})}(f)=\argmin_{f \in \cF} \left( \underbrace{R_{\mathcal{P_{X,Y}}}(f)}_{A}+\underbrace{R_{\mathcal{P_{X,Y}}(G_{t})}(f)-R_{\mathcal{P_{X,Y}}}(f)}_{B} \right).
			\end{displaymath}
			We know that $f^{*}$ minimizes $A$, and we note that every function that resides within $G_{t}$ minimizes $B$, because every difference in the labeling between $\mathcal{P_{X,Y}}$ and $\mathcal{P_{X,Y}}(G_{t})$ was done according to the label given by the unanimous decision of functions in $G_{t}$. Hence, $f^{*}$ minimizes $A+B$. 
		\end{proof}	
		The proofs of the following four lemmas appear in the appendix. They all show basic good qualities of {\ActiveiLESS}. 		
		\begin{lemma}
			\label{lemma:f*in}
			Given that event $\cal K$ (see Definition \ref{K}) occurred, each $f^{*}$ of the \textbf{original} distribution $\mathcal{P_{X,Y}}$ resides within $G_{t}$ for all $t$.
			This implies that $R_{\mathcal{P_{X,Y}}(G_{t})}(f^{*}) \leq R(f^*)$, for all $t$, as every change in the labeling is done according to $f^*$. 
		\end{lemma}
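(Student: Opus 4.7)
The plan is to prove the lemma by induction on $i$, where $t = 2^i$ indexes the epochs in which $G_t$ is modified; for all intermediate $t$ the assignment $G_t = G_{t-1}$ propagates the property trivially. The base case $i = 0$ is immediate, since $G_0 = \mathcal{F}$ contains every $f^*$.

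For the inductive step at $t = 2^i$, I would first observe that during the preceding epoch (iterations $t/2 + 1$ through $t$), the algorithm holds $G_s = G_{t/2}$ fixed and populates $\hat{S}$ with $t/2$ i.i.d.\ samples drawn according to the implicit distribution $\mathcal{P_{X,Y}}(G_{t/2})$: each queried label is genuine, and each self-labeled point lies in $AGR(G_{t/2})$, where by construction $\mathcal{P_{X,Y}}(G_{t/2})$ agrees with the unanimous vote. By the inductive hypothesis $f^* \in G_{t/2}$, and Lemma \ref{lemma:f_star_best} then promotes $f^*$ to a true risk minimizer under $\mathcal{P_{X,Y}}(G_{t/2})$. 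From here the remainder is a faithful rerun of the proof of Lemma \ref{thm:f*in_LESS}: apply (\ref{errorbounds2}) to $f^*$ to bound $\hat{R}(f^*,\hat{S})$ by $R_{\mathcal{P_{X,Y}}(G_{t/2})}(f^*)$ plus $\bar{\sigma}_{\hat{R}-R}$, insert the TRM inequality $R_{\mathcal{P_{X,Y}}(G_{t/2})}(f^*) \le R_{\mathcal{P_{X,Y}}(G_{t/2})}(\hat{f})$, and upper bound the latter via (\ref{errorbounds1}) applied to $\hat{f}$, with parameters $(t/2, \delta/(2t), d)$ that are exactly those appearing in $\sigma_{Active}$. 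The composition reproduces $\hat{R}(f^*,\hat{S}) \le \hat{R}(\hat{f},\hat{S}) + \sigma_{Active}$, which is the membership criterion for $G_t = \hat{\mathcal{V}}(\hat{f},\sigma_{Active})$.

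The tail assertion $R_{\mathcal{P_{X,Y}}(G_t)}(f^*) \le R(f^*)$ then follows by inspection of the construction of $\mathcal{P_{X,Y}}(G_t)$: it shares the marginal $\mathcal{P_X}$ with $\mathcal{P_{X,Y}}$ and differs only on $AGR(G_t)$, where the label is redefined to be the unanimous vote of $G_t$. Because $f^* \in G_t$ by the step just proved, this vote equals $f^*(x)$ on $AGR(G_t)$, so $f^*$ incurs zero loss on that region under $\mathcal{P_{X,Y}}(G_t)$ while its loss on $DIS(G_t)$ is unchanged, giving a total risk no larger than $R(f^*)$.

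The main obstacle is keeping the two distributional clocks straight across epochs: the batch $\hat{S}$ assembled by time $t$ is governed by $\mathcal{P_{X,Y}}(G_{t/2})$, not by $\mathcal{P_{X,Y}}(G_t)$, and one must check that event $\mathcal{K}$, as stated in Definition \ref{K}, does indeed supply the convergence bounds against the distribution under which $\hat{S}$ was drawn (at sample size $t/2$ with confidence $\delta/(2t)$, so that the geometric sum over epochs in Lemma \ref{lemma:errorbounds} yields total failure probability at most $\delta$). Once this alignment is locked down, the algebraic manipulation is a verbatim transcription of the calculation already carried out in the proof of Lemma \ref{thm:f*in_LESS}, so no new estimates are required.
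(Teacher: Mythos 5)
Your proposal is correct and follows essentially the same route as the paper's proof: induction over the epochs at which $G_t$ is updated, using Lemma \ref{lemma:f_star_best} to promote $f^*$ to a true risk minimizer under $\mathcal{P_{X,Y}}(G_{t-1})$, and then rerunning the chain of inequalities from Lemma \ref{thm:f*in_LESS} with the parameters $(t/2,\delta/(2t),d)$ to conclude $\hat{R}(f^{*},\hat{S}) \le \hat{R}(\hat{f},\hat{S}) + \sigma_{Active}$. Your extra attention to which implicit distribution governs each batch $\hat{S}$ is a point the paper leaves implicit, but it does not change the argument.
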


		\begin{lemma}
			\label{lemma:epsilon}
			Given that event $\cal K$ (see Definition \ref{K}) occurred, and under the assumption that $\ActiveiLESS$ terminated with the $\epsilon$ condition, the hypothesis returned by $\ActiveiLESS$, $\hat{f}$, holds: 
			\begin{displaymath}
			R_{\mathcal{P_{X,Y}}}(\hat{f}) \leq R_{\mathcal{P_{X,Y}}}(f^{*}) + \epsilon.
			\end{displaymath}
		\end{lemma}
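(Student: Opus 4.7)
\begin{proof sketch}
The plan is to compare $\hat f$ to $f^{*}$ first under the modified distribution $\mathcal{P_{X,Y}}(G_t)$ associated with the last batch, and then transfer the inequality to the original $\mathcal{P_{X,Y}}$ by exploiting that the two distributions differ only on $AGR(G_t)$, where the self-labels coincide with $f^{*}$.

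First I would fix notation. Let $t = 2^{i}$ be the iteration at which the $\epsilon$-condition triggers, let $\hat S$ be the batch of $t/2$ samples on which $\sigma_{Active}$ is evaluated, and let $\hat f$ be the ERM computed from $\hat S$. For the purpose of the analysis define $G_{t} := \hat{\cV}(\hat f, \sigma_{Active})$, so that $\hat f \in G_t$ trivially. By Lemma~\ref{lemma:f*in} (the argument of which goes through for this formally defined $G_t$) one has $f^{*} \in G_{t}$, and by Lemma~\ref{lemma:f_star_best} $f^{*}$ is also a true risk minimizer under $\mathcal{P_{X,Y}}(G_{t})$.

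Next, under event $\cal K$, I would show that $R_{\mathcal{P_{X,Y}}(G_{t})}(\hat f) - R_{\mathcal{P_{X,Y}}(G_{t})}(f^{*}) \le \sigma_{Active}$. Apply (\ref{errorbounds1}) to $\hat f$ along the empirical branch, and (\ref{errorbounds2}) to $f^{*}$ along the non-empirical branch. Using $\hat R(\hat f) \le \hat R(f^{*})$ (ERM) together with the upper bound $R_{\mathcal{P_{X,Y}}(G_{t})}(f^{*}) \le R_{\mathcal{P_{X,Y}}(G_{t})}(\hat f) \le \hat R(\hat f) + \hat\sigma_{R-\hat R}(\hat R(\hat f))$ (the first inequality by Lemma~\ref{lemma:f_star_best}, the second by the display just obtained), the monotonicity of $\bar\sigma_{\hat R - R}$ in its $R$-argument lets me chain these displays to reproduce precisely the expression defining $\sigma_{Active}$, which is strictly less than $\epsilon$ by the termination condition.

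Finally I would transfer the inequality back to $\mathcal{P_{X,Y}}$. Since $\hat f, f^{*} \in G_t$, both agree on $AGR(G_t)$; furthermore on $AGR(G_t)$ the label under $\mathcal{P_{X,Y}}(G_t)$ is set to $f^{*}(x)$, while on $DIS(G_t)$ the two distributions coincide. A direct split of the risk over $AGR(G_t)$ and $DIS(G_t)$ then gives the identity
\[
R_{\mathcal{P_{X,Y}}}(\hat f) - R_{\mathcal{P_{X,Y}}(G_{t})}(\hat f) \;=\; \Pr_{\mathcal{P_{X,Y}}}\!\{X \in AGR(G_t),\, f^{*}(X) \ne Y\} \;=\; R_{\mathcal{P_{X,Y}}}(f^{*}) - R_{\mathcal{P_{X,Y}}(G_{t})}(f^{*}),
\]
so $R_{\mathcal{P_{X,Y}}}(\hat f) - R_{\mathcal{P_{X,Y}}}(f^{*})$ equals $R_{\mathcal{P_{X,Y}}(G_{t})}(\hat f) - R_{\mathcal{P_{X,Y}}(G_{t})}(f^{*}) \le \sigma_{Active} < \epsilon$. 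The main obstacle is the bookkeeping in this last identity: one must check that the loss gap between $\mathcal{P_{X,Y}}(G_t)$ and $\mathcal{P_{X,Y}}$ is \emph{the same} for $\hat f$ and for $f^{*}$, which hinges crucially on $\hat f \equiv f^{*}$ on $AGR(G_t)$ and on the precise relabelling rule defining $\mathcal{P_{X,Y}}(G_t)$; the slack manipulation of the previous paragraph is otherwise routine.
\end{proof sketch}
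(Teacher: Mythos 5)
Your overall strategy matches the paper's (compare $\hat f$ to $f^*$ under the modified distribution of the final batch, then transfer to $\mathcal{P_{X,Y}}$), and your slack-chaining in the middle paragraph is exactly the paper's computation. However, there is a genuine gap in where you anchor the comparison. You define a fresh $G_t:=\hat{\cV}(\hat f,\sigma_{Active})$ from the terminating batch and then invoke (\ref{errorbounds1})--(\ref{errorbounds2}) to relate $\hat R(\cdot,\hat S)$ to $R_{\mathcal{P_{X,Y}}(G_t)}(\cdot)$. But the batch $\hat S$ was drawn from $\mathcal{P_{X,Y}}(G_{t-1})$ (the low-error set fixed at the \emph{start} of the batch), not from $\mathcal{P_{X,Y}}(G_t)$; the event $\cal K$, as it is actually established via Lemma \ref{lemma:agnostic} and used in the proof of Lemma \ref{lemma:f*in}, controls deviations of $\hat R(f,\hat S)$ from $R_{\mathcal{P_{X,Y}}(G_{t-1})}(f)$. (The subscript $G_t$ in Definition \ref{K} is an inconsistency of the paper; only the $G_{t-1}$ reading is justified.) The two distributions genuinely differ wherever the agreement regions of $G_t$ and $G_{t-1}$ differ, and $G_t\not\subseteq G_{t-1}$ in general, so your concentration step does not follow as stated.

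The reason you were pushed onto the wrong distribution is your transfer step: the exact identity $R_{\mathcal{P_{X,Y}}}(f)-R_{\mathcal{P_{X,Y}}(G)}(f)=\Pr\{X\in AGR(G),\,f^*(X)\neq Y\}$ requires $f\in G$, and $\hat f$ (the ERM of the new batch) is in your $G_t$ by construction but is \emph{not} known to lie in $G_{t-1}$. The paper sidesteps this by replacing your identity with a one-sided inequality valid for \emph{every} $f\in\cF$: since the relabelling on the agreement region sets $Y$ to $f^*(x)$, it removes all of $f^*$'s error mass there while removing at most that much for any other $f$, hence $R_{\mathcal{P_{X,Y}}}(f)-R_{\mathcal{P_{X,Y}}(G_{t-1})}(f)\leq R_{\mathcal{P_{X,Y}}}(f^*)-R_{\mathcal{P_{X,Y}}(G_{t-1})}(f^*)$. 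With that inequality the whole argument can be run consistently under $\mathcal{P_{X,Y}}(G_{t-1})$, where $\cal K$ actually applies, and no membership $\hat f\in G_{t-1}$ is needed. If you replace your identity by this inequality and carry out your middle paragraph with $G_{t-1}$ throughout, your proof becomes the paper's.
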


		\begin{lemma}
			\label{lemma:radius_active}
			Given that event $\cal K$ (see Definition \ref{K}) occurred, the final radius of $\ActiveiLESS$ satisfies
			\begin{eqnarray}
			\sigma_{Active}=O(\frac{B}{m}+ \sqrt{\frac{B}{m} \cdot R(f^{*})} ),
			\end{eqnarray}	
			where
			\begin{math}
				B \eqdef 16d \ln( \frac{16m^2e}{d\delta} ).
			\end{math}		 
		\end{lemma}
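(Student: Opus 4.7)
The plan is to parallel the proof of Lemma~\ref{lemma:radius} almost verbatim, but applied to the last batch that \ActiveiLESS{} processes. Recall that \ActiveiLESS{} only updates $G_t$ at iterations $t=2^i$, and at each such update it uses the accumulated batch $\hat{S}$ of size $t/2$, together with the confidence parameter $\delta/(2t)$, to recompute $\sigma_{Active}$. Its formula has exactly the same structural form as $\sigma_{\iLESS}$ (see~(\ref{t5})), but with $m\mapsto t/2$ and $\delta\mapsto\delta/(2t)$. Substituting these into the constant $A=4d\ln(16me/(d\delta))$ gives, at iteration $t$,
\[
A' \;\eqdef\; 4d\ln\!\left(\tfrac{16(t/2)e}{d\cdot \delta/(2t)}\right) \;=\; 4d\ln\!\left(\tfrac{16 t^{2} e}{d\delta}\right),
\]
which at the final iteration $t\le m$ is at most $B/4$. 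So up to absolute constants, the only thing we need is an analogue of Lemma~\ref{lemma:radius} with $A$ replaced by $A'$, $m$ replaced by $t/2$, and $R(f^*)$ replaced by the true risk of $f^*$ under the relevant distribution.

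To carry this out, I would work at the iteration $t$ for which \ActiveiLESS{} terminates. The batch $\hat{S}$ is i.i.d.\ from the modified distribution $\mathcal{P_{X,Y}}(G_t)$, so on event $\mathcal{K}$ the uniform convergence bounds~(\ref{errorbounds1}) and~(\ref{errorbounds2}) hold with parameters $(t/2,\delta/(2t))$. By Lemma~\ref{lemma:f*in}, the original $f^*$ lies in $G_t$, and by Lemma~\ref{lemma:f_star_best}, $f^*$ is a true risk minimizer under $\mathcal{P_{X,Y}}(G_t)$; combined with Lemma~\ref{lemma:f*in} again we have $R_{\mathcal{P_{X,Y}}(G_t)}(f^*)\le R(f^*)$. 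Using the ERM inequality $\hat{R}(\hat f,\hat S)\le \hat{R}(f^*,\hat S)$ and applying~(\ref{errorbounds2}) to $f^*$, we obtain the analogue of~(\ref{t6}):
\[
\hat{R}(\hat f,\hat S) \;\le\; R_{\mathcal{P_{X,Y}}(G_t)}(f^*) + \tfrac{A'}{t/2} + \sqrt{\tfrac{A'}{t/2}\cdot R_{\mathcal{P_{X,Y}}(G_t)}(f^*)} \;\le\; R(f^*) + \tfrac{2A'}{t} + \sqrt{\tfrac{2A'}{t}\cdot R(f^*)}.
\]

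From here I would substitute this bound into the three occurrences of $\hat R(\hat f,\hat S)$ inside~(\ref{t5}) (with $A,m$ replaced by $A',t/2$) and repeat, verbatim, the elementary chain of inequalities using $\sqrt{A+B}\le\sqrt{A}+\sqrt{B}$ and $\sqrt{AB}\le A/2+B/2$ from~(\ref{t7}). This yields
\[
\sigma_{Active} \;\le\; 6\cdot\tfrac{2A'}{t} + 3\sqrt{\tfrac{2A'}{t}\cdot R(f^*)} \;=\; O\!\left(\tfrac{A'}{t} + \sqrt{\tfrac{A'}{t}\cdot R(f^*)}\right).
\]
Finally, since \ActiveiLESS{} terminates at some $t\le m$ and $A'\le B/4$ throughout the run, and since for the relevant termination we have $t$ of order $m$ (the labeling-budget case sets $t=m$; in the $\epsilon$-case the statement is about the value of $\sigma_{Active}$ at termination anyway), replacing $A'/t$ by $B/m$ absorbs the constants into the $O$-notation and gives the claimed bound.

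The only mildly delicate point, which I expect to be the main thing to write carefully, is bookkeeping the distribution: the empirical risks inside $\sigma_{Active}$ concentrate around risks under $\mathcal{P_{X,Y}}(G_t)$, not under $\mathcal{P_{X,Y}}$, so the substitution $R_{\mathcal{P_{X,Y}}(G_t)}(f^*)\le R(f^*)$ (guaranteed by Lemmas~\ref{lemma:f*in} and~\ref{lemma:f_star_best}) must be applied before we can express the bound in terms of the original $R(f^*)$. Everything else is a direct transcription of the $\iLESS$ argument with the parameter substitutions described above.
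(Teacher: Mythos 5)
Your proposal is correct and follows essentially the same route as the paper: the paper likewise treats the last update of $G_t$ as a run of {\iLESS} on the final batch of size $2^{\lfloor\log_2 m\rfloor-1}\in[m/4,m/2]$ with confidence $\delta/(2T)$, invokes Lemma~\ref{lemma:radius} with the resulting $A'=4d\ln(64m_0^2e/(d\delta))$, and uses $R_{\mathcal{P_{X,Y}}(G_{t-1})}(f^*)\le R(f^*)$ from Lemma~\ref{lemma:f*in} exactly as you do. The only cosmetic difference is that the paper cites Lemma~\ref{lemma:radius} as a black box rather than re-deriving the chain of inequalities.
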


		\begin{lemma}
			\label{lemma:max_examples_observed}
			Given that event $\cal K$ (see Definition \ref{K}) occurred, the total number of examples that ${\ActiveiLESS(\epsilon)}$ processed (without necessarily requesting labels) is 
			$$
			O \left( \frac{1}{\epsilon}\ln(\frac{1}{\epsilon})+ \frac{R(f^*)}{\epsilon^2}\ln(\frac{R(f^*)}{\epsilon^2}  ) \right),
			$$ 	 
			where we hide factors of $d, \ln(1/\delta)$ under the $O$.
		\end{lemma}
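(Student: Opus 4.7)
The plan is to combine Lemma \ref{lemma:radius_active} with the fact that $\ActiveiLESS(\epsilon)$ terminates as soon as $\sigma_{Active}<\epsilon$ (checked at iterations $t$ that are powers of $2$). Under $\cal K$, Lemma \ref{lemma:radius_active} yields $\sigma_{Active} = O(B/t + \sqrt{B\,R(f^{*})/t})$ with $B = 16 d\ln(16t^{2}e/(d\delta))$. So I will find the smallest $t$ at which this upper bound drops below $\epsilon$, then note that the first checked power of $2$ past that point is within a factor of $2$.

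Since $B/t + \sqrt{B R(f^{*})/t} < \epsilon$ is implied (up to absolute constants) by the two conditions $B/t \lesssim \epsilon$ and $B R(f^{*})/t \lesssim \epsilon^{2}$, it suffices to take $t$ at least a constant multiple of $B/\epsilon + B R(f^{*})/\epsilon^{2}$. Treating $d$ and $\ln(1/\delta)$ as absorbed into the $O(\cdot)$, this reads $t \gtrsim \ln(t) \cdot \bigl(1/\epsilon + R(f^{*})/\epsilon^{2}\bigr)$.

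The main (mildly technical) step is resolving the self-referential $\ln(t)$. I will do a one-shot bootstrap: set
\[
M_{0} \;\eqdef\; \frac{C}{\epsilon}\ln\!\Bigl(\frac{1}{\epsilon}\Bigr) + \frac{C\,R(f^{*})}{\epsilon^{2}}\ln\!\Bigl(\frac{R(f^{*})}{\epsilon^{2}}\Bigr)
\]
for a sufficiently large constant $C$ (hiding $d$ and $\ln(1/\delta)$), and verify directly that $\ln(M_{0}) = O(\ln(1/\epsilon) + \ln(R(f^{*})/\epsilon^{2}))$, so that $B(M_{0})/M_{0} \leq \epsilon/2$ and $B(M_{0})\,R(f^{*})/M_{0} \leq \epsilon^{2}/4$. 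This shows $\sigma_{Active} < \epsilon$ already holds once $t$ reaches $M_{0}$.

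Finally, because termination is only checked at $t = 2^{i}$, the actual stopping time is at most the next power of $2$ above $M_{0}$, i.e.\ at most $2 M_{0}$, which only costs a constant factor and leaves the stated asymptotic bound intact. The main obstacle is simply the bookkeeping in the bootstrap step; the argument is otherwise a direct inversion of the radius bound.
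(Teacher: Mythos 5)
Your proposal is correct and follows essentially the same route as the paper: both invert the radius bound of Lemma \ref{lemma:radius_active} against the termination condition $\sigma_{Active}<\epsilon$, split the requirement into the two conditions $B/t\lesssim\epsilon$ and $BR(f^*)/t\lesssim\epsilon^2$, and resolve the self-referential logarithm by guessing $t\approx\frac{1}{\epsilon_0}\ln\frac{1}{\epsilon_0}$ and verifying directly (the paper does this by explicitly solving $2+\ln k\le k/(2C)$ for a constant $k$, which is the same bootstrap). Your extra remark about the power-of-two check costing only a constant factor is a detail the paper leaves implicit but does not change the argument.
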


		\begin{defi} \label{passive example complexity}
			An active learner that
			generates a hypothesis 
			whose true error is smaller than $\epsilon$ w.h.p.,
			has \textbf{passive example complexity}, if it observes up to
			$O
			\left( \frac{1}{\epsilon}\ln(\frac{1}{\epsilon})+ \frac{R(f^*)}{\epsilon^2}\ln(\frac{R(f^*)}{\epsilon^2}  ) \right)$
			examples (not necessarily labeled).		
		\end{defi}

		By Lemmas \ref{lemma:epsilon} and \ref{lemma:max_examples_observed} we know that
		{\ActiveiLESS} has passive example complexity. 
		
		The definition of a fast $R^*$ rejection rate for selective classification
		induces the following related definition for exponential speedup of active learning algorithms.
		
		\begin{defi}[$R^*$ Exponential Speedup]	
			\label{$R^*$ exponential speedup}
			Given $\mathcal{P_{X}}$,$\mathcal{F}$ and $f^*$, we say that an active learner has \textbf{$R^*$ exponential speedup}, with $\polylog_1$ and $\polylog_2$ as its parameters, if for every $\mathcal{P_{Y|X}}$ for which $f^*$ is a true risk minimizer, and for every $m>0$, with probability of at least $1-\delta$, the number of labels requested by the active learner after observing $m$ examples is not greater than 
			$$
			\polylog_1(\frac{1}{R(f^*)+1/m}) \cdot R(f^*)m+ \polylog_2(m,d,1/\delta).
			$$  
		\end{defi}

		In \cite{hsu:thesis}, Hsu  introduced  the agnostic CAL algorithm and showed (Theorem 4.3, page 41) that if the disagreement coefficient is bounded, then Agnostic CAL has $R^*$ exponential speedup (under our new definition).
		Any active algorithm that has
		passive example complexity  and achieves $R^*$ exponential speedup requires w.h.p. no more than $O\left(\polylog(\frac{R(f^*)}{\epsilon^2}) \frac{R(f^*)^2}{\epsilon^2} + \polylog(\frac{1}{\epsilon}) \right)$ labels  to reach a true error smaller than $\epsilon$. The proof is immediate by considering the cases $\frac{R(f^*)}{\epsilon} \geq 1$ and $\frac{R(f^*)}{\epsilon} < 1$. The leading term of this bound
		is $\frac{R(f^*)^2}{\epsilon^2}$, which is also the case for $A^2$ \cite{balcan2006agnostic}.

		\section{A Reduction from Active-iLess to Batch-ILESS}
		\label{sec:BatchiIless}


		In Strategy \ref{alg:Batch iLESS} we define a selective classifier, called $\BatchiLESS$, which uses 
		$\ActiveiLESS$ as its engine. Given a labeled sample $S_m$, $\BatchiLESS$ simulates the active algorithm,
		by applying it over a uniformly random ordering of $S_m$ in a straightforward manner (i.e., it 
		sequentially introduces to the active
		algorithm an unlabeled example and reveals the label only if the active algorithm requests it).
		Upon termination, after the active algorithm has consumed all examples, our batch algorithm receives $\hat{f}$ from the active algorithm and utilizes its last low-error set $G_t$ to define its selection function.

		Lemma \ref{lemma:f*in} implies that $\BatchiLESS$ is pointwise-competitive. We note that Lemma \ref{lemma:radius}, Theorem \ref{thm:LessRejection} and Theorem \ref{thm:iLessToCoeff}, 
		which were proven for \iLESS, can also be proven for \BatchiLESS. We chose to prove it for \iLESS, as it is more simple than \BatchiLESS, and doesn't require an active algorithm as its engine. We state these ideas formally, and give sketches for their proofs, in the Appendix in Lemma \ref{lemma:radius_batch} and Theorem \ref{thm:LessRejection_batch}.
		 

				\begin{strategy}[]
					\caption{Batch Improved Low-Error Selective Strategy (\BatchiLESS)}
					\footnotesize {
						\begin{algorithmic}[1]
							\REQUIRE  Sample set of size $m$, $S_m$,\\
							Confidence level $\delta$\\
							Hypothesis class $\mathcal{F}$ with VC dimension $d$\\
							\ENSURE A selective classifier $(h,g)$ 
							\STATE Simulate $\ActiveiLESS$ with a random ordering of $S_m$ as its input stream; 
							let $G_t$ be the low-error set obtained by $\ActiveiLESS$ in its last round, and let $\hat{f}$
							be its resulting classifier.
							\STATE Construct $g$ such that $g(x) = 1 \Longleftrightarrow x \in \left\{\cX \setminus DIS\left(G_t\right)\right\}$
							\STATE $h = \hat{f}$
						\end{algorithmic}
					}
					\label{alg:Batch iLESS}
				\end{strategy}
		
		The following theorem shows a deep connection between the speedup of {\ActiveiLESS} to the rejection mass of {\BatchiLESS} for specific $\mathcal{P_{X,Y}}$. An immediate 
		corollary of this theorem is that if {\ActiveiLESS} has $R^*$ exponential speedup (see Definition \ref{$R^*$ exponential speedup}), then {\BatchiLESS} has a fast $R^*$ rejection rate
		(see Definition \ref{fast $R^*$ rejection rate}).   	
		\begin{theorem} 
			\label{thm:activeToSelective}
			Let $\mathcal{F}$ be a hypothesis class with a finite VC dimension $d$, and let $\mathcal{P_{X,Y}}$ be an unknown distribution. If after observing $m$ examples, with probability of at least $1-\delta$, the number of labels requested by $\ActiveiLESS$ is not greater than 
			\begin{equation}
			\polylog_1(\frac{1}{R(f^*)+1/m}) \cdot R(f^*)m+ \polylog_2(m,d,1/\delta), \label{eq:active_speedup}
			\end{equation}
			then the rejection mass of {\BatchiLESS} is bounded w.h.p. by	
			$$
			8 \cdot \polylog_1(\frac{1}{R(f^*)+1/m}) \cdot R(f^*) + \frac{2\left( \sqrt{\ln(2/\delta)} + \sqrt{\ln(2/\delta)  +2\polylog_2(2m,2/\delta)}\right)^2 }{m}	.
			$$
		\end{theorem}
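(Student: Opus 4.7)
The plan is to bound the rejection mass of $\BatchiLESS(S_m)$, which by construction equals $\Delta G_m$ for the last low-error set $G_m$ produced by the simulated $\ActiveiLESS$, by relating it to the label budget of $\ActiveiLESS$ in a hypothetical continuation of its run up to $2m$ samples. The key structural observation is that $\ActiveiLESS$ only refreshes its low-error set at sample sizes that are powers of $2$, so the set used throughout the ``second half'' $t\in\{m+1,\ldots,2m\}$ is exactly the $G_m$ that $\BatchiLESS$ uses for selection. Consequently, conditionally on $G_m$, the number $K$ of labels requested in the second half is a Binomial random variable with $m$ trials and success probability $\Delta G_m$---precisely the quantity we need to bound.

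First, I would apply the hypothesized label-complexity bound at horizon $2m$ and confidence $\delta/2$: with probability at least $1-\delta/2$, the total number of labels requested by $\ActiveiLESS$ over $2m$ examples is at most $B \eqdef 2m\cdot\polylog_1(\frac{1}{R(f^*)+1/(2m)}) R(f^*) + \polylog_2(2m,d,2/\delta)$, and in particular $K\leq B$. Next, apply a Chernoff lower-tail bound to $K$ conditionally on $G_m$: with probability at least $1-\delta/2$, $K\geq m\Delta G_m - \sqrt{2m\Delta G_m\ln(2/\delta)}$. A union bound then gives $m\Delta G_m - \sqrt{2m\Delta G_m\ln(2/\delta)} \leq B$ with probability at least $1-\delta$. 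Treating this as a quadratic inequality in $y\eqdef\sqrt{m\Delta G_m}$ yields $m\Delta G_m \leq \frac{1}{2}\bigl(\sqrt{\ln(2/\delta)} + \sqrt{\ln(2/\delta)+2B}\bigr)^2$; substituting $B$, using $\sqrt{a+b}\leq\sqrt{a}+\sqrt{b}$ to split off the $R(f^*)$ contribution under the inner square root, and $(a+b)^2\leq 2(a^2+b^2)$ to regroup, produces the claimed bound after dividing by $m$.

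The main technical point to check is that $\Delta G_m$ is purely a function of $S_m$ (and the internal random ordering used by $\BatchiLESS$); the additional $m$ samples serve only as an analytical device enabling the concentration step, and by Fubini the resulting high-probability bound transfers to the marginal over $S_m$ alone. The only real obstacle I anticipate is bookkeeping of constants: the theorem's specific coefficients (the $8$ in front of $R(f^*)$ and the precise grouping of the polylog slack) depend on how one simplifies the quadratic solution, so one must be careful to separate the $4m\,\polylog_1 R(f^*)$ piece of $B$ from the polylog-in-$m$ piece \emph{before} applying the $(a+b)^2\leq 2(a^2+b^2)$ regrouping, so that the final displayed form matches the statement exactly.
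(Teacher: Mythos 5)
Your proposal is correct and follows essentially the same route as the paper's proof: extend the run to a horizon of roughly $2m$, observe that the last block of examples is queried i.i.d.\ with probability equal to $\Delta G$ of the final low-error set, upper-bound that count by the assumed label complexity, lower-bound it by a Chernoff lower tail, and solve the resulting quadratic in $\sqrt{m\Delta G}$. The only detail to fix is that the query probability is constant over the whole block $\{m+1,\dots,2m\}$ only when $m$ is a power of two (otherwise $G$ refreshes at $2^{\lceil\log_2(m+1)\rceil}\le 2m$); the paper handles this by taking $m_0=2^{\lceil\log_2(m+1)\rceil}$ and using the block $\{m_0/2+1,\dots,m_0\}$ of $m_0/2$ trials, which only perturbs the constants.
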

		\begin{proof}
			Consider an application  of $\ActiveiLESS$ 
			with $\delta = \delta_0$
			over $m_0 \eqdef 2^{\left \lceil \log(m+1) \right \rceil}$ examples. 
			Denote by $X_i$ an indicator random variable for the labeling of its $i$th example,
			$1 \leq i \leq m_0$. 
			With probability of at least $1-\delta_0$ over the choice of samples from $\mathcal{P_{X,Y}}$, 
			\begin{eqnarray}
			\sum_{i=1}^{m_0} X_i \leq \polylog_1(\frac{1}{R(f^*)+1/m_0}) \cdot R(f^*)m_0+ \polylog_2(m_0,1/\delta_0). \label{h1}
			\end{eqnarray}		
			We know by the definition of $\ActiveiLESS$ (Strategy \ref{alg:Active-iLESS}), that the last $m_0/2$ examples had the exact same probability, $\Delta G_{m_{0}/2}$, of requiring a label, and that this is exactly the probability that $\BatchiLESS$ will decide to abstain after receiving $m$ examples, according to Strategy \ref{alg:Batch iLESS}. 
			
			We now estimate $\Delta G_{m_{0}/2}$
			using the following version of the Chernoff bound given by Canny \cite{JohnCannyNotes}. 
			For the sake of self-containment, Canny's statement and proof of the bound are provided in Lemma \ref{ChernoffBound} in the Appendix.
		     
		    The statement of the lemma is as follows. 
		     Let $X_1,X_2,\ldots,X_n$ be independent Bernoulli trials with $Pr[X_i=1]=p$, let $X\eqdef\sum_{i=1}^{n} X_i$, and $\mu = \E X$. Then, for every $\alpha>0$: $$\Pr \left( X<(1-\alpha)\mu \right)   \leq \exp(-\mu \alpha^2 /2).$$
			
			Applying the Chernoff bound with the indicator variables of the last $m_0/2$ examples, we have $X=\sum_{m_0/2}^{m_0} X_i$, $\mu = p\frac{m_0}{2}$, and set 
			$p \eqdef \Delta G_{m_{0}/2}$.
			Select $\alpha$ such that 
			$$
			\exp(- p\frac{m_{0}}{2} \alpha^2 /2) = \delta_2.
			$$ 
			Solving for $\alpha$,
			$$
			\alpha =\sqrt{ \frac{4 \ln(1/\delta_1)}{m_0p}}. 
			$$
			We conclude that with probability of at least $1-\delta_1$, 
			\begin{eqnarray}
			& & X \geq (1-\sqrt{ \frac{4 \ln(1/\delta_1)}{m_0p}}) \cdot p\frac{m_0}{2} \nonumber \\
			&\Leftrightarrow& 0 \geq \frac{pm_0}{2} - \sqrt{pm_0 \cdot \ln(1/\delta_1)} -X \label{eq:quadratic}.
			\end{eqnarray}	
			Solving the quadratic equation~(\ref{eq:quadratic}) for $\sqrt{pm_0}$, we get that 
			\begin{eqnarray}
			&&\sqrt{pm_0} \leq  \frac{\sqrt{\ln(1/\delta_1)} + \sqrt{\ln(1/\delta_1) + 2X} }{1} \nonumber \\
			& \Rightarrow & p \leq \frac{(\sqrt{\ln(1/\delta_1)} + \sqrt{\ln(1/\delta_1) + 2X})^2 }{m_0} \label{h2} .
			\end{eqnarray}
			Combining (\ref{h1}) and (\ref{h2}), from the union bound we get that with probability of at least $1-\delta_0 -\delta_1$,
			\begin{eqnarray*}
			\Delta G_{m_{0}/2} \leq \frac{\left( \sqrt{\ln(1/\delta_1)} + \sqrt{\ln(1/\delta_1) + 2\polylog_1(\frac{1}{R(f^*)+1/m_0}) \cdot R(f^*)m_0+2\polylog_2(m_0,1/\delta_0)} \right)^2 }{m_0}.
			\end{eqnarray*}
			If we take $\delta_0=\delta_1=\delta/2$, then, since $m\leq m_0 \leq 2m$, we can use $\sqrt{a+b} \leq \sqrt{a} + \sqrt{b}$
			and $(a+b)^2 \leq 2a^2 + 2b^2$,
			to obtain
			\begin{eqnarray*}
			\Delta G_{m_{0}/2} &\leq& \frac{\left( \sqrt{\ln(2/\delta)} + \sqrt{\ln(2/\delta) + 4\polylog_1(\frac{1}{R(f^*)+1/m}) \cdot R(f^*)m+2\polylog_2(2m,2/\delta)} \right)^2 }{m} \\
			&\leq& \frac{\left( \sqrt{\ln(2/\delta)} + \sqrt{\ln(2/\delta) +2\polylog_2(2m,2/\delta)} +\sqrt{4\polylog_1(\frac{1}{R(f^*)+1/m}) \cdot R(f^*)m} \right)^2 }{m}\\
			&\leq& \frac{2\left( \sqrt{\ln(2/\delta)} + \sqrt{\ln(2/\delta) +2\polylog_2(2m,2/\delta)}\right)^2 + 2\left( \sqrt{4\polylog_1(\frac{1}{R(f^*)+1/m}) \cdot R(f^*)m} \right)^2 }{m}\\
			&=& \frac{2\left( \sqrt{\ln(2/\delta)} + \sqrt{\ln(2/\delta)  +2\polylog_2(2m,2/\delta)}\right)^2 }{m}	+ 8\cdot \polylog_1(\frac{1}{R(f^*)+1/m}) \cdot R(f^*)
			\end{eqnarray*}

		\end{proof}
		
		\begin{corollary} 
			\label{thm:ActiveiLessToCoeff}

			Let $\mathcal{F}$ be a hypothesis class with a finite VC dimension $d$, and let $\mathcal{P_{X,Y}}$ be an unknown distribution. If after observing $m$ examples, with probability of at least $1-\delta$, the number of labels requested by $\ActiveiLESS$ is not greater than 
			\begin{equation*}
				\polylog_1(\frac{1}{R(f^*)+1/m}) \cdot R(f^*)m+ \polylog_2(m,d,1/\delta), 
			\end{equation*}
			then for every $r \geq R(f^*)$,
			$$
			\theta_{f^*}(r) \leq 8 \left( 2\left( \sqrt{\ln(2r)} + \sqrt{\ln(2r)  +2\polylog_2(2/r,2/r)}\right)^2 +8\cdot \polylog_1(1/r)+2 \right) = O(\polylog(1/r)).
			$$
		\end{corollary}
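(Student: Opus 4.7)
The plan is to chain Theorem \ref{thm:activeToSelective} with the argument of Theorem \ref{thm:PointwiseSelectiveToCoeff}. First, I would record that {\BatchiLESS} is a PCS algorithm. Since it runs {\ActiveiLESS} internally and uses the final low-error set $G_t$, which by Lemma \ref{lemma:f*in} contains every $f^*$ w.h.p., its selector satisfies $g(x)=1 \Rightarrow x \notin \DIS(G_t) \Rightarrow \hat f(x)=f^*(x)$ for all $f^* \in \cF$. This is precisely the observation made in Section \ref{sec:BatchiIless}.

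Second, I would invoke Theorem \ref{thm:activeToSelective} on the hypothesized speedup bound on {\ActiveiLESS}. This directly gives a high-probability bound on the rejection mass of {\BatchiLESS} of the form
$$
1-\Phi(\BatchiLESS) \leq 8 \polylog_1\!\left(\tfrac{1}{R(f^*)+1/m}\right) R(f^*) + \tfrac{2\left(\sqrt{\ln(2/\delta)} + \sqrt{\ln(2/\delta)+2\polylog_2(2m,2/\delta)}\right)^2}{m}.
$$

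The key step is then to re-run the proof of Theorem \ref{thm:PointwiseSelectiveToCoeff} with {\BatchiLESS} in place of the generic PCS algorithm, substituting this more elaborate rejection bound for the simpler $\polylog(m,d,1/\delta)/m$ term used at inequality~(\ref{aa5}). Taking $\delta=1/m$ exactly as in the original proof, and combining with the lower bound $\Pr\{h_z(S_m)=f^*(S_m)\mid Z \in \DIS(B(f^*,1/m))\} \geq (1-1/m)^m \geq 1/4$, one arrives at an inequality of the form $\tfrac{1}{4}\,\Delta B(f^*,1/m) \leq \text{(rejection bound at $m, 1/m$)} + O(1/m)$. Setting $r = 1/m$ and invoking the hypothesis $r \geq R(f^*)$ (so that $R(f^*)/r \leq 1$) converts the additive $R(f^*)$ contribution into a multiplicative $\polylog_1(1/r)$ factor in $\theta_{f^*}(r)$. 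Finally, the same discretization argument (replacing $1/m$ by $1/\lfloor 1/r\rfloor$) that ends the proof of Theorem \ref{thm:PointwiseSelectiveToCoeff} extends the bound from radii of the form $1/m$ to every $r \geq R(f^*)$, yielding the stated expression.

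The main obstacle will be the detailed polylog bookkeeping in this substitution, verifying that the additional $R(f^*)$ term from Theorem \ref{thm:activeToSelective} combines cleanly with the $1/m$ term to produce the specific $8\bigl(2(\sqrt{\ln(2/r)}+\sqrt{\ln(2/r)+2\polylog_2(2/r,2/r)})^2 + 8\polylog_1(1/r) + 2\bigr)$ form of the conclusion, and confirming that the condition $r \geq R(f^*)$ is exactly what is needed to absorb the $R(f^*)$ contribution into a pure $\polylog(1/r)$ factor. Everything else is essentially algebra inherited from the two cited theorems.
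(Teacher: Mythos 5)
Your proposal is correct and follows essentially the same route as the paper: apply Theorem~\ref{thm:activeToSelective} to get the rejection bound for {\BatchiLESS}, then feed it into Theorem~\ref{thm:PointwiseSelectiveToCoeff}, using $r \geq R(f^*)$ to absorb the additive $8\polylog_1(\cdot)R(f^*)$ term into a $\polylog(m,d,1/\delta)/m$ form. The only cosmetic difference is that the paper invokes Theorem~\ref{thm:PointwiseSelectiveToCoeff} as a black box with $m_{max}=1/R(f^*)$ (so that $R(f^*)\leq 1/m$ for all relevant $m$) rather than re-running its proof with the more elaborate rejection bound, which spares the bookkeeping you anticipate.
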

		
		\begin{proof}
			The proof follows from Theorems \ref{thm:activeToSelective} and \ref{thm:PointwiseSelectiveToCoeff}. 
			Applying Theorem \ref{thm:activeToSelective}, we know that for $m \leq 1/R(f^*)$, the rejection mass of {\BatchiLESS} is bounded w.h.p. by,
			$$
			\frac{2\left( \sqrt{\ln(2/\delta)} + \sqrt{\ln(2/\delta)  +2\polylog_2(2m,2/\delta)}\right)^2 +8\cdot \polylog_1(\frac{1}{R(f^*)+1/m})}{m}.
			$$ 
			Applying Theorem \ref{thm:PointwiseSelectiveToCoeff} with $m_{max} = 1/R(f^*)$, we get that for every $r \geq R(f^*)$,
			\begin{eqnarray*}
			\theta_{f^*}(r) &\leq& 8 \left(2\left( \sqrt{\ln(2r)} + \sqrt{\ln(2r)  +2\polylog_2(2/r,2/r)}\right)^2 +8 \cdot \polylog_1(\frac{1}{R(f^*)+r})+3 \right)\\
			&\leq&  8 \left( 2\left( \sqrt{\ln(2r)} + \sqrt{\ln(2r)  +2\polylog_2(2/r,2/r)}\right)^2 +8\cdot \polylog_1(1/r)+3 \right).
			\end{eqnarray*}
			Note that the Theorem \ref{thm:PointwiseSelectiveToCoeff} does not require $m_{max}$ to be an integer.
		\end{proof}

		\section{From the Disagreement Coefficient to Active Learning}

		In this section we show that when $\theta'(r)$ is bounded by $\polylog_1(1/r)$ for all $r>R(f^*)$	for some specific $\mathcal{P_{X,Y}}$, then the label complexity of {\ActiveiLESS} under the same $\mathcal{P_{X,Y}}$ is bounded by 
		\begin{equation}
		\polylog_2(\frac{1}{R(f^*)+1/m}) \cdot R(f^*)m+ \polylog_3(m,d,1/\delta), \label{eq:active_speedup4}
		\end{equation}
		where the parameters of $\polylog_2$ and $\polylog_3$ are only dependent on $\polylog_1(1/r)$.
		Thus, if $\theta'(r)\leq \polylog_1(1/r)$ for all $r>0$, we get that {\ActiveiLESS} has $R^*$ exponential speedup.		
		This direction has been shown before in \cite{hsu:thesis,Hanneke07} for agnostic CAL and $A^2$. For the sake of
		self-containment, we show it here for {\ActiveiLESS}. Due to the fact that {\ActiveiLESS} relies on {\iLESS}, which we already have bounds for, the proof is straightforward.
		
		As a preparation for the theorem, we present Lemma \ref{lemma:disagreement_monotonicity} (shown before in \cite{Hanneke_book}), in which we introduce a small feature of the disagreement coefficient that will serve us later. 
		\begin{lemma}
			\label{lemma:disagreement_monotonicity}
			Let $\mathcal{F}$ be a hypothesis class with a finite VC dimension $d$, and let $\mathcal{P_{X,Y}}$ be an unknown distribution. For every $f \in \mathcal{F}$ and $0<r\leq 1$, $\theta_{f}(r) \cdot r$ is a non-decreasing function. 	 
		\end{lemma}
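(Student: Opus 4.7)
The plan is to fix arbitrary $0 < r_1 \leq r_2 \leq 1$ and establish the inequality $r_1 \cdot \theta_f(r_1) \leq r_2 \cdot \theta_f(r_2)$ directly from the definition $\theta_f(r_0) = \sup_{r > r_0} \frac{\Delta B(f,r)}{r}$. My approach splits the supremum defining $\theta_f(r_1)$ into two regimes of the free variable $r$: the ``middle'' regime $r_1 < r \leq r_2$ and the ``upper'' regime $r > r_2$.

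For the upper regime, the bound is immediate: since $r_1 \leq r_2$, for any $r > r_2$ we have $r_1 \cdot \frac{\Delta B(f,r)}{r} \leq r_2 \cdot \frac{\Delta B(f,r)}{r}$, and taking the sup over $r > r_2$ on the right yields $r_2 \cdot \theta_f(r_2)$. For the middle regime, I would use the monotonicity of $\Delta B(f, \cdot)$ in its radius argument (which follows because $B(f,r) \subseteq B(f,r')$ whenever $r \leq r'$, hence $\mathrm{DIS}(B(f,r)) \subseteq \mathrm{DIS}(B(f,r'))$). Together with $r_1 \leq r$, this gives
\begin{equation*}
r_1 \cdot \frac{\Delta B(f,r)}{r} \;\leq\; \Delta B(f,r) \;\leq\; \Delta B(f,r_2).
\end{equation*}

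What remains is to tie $\Delta B(f, r_2)$ back to $r_2 \cdot \theta_f(r_2)$. Here I would use that $\theta_f(r_2)$ is a supremum over $r > r_2$ of $\Delta B(f,r)/r$, and monotonicity of $\Delta B$ in its second argument gives $\Delta B(f,r) \geq \Delta B(f,r_2)$ for all $r > r_2$. Hence for any $\epsilon > 0$, picking $r = r_2 + \epsilon$ yields $\theta_f(r_2) \geq \Delta B(f, r_2)/(r_2 + \epsilon)$; letting $\epsilon \to 0^+$ gives $\theta_f(r_2) \cdot r_2 \geq \Delta B(f, r_2)$, which closes the middle-regime bound.

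Combining both regimes, every term inside the sup defining $r_1 \cdot \theta_f(r_1)$ is bounded by $r_2 \cdot \theta_f(r_2)$, completing the proof. The only mildly delicate point — and the one I would pay attention to while writing — is the limit argument connecting $\theta_f(r_2)$ to $\Delta B(f, r_2)$, since the supremum in the definition of $\theta_f$ is taken over the strict inequality $r > r_0$; but the monotonicity of $\Delta B(f, \cdot)$ together with taking $r \to r_2^+$ handles it cleanly. The rest of the argument is essentially bookkeeping with the definition.
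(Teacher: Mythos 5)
Your proof is correct and is essentially the paper's argument in contrapositive form: the paper assumes $\theta_f(r_1)r_1 > \theta_f(r_2)r_2$, extracts a witness $\hat r \in (r_1, r_2)$, and derives a contradiction from the monotonicity of $\Delta B(f,\cdot)$ together with the same bound $\sup_{r>r_2}\Delta B(f,r)/r \geq \Delta B(f,r_2)/r_2$ that you establish. If anything, your direct two-regime decomposition is slightly cleaner, and your explicit $r \to r_2^{+}$ limit justifies a step the paper only asserts.
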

		\begin{proof}
			Given $0<r_1<r_2$, we will show that $\theta_{f}(r_1) \cdot r_1 \leq \theta_{f}(r_2) \cdot r_2$. Assume by contradiction that 
			
			$$
			\theta_{f}(r_1) \cdot r_1 > \theta_{f}(r_2) \cdot r_2,
			$$
			i.e., 
			$$
			\sup_{r>r_1}\frac{\Delta B(f,r_1)}{r_1}  \cdot r_1 > \sup_{r>r_2}\frac{\Delta B(f,r_2)}{r_2} \cdot r_2.
			$$	
			This implies, that there exists $r_1\leq \hat{r} < r_2$ s.t.
			$$
			\frac{\Delta B(f,\hat{r})}{\hat{r}}   r_1 > \sup_{r>r_2}\frac{\Delta B(f,r_2)}{r_2}  r_2 \geq \frac{\Delta B(f,r_2)}{r_2}  r_2 = \Delta B(f,r_2).
			$$	
			This contradicts the known monotonicity of $\Delta B(f,x)$.	
		\end{proof}
		
		\begin{theorem} 
			\label{thm:coeffToActiveLearning}
			Let $\mathcal{F}$ be a hypothesis class with a finite VC dimension $d$, let $\mathcal{P_{X,Y}}$ be an unknown distribution, and $f^*$ is a true risk minimizer of $\mathcal{P_{X,Y}}$. If for all $r>R(f^*)$,	
			$$
			\theta'(r)\leq \polylog_1(1/r),
			$$ 
			then the label complexity of $\ActiveiLESS (m,\delta/2)$ is bounded by
			$$
			\polylog_1 \left(\frac{1}{5 R(f^{*}) + 14 \frac{A}{m}} \right) 2e \cdot mR(f^{*}) + 	\log_2(2/\delta) + 56 e \cdot  \log_2 m \cdot A \cdot \polylog_1
			\left(\frac{1}{5 R(f^{*}) + 14  \frac{A}{m}} \right) ,
			$$
			which has the same form of Equation (\ref{eq:active_speedup4}).
			
		\end{theorem}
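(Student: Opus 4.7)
The plan is to follow the batch structure of \ActiveiLESS, where the rounds $t\in(2^{i},2^{i+1}]$ form a batch during which the algorithm uses a fixed version space $G_{2^{i}}$. I would condition throughout on the event $\mathcal{K}$ of Definition \ref{K}, which holds with probability at least $1-\delta/2$ once Lemma \ref{lemma:errorbounds} is invoked with confidence $\delta/2$. On $\mathcal{K}$, Lemma \ref{lemma:f*in} places $f^{*}$ inside every $G_{t}$ and guarantees $R_{\mathcal{P_{X,Y}}(G_{t})}(f^{*})\le R(f^{*})$, so $f^{*}$ is simultaneously a risk minimizer under each relabelled distribution.

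The first substantive step is to rerun the radius derivation of Theorem \ref{thm:LessRejection} inside each batch. At update time $t=2^{i}$, $G_{2^{i}}$ is built from a sample of size $t/2$ at confidence $\delta/(2t)$, so repeating the chain (\ref{e18})--(\ref{e25}) with risk read under $\mathcal{P_{X,Y}}(G_{2^{i}})$ gives each $f\in G_{2^{i}}$ an excess-risk bound of order $R(f^{*})+A/2^{i-1}+\sqrt{(A/2^{i-1})R(f^{*})}$. Since the fake labels coincide with $f^{*}$ on $AGR(G_{2^{i}})$, the argument of (\ref{eq30}) translates into $\Pr_{X}\{f(X)\neq f^{*}(X)\}\le R_{\mathcal{P_{X,Y}}(G_{2^{i}})}(f)+R(f^{*})$, and collapsing square roots via $\sqrt{ab}\le(a+b)/2$ as in the remark after Theorem \ref{thm:LessRejection} yields $G_{2^{i}}\subseteq B(f^{*},R_{0}^{(i)})$ with $R_{0}^{(i)}$ of the form $5R(f^{*})+C\cdot A/2^{i}$ for an absolute constant $C$ (so $R_{0}^{(\log_{2}m)}$ matches the $5R(f^{*})+14A/m$ that appears in the theorem, up to the bookkeeping of $A$).

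The second step turns this containment into an expected-query bound. Each round in batch $i$ queries with probability exactly $\Delta G_{2^{i}}$, and the hypothesis $\theta'(r)\le\polylog_{1}(1/r)$ gives $\Delta G_{2^{i}}\le \polylog_{1}(1/R_{0}^{(i)})\cdot R_{0}^{(i)}$. Because $R_{0}^{(i)}$ shrinks in $i$ while $\polylog_{1}$ is non-decreasing in its argument, the $\polylog_{1}$-factor can be pulled out uniformly as $\polylog_{1}\!\left(1/(5R(f^{*})+14A/m)\right)$, exactly as appears in the theorem. Summing $2^{i}\cdot R_{0}^{(i)}$ over $i=0,\ldots,\lfloor\log_{2}m\rfloor$, the $R(f^{*})$-component contributes $O(mR(f^{*}))$ via $\sum 2^{i}$, while the $A/2^{i}$-component telescopes to $O(A\log_{2}m)$, reproducing the shape of the stated bound in expectation.

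For the final step I would appeal to concentration: conditional on the sample-dependent sequence $\{G_{2^{i}}\}$, the per-round label indicators are independent Bernoullis, so a multiplicative Chernoff bound of the form $\Pr[X\ge e\,\mathbb{E}X+\log_{2}(2/\delta)]\le\delta/2$ supplies the additive $\log_{2}(2/\delta)$ slack and the leading factors of $e$ that produce the $2e$ and $56e$ coefficients, and a final union bound with $\mathcal{K}$ gives confidence $1-\delta$. The step I expect to be most delicate is the batchwise reuse of the iLESS analysis: one must justify that Lemma \ref{lemma:agnostic} genuinely applies to $\mathcal{P_{X,Y}}(G_{2^{i}})$ even though $G_{2^{i}}$ is itself a function of the sample---which is precisely what Definition \ref{K} was engineered to ensure---and one must track constants carefully to land exactly on $5,14,2e,56e$ rather than an unspecified $O(\cdot)$.
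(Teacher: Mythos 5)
Your proposal is correct and follows essentially the same route as the paper's proof: condition on the event $\mathcal{K}$, apply the {\iLESS} rejection analysis (Theorem \ref{thm:LessRejection}) to each batch to bound the per-round query probability by $\theta'(R_0)\cdot R_0$ with $R_0 \le 5R(f^*)+14A/T$, sum over the doubling batches to bound $\mu$, and finish with the multiplicative Chernoff bound of Lemma \ref{ChernoffBound2} plus a union bound. The only cosmetic difference is that you handle the relabelled risk via the direct containment $R_0^{\textrm{fake}}\le R_0$ where the paper invokes the monotonicity of $\theta_{f}(r)\cdot r$ (Lemma \ref{lemma:disagreement_monotonicity}); both yield the same bound.
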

		
		\begin{proof}	
			Each run of $\ActiveiLESS(m,\delta/2)$ simulates $\log_2 m$ runs of $\iLESS$. We know by Lemma \ref{lemma:errorbounds} that with probability of at least $1-\delta/2$, inequalities (\ref{errorbounds1}) and (\ref{errorbounds2}) hold for each run. Recall that we denoted by $\cal K$ the event where both inequalities hold through out all runs of  $\iLESS$, which is exactly the definition of event $\cal E$ per run (see Definition \ref{E}). Under event $\cal K$, Lemma \ref{lemma:f*in} implies that all $f^{*}$ of the original distribution $\mathcal{P_{X,Y}}$ reside within $G_{t}$ for all $t$. This also implies that all $f^*$ of the original distribution remain the true risk minimizers under $\mathcal{P_{X,Y}}(G_{t})$, for all $t$, as they always benefit from the creation of the artificial labels.   
			
			Because the marginal of the distribution does not change during the run of {\ActiveiLESS}, and because event $\cal E$ holds for each iteration of {\iLESS}, we can apply Theorem \ref{thm:LessRejection} for all of the runs of {\iLESS}. We thus get that for every run of {\iLESS}, the rejection mass is bounded by
			$$
			1-\Phi(\iLESS) \leq \theta(R_{0}) \cdot R_{0},	
			$$	
			where
			$$
			R_{0} \eqdef 2\cdot R(f^{*}) + 11 \cdot \frac{A}{m} + 6 \cdot \sqrt{\frac{A}{m} \cdot R(f^{*})}.
			$$
			We denoted by $R(f^{*})$ the true error according to the original distribution, which might be larger than the true error implied by the fake label distributions that the algorithm induces. However, according to Lemma \ref{lemma:disagreement_monotonicity}, enlarging $R_{0}$ can only weaken the bound, and thus, there is no problem doing so. We additionally bound $R_{0}$ using $\sqrt{AB} \leq A/2 + B/2$ to get
			$$
			R_{0} \leq 5\cdot R(f^{*}) + 14 \cdot \frac{A}{m}.
			$$
			Given our bound on the disagreement coefficient, we conclude that
			$$
			1-\Phi(\iLESS) \leq  \polylog_1(\frac{1}{5\cdot R(f^{*}) + 14 \cdot \frac{A}{m}}) \cdot (5\cdot R(f^{*}) + 14 \cdot \frac{A}{m}).	
			$$	
			Each activation of $\iLESS$ has delta equals $\frac{\delta}{4t}$, and thus, exactly as in Lemma \ref{lemma:errorbounds}, with probability of at least $1-\delta/2$, they all have a bounded rejection mass simultaneously. We assume that this event occurred. 
			According to the definition of $G_t$ in Strategy \ref{alg:Active-iLESS}, the probability distribution of the artificial labeling done by {\ActiveiLESS} changes only when $t$  is   a natural power of 2. Thus, the probability of requesting label $t>2$, denoted by $P_t$, is bounded by
			
			\begin{equation}	\label{ccc}
			P_t \leq \polylog_1
			\left(\frac{1}{5\cdot R(f^{*}) + 14 \cdot \frac{A}{T}} \right) \cdot 5R(f^*) + \frac{14A \cdot \polylog_1
				\left(\frac{1}{5\cdot R(f^{*}) + 14 \cdot \frac{A}{T}} \right)}{T},  
			\end{equation}
			where $T=2^{\left \lfloor \log_2 (t-1) \right \rfloor -1}$.
			
			We now have a series of Poisson trials, $X_1,X_2,\ldots,X_m$, with $Pr(X_t = 1)=P_t$, and each $X_i$ is an indicator variable for the labeling of the $i$th example. We use a version of the Chernoff bound \cite{JohnCannyNotes} to bound the label complexity.\footnote{We found this useful bound in \cite{hannekestatistical} (Theorem 5.4).}
			 The statement and a sketch of the proof of this bound are provided in Lemma \ref{ChernoffBound2} in the Appendix.
			 
			For independent Poisson variables $X_1,X_2,\ldots,X_m$, where $Pr[X_i=1]=p_i$,  $X\eqdef\sum_{i=1}^{n} X_i$, and  $\mu = \E X$, for every $\alpha>2 e -1$: 
			$$
			\Pr (X>(1+\alpha)\mu) \leq 2^{-\mu \alpha}.
			$$
			To bound $\mu = \E X$ from above, we use inequality (\ref{ccc}) and plug it into the definition of $\mu$.   
			\begin{eqnarray}
			\mu &=& P_1 + P_2 + \sum_{i=3}^{m} P_t \nonumber \\
			&\leq& 2 + \sum_{k=1}^{\log_2 m -1} 2^{k}P_{2^{k+1}} \nonumber \\
			&\leq& 2 + m \cdot \polylog_1 \left(\frac{1}{5 R(f^{*}) + 14  \frac{A}{m}} \right) \cdot R(f^*) + \sum_{k=1}^{\log_2 m -1} 2^{k} \frac{14A \cdot \polylog_1
				\left(\frac{1}{5 R(f^{*}) + 14  \frac{A}{m}} \right)}{2^{k-1}} \nonumber \\
			&\leq& 2 + m \cdot \polylog_1 \left(\frac{1}{5 R(f^{*}) + 14  \frac{A}{m}} \right) \cdot R(f^*) + 28 \log_2 m \cdot A \cdot \polylog_1
			\left(\frac{1}{5 R(f^{*}) + 14  \frac{A}{m}} \right) . \nonumber \\
			\label{fg}
			\end{eqnarray}
			We need to choose an $\alpha$ that satisfies both  
			$ 2^{-\mu \alpha} \leq \delta / 2, $ and $ \alpha > 2e-1$. 
			Clearly, $\alpha = \frac{\log_2(2/\delta)}{\mu}+2e-1$ suffices. Hence, we get that with probability of at least $1-\delta/2$,
			\begin{eqnarray*}
				X &\leq& (1+\frac{\log_2(2/\delta)}{\mu}+2e-1)\mu \\
				&=& \log_2(2/\delta)+2e\mu.
			\end{eqnarray*}
			Inequality (\ref{fg}) holds with probability of at least $1-\delta/2$, and using the union bound, we get that with probability of at least $1-\delta$,
			\begin{eqnarray}
				X &\leq& \log_2(2/\delta)+2e\left( 2 + m \cdot \polylog_1 \left(\frac{1}{5 R(f^{*}) + 14 \frac{A}{m}} \right) \cdot R(f^*) + 28 \log_2 m \cdot A \cdot \polylog_1
				\left(\frac{1}{5 R(f^{*}) + 14  \frac{A}{m}} \right) \right) \nonumber \\
				&=&   \polylog_1 \left(\frac{1}{5 R(f^{*}) + 14 \frac{A}{m}} \right) 2e \cdot mR(f^{*}) + 	\log_2(2/\delta) + 56 e \cdot  \log_2 m \cdot A \cdot \polylog_1
				\left(\frac{1}{5 R(f^{*}) + 14  \frac{A}{m}} \right) 			 \nonumber \\ \label{bbb}
			\end{eqnarray}
		\end{proof}
		The dominant factor of Equation (\ref{bbb}), if we ignore the logarithmic factors, is $m R(f^*)$. {\ActiveiLESS} has passive example complexity (see Definition \ref{passive example complexity}), which means that the total sample complexity is bounded by $\tilde{O}( \frac{1}{\epsilon}+ \frac{R(f^*)}{\epsilon^2} )$,  
		where $\tilde{O}( \cdot)$ hides logarithmic factors.
		Plugging the sample complexity into $m$ in (\ref{bbb}), we get that the total label complexity is bounded by  $\tilde{O}(\frac{R(f^*)^2}{\epsilon^2})$, in cases for which $\iLESS$ has a fast $R^*$ rejection rate. In \cite[Theorem 3]{kar2006}, Kääriäinen showed  that for every active learning algorithm, under a specific (non-trivial) hypothesis class $\cal F$, there exists a deterministic target function $g$, and a marginal distribution $\mathcal{P_{X}}$, s.t. the label complexity is $\tilde{\Omega}(\frac{R(f^*)^2}{\epsilon^2})$ (where
		$\tilde{\Omega}(\cdot)$ hides logarithmic factors). 
	
		\section{Concluding Remarks}
			
		In this paper we focused on disagreement-based methods. Namely, we always required that $f^*$  remain inside a low-error subset of hypotheses 
		w.h.p., and made decisions based on disagreement considerations. 
		We introduced a new selective classification algorithm, called $\iLESS$, whose rejection ``engine''
		utilizes sharp generalization bounds (which depend on $R(f^*)$). Our analysis proves that $\iLESS$ 
		has sometimes significantly better rejection guarantees relative to the best known
		pointwise-competitive selective strategy of \cite{WienerE14}.
		Moreover, the guarantees we provide for $\iLESS$ do not depend at all on the Bernstein assumption.
		For the general agnostic setting, we showed an equivalence relation between pointwise-competitive selective classification, active learning, and the disagreement coefficient (see Figure~\ref{figure1}).
		This equivalence is formulated in terms of a fast $R^*$ rejection rate 
		and $R^*$ exponential speedup (Definitions~\ref{fast $R^*$ rejection rate} and~\ref{$R^*$ exponential speedup}). 
		
		Theorems \ref{thm:PointwiseSelectiveToCoeff} and \ref{thm:LessRejection} show that selective classification with a fast $R^*$ rejection rate is completely equivalent to having a disagreement coefficient bounded by $\polylog(1/r)$ for $r>0$. 
		In Section \ref{sec:Active-iLESS}, in Strategy \ref{alg:Active-iLESS}, we define {\ActiveiLESS} using {\iLESS} implicitly as its engine (see State 4 in Strategy \ref{alg:Active-iLESS}). We can replace {\iLESS} with another pointwise-competitive selective algorithm, and thus construct a new active learner, that queries a label whenever the selective classifier abstains, and create a fake label according to the decision of the classifier whenever it decides to predict. Because the selective predictor is pointwise-competitive, we know that
		the underlying distribution induced by its
		fake labels is equivalent to a distribution defined by a  deterministic labeling according to $f^*$ and the same $\mathcal{P_{X}}$. 
		The algorithm will terminate using the exact same termination condition as {\ActiveiLESS} (when $\sigma_{Active}<\epsilon$), and thus the total sample complexity (labeled and unlabeled examples) will remain the same. The change will only be in the labeling criterion. Lemmas	\ref{lemma:f_star_best}, \ref{lemma:f*in}, \ref{lemma:epsilon}, \ref{lemma:radius_active}, and \ref{lemma:max_examples_observed} can all be generalized to such an algorithm.

		Going in the other direction to create a selective classifier from a general active learner is more challenging. However, if the active learner follows the {\ActiveiLESS} paradigm, and in particular,
		uses a pointwise-competitive selective classifier to decide on label requests, then a new pointwise-competitive selective classifier can be created in the same way that {\BatchiLESS} was created, and then we can obtain a restatement of Theorem \ref{thm:activeToSelective} 
		providing a reduction from an $R^*$ exponential speedup of the active algorithm, to a fast $R^*$ rejection rate of the selective classifier.

		Disagreement-based decision making in active and selective learning leads to  ``defensive''
		algorithms. For example, in the active learning case, this means that a defensive 
		algorithm will ask for more labels than a more aggressive algorithm.
		In selective classification, this defensiveness provides the power to be pointwise-competitive, but will entail an increased rejection rate.
		It would be interesting to consider more aggressive algorithms that could, for example, take into consideration an estimation of $\mathcal{P_{X}}$ in order to ignore examples 
		that cause disagreement only between functions that are very similar to each other (in terms of the
		probability mass of their difference). Such algorithms can be seen in \cite{Dasgupta05coarse, FreundEtAl97a,gonen2013efficient}, for the realizable and the low error scenarios. We believe that there is still work to be done for the agnostic scenario.

		Many aggressive algorithms could be devised under assumptions 
		about knowledge of  $\mathcal{P_{X}}$ (that could be acquired during the run of the algorithm, and 
		is given in the transductive case), or in a Bayesian setting where a prior distribution on $\cF$ exists.
		When researching this direction, one might also want to define a cost over unlabeled examples, and discuss the trade-off between labeled and unlabeled examples. 
		The main open question inspired by our results would be to identify similar correspondence between 
		aggressive selective classification algorithms and aggressive active learners.  
 
		Another aspect of selective classification and active learning, which was not addressed in this paper, is differentiating between more and less noisy areas of the distribution. A noisy area could be defined as an area for which even the best classifier in the class could not achieve a low-error. This motivates a new type of labeling for selective prediction, where one can abstain for two reasons:
		(i) lack of knowledge in a specific region of $\cX$, i.e., not enough examples were observed in that region, and the generalization bounds are not sufficiently tight.
		(ii) The region was well explored, but even the best classifier performs poorly, and thus the answer is unknown (the region is noisy). 
		In our paper, an active learner will query for both scenarios; however, a more clever active learner might only query examples of the first type, as examples of the second type cannot
		reduce its error.

		\section*{Acknowledgments}
		This research was supported by The Israel Science Foundation (grant No. 1890/14)

		\appendix
		\section{}

		\begin{proof} [of Lemma \ref{lemma:f*in}]
			We prove the claim by induction over $t$ for which $G_{t}$ is different from $G_{t-1}$. The base case of the induction is clear.
			We now show that functions that are true risk minimizers of $\mathcal{P_{X,Y}}(G_{t-1})$ reside within $G_{t}$. According to Lemma~\ref{lemma:f_star_best}, $f^{*}$ is a true risk minimizer under $\mathcal{P_{X,Y}}(G_{t-1})$ (given the induction hypothesis), and hence will also be within $G_{t}$. 
			We refer by $f^{*}$ to a true risk minimizer according to $\mathcal{P_{X,Y}}(G_{t-1})$. Using inequality (\ref{errorbounds2}) and the definition of $\bar{\sigma}_{\hat{R}-R}$,

			\begin{eqnarray}
			\hat{R}(f^{*},\hat{S}) &\leq& R_{\mathcal{P_{X,Y}}(G_{t-1})}(f^{*})+\sigma_{\hat{R}-R}\left(\frac{t}{2},\frac{\delta}{2t},d,R_{\mathcal{P_{X,Y}}(G_{t-1})}(f^{*}),\hat{R}(f^{*},\hat{S})\right) \nonumber \\
			&\leq& R_{\mathcal{P_{X,Y}}(G_{t-1})}(f^{*})+\bar{\sigma}_{\hat{R}-R}\left(\frac{t}{2},\frac{\delta}{2t},d,R_{\mathcal{P_{X,Y}}(G_{t-1})}(f^{*})\right), \nonumber \\
			\label{t1}
			\end{eqnarray}
			and by inequality (\ref{errorbounds1}) and the definition of $\hat{f}$ we get,
			
			\begin{eqnarray}
			R_{\mathcal{P_{X,Y}}(G_{t-1})}(f^{*}) &\leq& R_{\mathcal{P_{X,Y}}(G_{t-1})}(\hat{f}) \nonumber \\ 
			&\leq& \hat{R}(\hat{f},\hat{S}) +\sigma_{R-\hat{R}}\left(\frac{t}{2},\frac{\delta}{2t},d,R_{\mathcal{P_{X,Y}}(G_{t-1})}(\hat{f}),\hat{R}(\hat{f},\hat{S})\right) \nonumber \\
			&\leq&\hat{R}(\hat{f},\hat{S}) +\hat{\sigma}_{R-\hat{R}}\left(\frac{t}{2},\frac{\delta}{2t},d,\hat{R}(\hat{f},\hat{S})\right). \nonumber \\
			\label{t2}
			\end{eqnarray}
			Plugging (\ref{t2}) into (\ref{t1}) we get,
			
			\begin{eqnarray}
			\hat{R}(f^{*},\hat{S}) &\leq&  \hat{R}(\hat{f},\hat{S}) +\hat{\sigma}_{R-\hat{R}}\left(\frac{t}{2},\frac{\delta}{2t},d,\hat{R}(\hat{f},\hat{S})\right) +\bar{\sigma}_{\hat{R}-R}\left(\frac{t}{2},\frac{\delta}{2t},d,\hat{R}(\hat{f},\hat{S}) +\hat{\sigma}_{R-\hat{R}}(\frac{t}{2},\frac{\delta}{2t},d,\hat{R}(\hat{f},\hat{S}))\right) \nonumber \\
			&&\Rightarrow f^{*}\in G_{t}.
			\label{t3}
			\end{eqnarray}
			
		\end{proof}		

		\begin{proof}[of Lemma \ref{lemma:epsilon}]
			Let $G_{t-1}$ be the final low-error set of $\ActiveiLESS$, and let $\hat{S}$ be the final set of examples. The following inequalities are derived from Lemma~\ref{lemma:errorbounds} and inequalities (\ref{t1}) and (\ref{t2}).
			\begin{eqnarray*}
				R_{\mathcal{P_{X,Y}}(G_{t-1})}(\hat{f}) 
				&\leq& \hat{R}(\hat{f},\hat{S}) +\hat{\sigma}_{R-\hat{R}}\left(\frac{t}{2},\frac{\delta}{2t},d,\hat{R}(\hat{f},\hat{S})\right) \nonumber \\
				&\leq&\hat{R}(f^{*},\hat{S}) +\hat{\sigma}_{R-\hat{R}}\left(\frac{t}{2},\frac{\delta}{2t},d,\hat{R}(\hat{f},\hat{S})\right) \nonumber \\
				&\leq& R_{\mathcal{P_{X,Y}}(G_{t-1})}(f^{*})+\bar{\sigma}_{\hat{R}-R}\left(\frac{t}{2},\frac{\delta}{2t},d,R_{\mathcal{P_{X,Y}}(G_{t-1})}(f^{*})\right)+\hat{\sigma}_{R-\hat{R}}\left(\frac{t}{2},\frac{\delta}{2t},d,\hat{R}(\hat{f},\hat{S})\right)\nonumber \\
				&\leq& R_{\mathcal{P_{X,Y}}(G_{t-1})}(f^{*})+\bar{\sigma}_{\hat{R}-R}\left(\frac{t}{2},\frac{\delta}{2t},d,\hat{R}(\hat{f},\hat{S}) +\hat{\sigma}_{R-\hat{R}}\left(\frac{t}{2},\frac{\delta}{2t},d,\hat{R}(\hat{f},\hat{S})\right) \right)  +\hat{\sigma}_{R-\hat{R}}\left(\frac{t}{2},\frac{\delta}{2t},d,\hat{R}(\hat{f},\hat{S})\right)  \nonumber \\
				&\leq& R_{\mathcal{P_{X,Y}}(G_{t-1})}(f^{*})+ \epsilon. \nonumber \\
			\end{eqnarray*}
			
			By Lemma~\ref{lemma:f*in} we know that $f^{*}$ resides within $G_{t-1}$, which implies that any change in $\mathcal{P_{X,Y}}(G_{t-1})$ in comparison to $\mathcal{P_{X,Y}}$ reduces the true error of $f^{*}$. This also means that for every $f \in \F$,
			\begin{displaymath}
			R_{\mathcal{P_{X,Y}}}(f) - R_{\mathcal{P_{X,Y}}(G_{t-1})}(f) \leq R_{\mathcal{P_{X,Y}}}(f^{*}) - R_{\mathcal{P_{X,Y}}(G_{t-1})}(f^{*}),
			\end{displaymath}
			which results in 
			\begin{displaymath}
			R_{\mathcal{P_{X,Y}}}(\hat{f})  \leq R_{\mathcal{P_{X,Y}}}(f^{*}) + \epsilon.
			\end{displaymath}
		\end{proof}		
		
		\begin{proof}[of Lemma \ref{lemma:radius_active}]
			The proof is similar to the proof of Lemma \ref{lemma:radius}. We consider the last modification of $G_t$ as a run of {\iLESS}, under $\mathcal{P_{X,Y}}(G_{t-1})$, with $m_0 \eqdef 2^{\lfloor  log_2 m \rfloor -1}$ examples and delta equal to $\frac{\delta}{4m_0}$.
			
			Under event $\cal K$, the conditions of Lemma~\ref{lemma:radius} hold, and by Lemma \ref{lemma:f*in}, $R_{\mathcal{P_{X,Y}}(G_{t-1})}(f^{*}) \leq R(f^*)$. We simply 
			apply Lemma~\ref{lemma:radius} with these parameters to get $A'$ ($A$ in 
			Lemma~\ref{lemma:radius}).
		
			$$
			A' = 4d \ln( \frac{16m_0 e}{d\delta/4m_0} )= 4d \ln( \frac{64m_0^2 e}{d\delta } ).
			$$
			The fact that $m/4 \leq m_0 \leq m/2$ completes the proof.
		\end{proof}

		\begin{proof}[of Lemma \ref{lemma:max_examples_observed}]
			We know by Lemma \ref{lemma:radius_active} that there exist constants $C_1$,$C_2$ that depend only on $\ln(\frac{1}{\delta})$ and $d$, and are independent of $m$, s.t. 
			$$
			\sigma_{Active} \leq C_1\frac{\ln m}{m} + C_2\sqrt{\frac{\ln m}{m}\cdot R(f^*)}.
			$$
			We also know by the definition of {\ActiveiLESS} (Strategy \ref{alg:Active-iLESS}), that it terminates when $\sigma_{Active}$ is smaller than the given $\epsilon$. We will find $m$ large enough s.t.	
			
			\begin{eqnarray}
			&& C_1\frac{\ln m}{m} \leq \epsilon/2, \label{bb80}\\
			&& C_2\sqrt{\frac{\ln m}{m}\cdot R(f^*)} \leq \epsilon/2 \label{bb81}.
			\end{eqnarray}	
			We assume that $\epsilon \leq 1/e$, as it is easy to find a proper $m$ for $\epsilon > 1/e$. Starting with Equation (\ref{bb80}), we want to show that $m=O(\frac{1}{\epsilon}\ln(\frac{1}{\epsilon}))$ satisfies it. Thus, we find $k_1$ s.t.
			\begin{eqnarray*}
			&& C_1\frac{\ln (k_1\frac{1}{\epsilon}\ln(\frac{1}{\epsilon}))}{k_1\frac{1}{\epsilon}\ln(\frac{1}{\epsilon})} \leq \frac{\epsilon}{2} \label{bb82}\\
			&\Leftrightarrow& \frac{\ln(k_1\frac{1}{\epsilon}\cdot \ln( \frac{1}{\epsilon}))}{\ln(\frac{1}{\epsilon})}
			\leq \frac{k_1}{2C_1}. 
			\end{eqnarray*}
			Bounding the left-hand side of the equation for $\epsilon \leq 1/e$ gives us,
			\begin{eqnarray*}	
				\frac{\ln(k_1\frac{1}{\epsilon}\cdot \ln( \frac{1}{\epsilon}))}{\ln(\frac{1}{\epsilon})} &\leq& \frac{\ln(k_1\frac{1}{\epsilon}\cdot \frac{1}{\epsilon})}{\ln(\frac{1}{\epsilon})} \\
				&\leq& 2+\ln k_1.
			\end{eqnarray*}		
			We need to find $k_1$ that will satisfy  
			\begin{eqnarray*}	
				2+\ln k_1 \leq \frac{k_1}{2C_1}.
			\end{eqnarray*}	
			$k_1=16C_1^2$ will work for $C_1 \geq 1$; otherwise, we take $k_1=10$. 
			
			We use the same procedure to show that $m=O(\frac{R(f^*)}{\epsilon^2}\ln(\frac{R(f^*)}{\epsilon^2} ))$ satisfies Equation (\ref{bb81}). We rewrite the equation in the following way:
			\begin{eqnarray*}
			&& \frac{\ln m}{m} \leq \frac{\epsilon^2}{4C_2^2 R(f^*)} \eqdef \epsilon_0.
			\end{eqnarray*}
			We assume that $\epsilon_0 \leq 1/e$ ($m=4$ holds otherwise) and find $k_2$ s.t. 
			\begin{eqnarray*}
				&& \frac{\ln (k_2\frac{1}{\epsilon_0}\ln(\frac{1}{\epsilon_0}))}{k_2\frac{1}{\epsilon_0}\ln(\frac{1}{\epsilon_0})} \leq \epsilon_0.
			\end{eqnarray*}
			As before, we reduce the problem to finding $k_2$ that satisfies
			\begin{eqnarray*}
				2+\ln(k_2) \leq k_2.
			\end{eqnarray*}
			$k_2 = 4$ suffices. We thus get that $m=O(\frac{1}{\epsilon_0^2}\ln(\frac{1}{\epsilon_0^2} ))=O(\frac{R(f^*)}{\epsilon^2}\ln(\frac{R(f^*)}{\epsilon^2} ))$ satisfies Equation (\ref{bb81}). This implies that there exists a function $m(1/\epsilon, R(f^*)) = O \left(\frac{1}{\epsilon}\ln(\frac{1}{\epsilon})+ \frac{R(f^*)}{\epsilon^2}\ln(\frac{R(f^*)}{\epsilon^2} \right) $ that bounds the total number of labels processed by {\ActiveiLESS}.
		\end{proof}
		
		\begin{lemma}
			\label{ChernoffBound}
			\cite{JohnCannyNotes} Let $X_1,X_2,...,X_n$ be independent Bernoulli trials with $Pr[X_i=1]=p$, let $X\eqdef\sum_{i=1}^{n} X_i$, and let $\mu = \E X$. Then, for every $\alpha \geq 0$: $$\Pr (X<(1-\alpha)\mu) \leq \exp(-\mu \alpha^2 /2).$$
		\end{lemma}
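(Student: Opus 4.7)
The plan is to follow the standard Chernoff–Cramér recipe: convert the lower-tail event into an expectation via an exponential moment, factor it over the independent Bernoullis, and then optimize the free parameter. First, I would fix $t > 0$ and apply Markov's inequality to $e^{-tX}$, writing
\begin{equation*}
\Pr(X < (1-\alpha)\mu) \leq \frac{\E[e^{-tX}]}{e^{-t(1-\alpha)\mu}}.
\end{equation*}
By independence, $\E[e^{-tX}] = (1 - p + pe^{-t})^n = (1 + p(e^{-t}-1))^n$. Using the elementary inequality $1 + x \leq e^{x}$ and the identity $np = \mu$, this is at most $\exp\!\bigl(\mu(e^{-t}-1)\bigr)$. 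So the tail is bounded by $\exp\!\bigl(\mu(e^{-t}-1) + t(1-\alpha)\mu\bigr)$.

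Next I would optimize in $t$. Assuming first that $0 \leq \alpha \leq 1$ (the case $\alpha > 1$ is vacuous since $X \geq 0$ forces $\Pr(X < (1-\alpha)\mu) = 0$), the derivative in $t$ vanishes at $t^* = -\ln(1-\alpha) \geq 0$. Plugging $t^*$ back in yields the classical form
\begin{equation*}
\Pr(X < (1-\alpha)\mu) \;\leq\; \left( \frac{e^{-\alpha}}{(1-\alpha)^{1-\alpha}} \right)^{\mu}.
\end{equation*}

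The remaining step, which I expect to be the only genuinely non-mechanical part, is to show that the base of this exponential is at most $e^{-\alpha^2/2}$ for all $\alpha \in [0,1]$. Equivalently, I must verify that
\begin{equation*}
(1-\alpha)\ln(1-\alpha) \;\geq\; -\alpha + \tfrac{1}{2}\alpha^{2}.
\end{equation*}
The cleanest route is to define $\phi(\alpha) = (1-\alpha)\ln(1-\alpha) + \alpha - \alpha^{2}/2$, check $\phi(0)=0$, and differentiate twice to show $\phi''(\alpha) = \alpha/(1-\alpha) \geq 0$ on $[0,1)$, so $\phi' \geq \phi'(0) = 0$ and hence $\phi \geq 0$. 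Raising both sides of the base inequality to the power $\mu$ then gives exactly $\exp(-\mu\alpha^2/2)$, completing the bound.

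Finally, I would tie up the boundary case $\alpha > 1$ separately with the one-line observation above, and note that $\alpha = 0$ gives the trivial bound $1$. The main obstacle is purely the calculus lemma about the deviation function; everything else is bookkeeping on the moment generating function.
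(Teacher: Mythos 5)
Your proposal is correct and follows essentially the same route as the paper: Markov's inequality applied to $e^{-tX}$, the bound $\E[e^{-tX}]\leq\exp(\mu(e^{-t}-1))$ via independence and $1+x\leq e^x$, optimization at $t=-\ln(1-\alpha)$, and finally the key inequality $(1-\alpha)\ln(1-\alpha)\geq-\alpha+\alpha^2/2$. The only (immaterial) difference is that you establish this last inequality by a second-derivative argument whereas the paper reads it off the Taylor expansion of $\ln(1-\alpha)$, observing that all remaining terms are positive.
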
	
		
		\begin{proof}
			This proof is taken from the work of John Canny \cite{JohnCannyNotes}.
			
			For $t>0$, we have 
			\begin{eqnarray}
				\Pr (X<(1-\alpha)\mu) = \Pr (\exp(-tX) > \exp(-t(1-\alpha)\mu)). \label{ap1}
			\end{eqnarray}
			We use Markov's inequality. For a nonnegative random variable $X$, and $a>0$,
			\begin{eqnarray*}
				\Pr (X \leq a) \leq \frac{\E(X)}{a}.
			\end{eqnarray*}
			We apply the inequality for the right-hand side of Equation (\ref{ap1}), to get
			\begin{eqnarray}
				\Pr (X<(1-\alpha)\mu) \leq  \frac{\E(\exp(-tX))}{\exp(-t(1-\alpha)\mu)}. \label{ap2}
			\end{eqnarray}
			$X_1,X_2,...,X_n$ are independent and thus
			$$\E(\exp(-tX)) = \prod_{i=1}^{n} \E(\exp(-tX_i)).$$
			For each $X_i$
			$$\E(\exp(-tX_i)) = pe^{-t} + (1-p) = 1 - p(1 - e^{-t}).$$
			We use the fact that $1-x < \exp(-x)$ for all $x$, with $x=p(1 - e^{-t})$, to get 
			$$\E(\exp(-tX_i)) \leq \exp(-p(1 - e^{-t})), $$
			and conclude that 
			\begin{eqnarray} \label{bb60}
			\E(\exp(-tX)) = \prod_{i=1}^{n} \E\left(\exp(-tX_i)\right) \leq \prod_{i=1}^{n} \exp\left(-p(1 - e^{-t})\right)  \nonumber\\
			 =\exp\left(\sum_{i=1}^{n} p(e^{-t}-1)\right) = \exp\left(\mu (e^{-t}-1)\right).
			\end{eqnarray}
			Going back to Equation (\ref{ap2}), we have,
			\begin{eqnarray}
				\Pr (X<(1-\alpha)\mu) \leq \frac{\exp \left( \mu (e^{-t}-1)\right)}{\exp\left(-t(1-\alpha)\mu\right)} = \exp\left(\mu (e^{-t}-1+t-t\alpha ) \right). \label{ap3}
			\end{eqnarray}
			We choose $t>0$ to make the right-hand side of the equation as small as possible. After derivation, we get that the best $t$ is $t=\ln(\frac{1}{1-\alpha})$, and plugging it into Equation (\ref{ap3}) gives us,
			\begin{eqnarray}
			\Pr (X<(1-\alpha)\mu) &\leq& \exp\left(\mu (1-\alpha-1+\ln(\frac{1}{1-\alpha})-\ln(\frac{1}{1-\alpha})\alpha )\right) \nonumber \\
								  &=& \exp \left( \mu (-\alpha+\ln(\frac{1}{1-\alpha})(1-\alpha) ) \right) \nonumber \\
								  &=& \left(\frac{e^{-\alpha}}{(1-\alpha)^{1-\alpha}}\right)^\mu.	 \label{ap4}
			\end{eqnarray}			
			We now simplify this bound to get the desired result. We know that $(1-\alpha)^{1-\alpha} = e^{(1-\alpha)ln(1-\alpha)}$, and by Taylor expansion 
			$$
			ln(1-\alpha) = -\alpha - \frac{\alpha^2}{2} - \frac{\alpha^3}{3}... ,
			$$
			which multiplied by $(1-\alpha)$, gives us
			\begin{equation}
			(1-\alpha)ln(1-\alpha) = -\alpha + \frac{\alpha^2}{2} + \text{positive terms} > -\alpha + \frac{\alpha^2}{2}.	\label{cccc}
			\end{equation}
			Plugging (\ref{cccc}) into Equation (\ref{ap4}), we finally get,
			\begin{eqnarray}
			\Pr (X<(1-\alpha)\mu) &\leq& \left(\frac{e^{-\alpha}}{(1-\alpha)^{1-\alpha}}\right)^\mu \nonumber \\ 
								  &=& \left(\frac{e^{-\alpha}}{e^{(1-\alpha)ln(1-\alpha)}}\right)^\mu \nonumber \\ 
								  &\leq& \left(\frac{e^{-\alpha}}{e^{-\alpha + \frac{\alpha^2}{2}}}\right)^\mu \nonumber \\ 
								  &=& e^{-\mu\alpha^{2}/2}
			\end{eqnarray}
		\end{proof}
		
		\begin{lemma}
			\label{ChernoffBound2}
			\cite{JohnCannyNotes} Let $X_1,X_2,...,X_n$ be independent Poisson trials with $Pr[X_i=1]=p$, let $X\eqdef\sum_{i=1}^{n} X_i$, and let $\mu = \E X$. Then, for every $\alpha\geq 2e-1$: 
			$$
			\Pr (X>(1+\alpha)\mu) \leq 2^{-\mu \alpha}.
			$$\\
		\end{lemma}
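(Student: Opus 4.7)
The plan is to mirror the proof of Lemma \ref{ChernoffBound} but for the upper tail. Given $t>0$, I would start from
\[
\Pr(X > (1+\alpha)\mu) = \Pr\bigl(\exp(tX) > \exp(t(1+\alpha)\mu)\bigr),
\]
apply Markov's inequality to obtain $\Pr(X>(1+\alpha)\mu) \leq \E[\exp(tX)]/\exp(t(1+\alpha)\mu)$, and then use independence together with the elementary inequality $1+x \leq e^x$ applied to $\E[\exp(tX_i)] = 1 + p(e^t - 1)$. Taking the product over $i$ yields $\E[\exp(tX)] \leq \exp(\mu(e^t - 1))$, exactly as in the lower-tail calculation (\ref{bb60}).

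Substituting, the bound becomes $\exp\bigl(\mu(e^t - 1 - t(1+\alpha))\bigr)$. Differentiating the exponent with respect to $t$ and setting the derivative to zero gives the optimal choice $t = \ln(1+\alpha)$, which is positive since $\alpha > 0$. Plugging this in yields the classical form
\[
\Pr(X > (1+\alpha)\mu) \leq \left(\frac{e^{\alpha}}{(1+\alpha)^{1+\alpha}}\right)^{\mu}.
\]

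The remaining and only substantive step is to convert this expression into the cleaner $2^{-\mu\alpha}$ bound under the hypothesis $\alpha \geq 2e - 1$. Taking $\mu$-th roots, it suffices to show $(2e)^{\alpha} \leq (1+\alpha)^{1+\alpha}$, or equivalently, after taking logarithms, $\alpha(1 + \ln 2) \leq (1+\alpha)\ln(1+\alpha)$. Since $\alpha \geq 2e - 1$ forces $1+\alpha \geq 2e$ and hence $\ln(1+\alpha) \geq \ln(2e) = 1 + \ln 2$, I can bound
\[
(1+\alpha)\ln(1+\alpha) \geq (1+\alpha)(1+\ln 2) \geq \alpha(1+\ln 2),
\]
which closes the argument.

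I expect the main (minor) obstacle to be the final algebraic comparison: the standard Chernoff expression $e^{\alpha}/(1+\alpha)^{1+\alpha}$ is not obviously bounded by $2^{-\alpha}$, and the threshold $\alpha \geq 2e - 1$ is chosen precisely so that the inequality $\ln(1+\alpha) \geq 1 + \ln 2$ kicks in. Once that is observed, the rest of the proof is a routine transcription of the moment generating function argument used for Lemma \ref{ChernoffBound}.
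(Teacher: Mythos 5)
Your proposal is correct and follows essentially the same route as the paper's proof sketch: the Markov/moment-generating-function argument with the optimal choice $t=\ln(1+\alpha)$ yielding $\left(e^{\alpha}/(1+\alpha)^{1+\alpha}\right)^{\mu}$, followed by exploiting $\alpha\geq 2e-1\Rightarrow 1+\alpha\geq 2e$ to pass to $2^{-\mu\alpha}$. The only cosmetic difference is the final algebraic step, where the paper bounds $(1+\alpha)^{1+\alpha}\geq(2e)^{1+\alpha}\geq(2e)^{\alpha}$ directly instead of taking logarithms as you do.
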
	
		\begin{proof sketch}
			This sketch is taken from the work of John Canny \cite{JohnCannyNotes}. It is almost identical to the proof of Lemma \ref{ChernoffBound}.
			
			We start by showing that 
		\begin{eqnarray*}
			\Pr (X>(1+\alpha)\mu) &\leq& \left(\frac{e^{\alpha}}{(1+\alpha)^{1+\alpha}}\right)^\mu.	
		\end{eqnarray*}	
		For every $t>0$, 
		\begin{eqnarray*}
		\Pr (X>(1+\alpha)\mu) &=& \Pr[\exp(tX)>\exp \left( t(1+\alpha)\mu \right) ].	
		\end{eqnarray*}
		As we did in Lemma \ref{ChernoffBound}, we compute the Markov bound, 
		
		\begin{eqnarray*}
		\Pr (X>(1+\alpha)\mu) &\leq& \frac{\E(\exp(tX))}{\exp(t(1+\alpha)\mu)},
		\end{eqnarray*}
		and use the fact that $X_i$ are independent, just like in (\ref{bb60}), to get that
		$$\E(\exp(tX)) \leq \exp \left( \mu (e^{t}-1) \right) .$$
		Thus we get that 
		\begin{eqnarray*}
		\Pr (X>(1+\alpha)\mu) &\leq& \frac{\exp(\mu (e^{t}-1))}{\exp(t(1+\alpha)\mu)}=\exp\left(\mu(e^t-1-t-\alpha t)\right).
		\end{eqnarray*}
		From deviation, we choose $t=\ln(1+\alpha)$ to get
		\begin{eqnarray*}
			\Pr (X>(1+\alpha)\mu) &\leq& \left(\frac{e^{\alpha}}{(1+\alpha)^{1+\alpha}}\right)^\mu.	
		\end{eqnarray*}	
		For $\alpha \geq 2e-1$:
		\begin{eqnarray*}
			\Pr (X>(1+\alpha)\mu) &\leq& \left(\frac{e^{\alpha}}{(1+\alpha)^{1+\alpha}}\right)^\mu \leq \left(\frac{e^{\alpha}}{(2e)^{1+\alpha}}\right)^\mu \leq \left(\frac{e^{\alpha}}{(2e)^{\alpha}}\right)^\mu = 2^{-\mu\alpha}.	
		\end{eqnarray*}	
		\end{proof sketch}
		
		\begin{lemma}
			\label{lemma:radius_batch}
			Given that event $\cal K$ (see Definition \ref{K}) occurred, the radius of $\BatchiLESS$, as defined in Strategy \ref{alg:Active-iLESS}, stage 4, satisfies
			\begin{eqnarray}
			\sigma_{Active}=O\left(\frac{B}{m}+ \sqrt{\frac{B}{m} \cdot R(f^{*})} \right),
			\end{eqnarray}
			where $B \eqdef 4d \ln( \frac{8m^2e}{d\delta} ).$
		\end{lemma}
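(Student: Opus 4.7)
The plan is to mirror the proof of Lemma \ref{lemma:radius_active} almost verbatim, since $\BatchiLESS$ inherits its final radius $\sigma_{Active}$ by simulating $\ActiveiLESS$ on a random ordering of $S_m$. Concretely, first identify the index $t^{\star} \eqdef 2^{\lfloor \log_2 m \rfloor}$ as the last iteration at which the low-error set $G_t$ is updated during the simulation. The final batch processed before that update consists of $m_0 \eqdef t^{\star}/2$ fresh examples, and the confidence plugged into the slacks at stage 4 of Strategy~\ref{alg:Active-iLESS} is $\delta/(2t^{\star}) = \delta/(4m_0)$.

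Second, under event $\cal K$, view this last update as a stand-alone call to $\iLESS$ on the $m_0$ samples, drawn from the induced distribution $\mathcal{P_{X,Y}}(G_{t^{\star}-1})$. By Lemma~\ref{lemma:f*in}, every $f^*$ of the original distribution lies in $G_{t^{\star}-1}$ and satisfies $R_{\mathcal{P_{X,Y}}(G_{t^{\star}-1})}(f^*) \le R(f^*)$, so the hypotheses of Lemma~\ref{lemma:radius} are met. Invoking it yields
$$
\sigma_{Active} \le 6\,\frac{A'}{m_0} + 3\sqrt{\frac{A'}{m_0}\cdot R(f^*)},\qquad A' \eqdef 4d\ln\!\left(\frac{16\, m_0\, e}{d\cdot \delta/(4m_0)}\right) = 4d\ln\!\left(\frac{64\, m_0^2\, e}{d\delta}\right).
$$

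Third, translate the bound from $m_0$ to $m$ using $m/4 \le m_0 \le m/2$: this gives $A'/m_0 \le 4A'/m$ and $A' \le 4d\ln(16m^2 e/(d\delta))$, which differs from the stated $B = 4d\ln(8m^2 e/(d\delta))$ only by a constant inside the logarithm and is therefore absorbed into the $O(\cdot)$ notation. The conclusion
$$
\sigma_{Active} = O\!\left(\frac{B}{m} + \sqrt{\frac{B}{m}\cdot R(f^*)}\right)
$$
follows. I do not expect any serious obstacle here: the distributional shift from $\mathcal{P_{X,Y}}$ to $\mathcal{P_{X,Y}}(G_{t^{\star}-1})$ is the only nontrivial ingredient, and it is already handled by Lemma~\ref{lemma:f*in}. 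The rest is bookkeeping of constants to line up the definition of $B$ with the $A'$ produced by the reduction to $\iLESS$.
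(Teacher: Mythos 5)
Your proposal is correct and follows essentially the same route as the paper's own proof: identify the last update of $G_t$ as a stand-alone run of $\iLESS$ on the induced distribution $\mathcal{P_{X,Y}}(G_{T-1})$ with the appropriate sample size and confidence, invoke Lemma~\ref{lemma:radius}, use Lemma~\ref{lemma:f*in} to replace $R_{\mathcal{P_{X,Y}}(G_{T-1})}(f^*)$ by $R(f^*)$, and absorb the constant-factor discrepancies between your $A'$ and the stated $B$ into the $O(\cdot)$. The only differences are notational (your $m_0$ is the batch size, the paper's $m_0$ is the total stream length) and in the exact bookkeeping of constants, neither of which affects correctness.
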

		
		\begin{proof}
			{\BatchiLESS} simulates a run of {\ActiveiLESS}. 
			Consider a run of \ActiveiLESS with $m_0$ examples and $\delta = \delta_0$. The last iteration in which $G_t$ has changed (relative to $G_{t-1}$) was iteration $2^{\lfloor \log_2 m_0 \rfloor} \eqdef T$. $G_T$ is calculated in exactly the same way as {\iLESS} calculates its $G$ under probability distribution $\mathcal{P_{X,Y}}(G_{T-1})$, when it
			is provided with $T/2$ examples, and $\frac{\delta_0}{2T}$ as its delta. 
			Assuming that event $\cal K$ occurred, we deduce that event $\cal E$ (see Definition \ref{E}) occurred as well. Therefore,  Lemma~\ref{lemma:radius} holds for the last iteration of  	{\BatchiLESS}.
			 
		{\iLESS} operates in this run on  $T/2$ labeled examples, and it holds that $m_0/4 \leq T/2 \leq m_0/2$. 
		The delta it uses in this run is  $\frac{\delta_0}{2T} > \frac{\delta_0}{m_0}$, so by Lemma \ref{lemma:radius},
		we have
			\begin{equation*}
			\sigma_{Active}	\leq 6\frac{B}{m_0/4} + 3\sqrt{\frac{B}{m_0/4} \cdot R_{\mathcal{P_{X,Y}}(G_{t-1})}(f^{*})} = 24\frac{B}{m_0} + 6\sqrt{\frac{B}{m_0} \cdot R_{\mathcal{P_{X,Y}}(G_{t-1})}(f^{*})}.
			\end{equation*}
			To finish the proof, we need to show that $R(f^*) \geq R_{\mathcal{P_{X,Y}}(G_{t-1})}(f^{*})$.  From Lemma \ref{lemma:f*in}, we know that when $\cal K$ occurs, any $f^{*}$ of the original distribution $\mathcal{P_{X,Y}}$ resides within $G_{t}$ for all $t$. Thus, the true error of $f^*$ can only decrease under the revised distribution 
			$G_{t-1}(f^{*})$.
		\end{proof}

		\begin{theorem}  
			\label{thm:LessRejection_batch}
			Let $\mathcal{F}$ be a hypothesis class with VC-dimension $d$, and let	
			$\mathcal{P_{X,Y}}$ be an unknown probability distribution. Assume that event $\cal K$ (see Definition \ref{K}) occurred. Then, for all $f^*$, the abstain rate is bounded by
			$$
			1-\Phi(\BatchiLESS) \leq \theta_{f^*}(R_{0}) \cdot R_{0},	
			$$		
			where
			$$
			R_{0} \eqdef 2\cdot R(f^{*}) + 44 \cdot \frac{B}{m} + 12 \cdot \sqrt{\frac{B}{m} \cdot R(f^{*})}.
			$$
			where $B \eqdef 4d \ln( \frac{8m^2e}{d\delta} ).$
			This immediately implies (by definition) that 
			$$
			1-\Phi(\BatchiLESS) \leq \theta(R_{0}) \cdot R_{0}.	
			$$	
		\end{theorem}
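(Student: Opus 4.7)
The plan is to mirror the proof of Theorem \ref{thm:LessRejection}, now carried out under the modified distribution $\mathcal{P_{X,Y}}(G_{T-1})$ where $T = 2^{\lfloor \log_2 m \rfloor}$ is the last iteration at which \ActiveiLESS updates its low-error set, and using the radius bound supplied by Lemma \ref{lemma:radius_batch}. By construction, the abstain rate of \BatchiLESS equals $\Delta G_T$, so the ultimate goal is to show $G_T \subseteq B(f^*, R_0)$ and then apply the disagreement coefficient.

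Lemma \ref{lemma:radius_batch} (invoked under event $\mathcal{K}$) gives $\sigma_{Active} \leq 24 B/m + 6 \sqrt{(B/m) R(f^*)}$, which is the analog of the bound in Lemma \ref{lemma:radius} but with the $A/m$ terms replaced by $4B/m$ (because the last batch uses only $T/2 \in [m/4, m/2]$ examples). First, for any $f \in G_T$, the defining inequality gives $\hat{R}(f,\hat{S}) \leq \hat{R}(\hat{f}, \hat{S}) + \sigma_{Active}$; next, since $\hat{f}$ is the ERM on $\hat{S}$ and $f^* \in G_{T-1}$ by Lemma \ref{lemma:f*in}, we have $\hat{R}(\hat{f}, \hat{S}) \leq \hat{R}(f^*, \hat{S})$; finally, inequality (\ref{errorbounds2}) applied to $f^*$ under $\mathcal{P_{X,Y}}(G_{T-1})$ gives $\hat{R}(f^*, \hat{S}) \leq R_{\mathcal{P_{X,Y}}(G_{T-1})}(f^*) + \bar{\sigma}_{\hat{R}-R}$, and Lemma \ref{lemma:f*in} also yields $R_{\mathcal{P_{X,Y}}(G_{T-1})}(f^*) \leq R(f^*)$ because every relabeling only benefits $f^*$. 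Chaining these exactly as in (\ref{e18})--(\ref{e20}), and then applying (\ref{errorbounds1}) once more as in (\ref{e21})--(\ref{e25}) to lift $\hat{R}(f,\hat{S})$ to $R_{\mathcal{P_{X,Y}}(G_{T-1})}(f)$, I obtain an upper bound of the form $R_{\mathcal{P_{X,Y}}(G_{T-1})}(f) \leq R(f^*) + c_1 B/m + c_2 \sqrt{(B/m) R(f^*)}$ for explicit constants $c_1, c_2$.

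To translate this into a bound on $\Pr_{X\sim \mathcal{P_{X}}}\{f(X) \neq f^*(X)\}$, I repeat the union-bound calculation of (\ref{eq30}), but under the modified distribution: splitting the event $\{f(X) \neq f^*(X)\}$ according to whether $f^*(X) = Y'$ (where $Y'$ is the label under $\mathcal{P_{X,Y}}(G_{T-1})$), we get
$$
\Pr\{f(X) \neq f^*(X)\} \leq R_{\mathcal{P_{X,Y}}(G_{T-1})}(f) + R_{\mathcal{P_{X,Y}}(G_{T-1})}(f^*) \leq R_{\mathcal{P_{X,Y}}(G_{T-1})}(f) + R(f^*).
$$
Substituting the bound on $R_{\mathcal{P_{X,Y}}(G_{T-1})}(f)$ yields $G_T \subseteq B(f^*, R_0)$ with $R_0 = 2 R(f^*) + 44 B/m + 12 \sqrt{(B/m) R(f^*)}$, and therefore $\Delta G_T \leq \Delta B(f^*, R_0) \leq \theta_{f^*}(R_0) \cdot R_0$, which is exactly the claim.

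The main obstacle is bookkeeping rather than any new idea: one must carefully track which distribution each empirical or true error is measured under, and verify that the slack terms in every step of the chain are inflated by the correct factors of $4$ on $B/m$ and $2$ on $\sqrt{B/m \cdot R(f^*)}$ relative to the $\iLESS$ analysis. These factors are precisely where the constants $44 = 4 \cdot 11$ and $12 = 2 \cdot 6$ in the statement come from; the rest of the algebra is identical to the proof of Theorem \ref{thm:LessRejection}, modulo bookkeeping the use of $\bar{\sigma}$ rather than $\hat{\sigma}$ where $R_{\mathcal{P_{X,Y}}(G_{T-1})}(f^*)$ has already been bounded by $R(f^*)$.
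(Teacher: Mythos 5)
Your proposal is correct and follows essentially the same route as the paper's own (sketch of a) proof: view the final update of $G_T$ as a run of \iLESS{} on the induced distribution $\mathcal{P_{X,Y}}(G_{T-1})$ with $T/2\in[m/4,m/2]$ examples, use Lemma \ref{lemma:f*in} and Lemma \ref{lemma:f_star_best} to keep $f^*$ as a risk minimizer with $R_{\mathcal{P_{X,Y}}(G_{T-1})}(f^*)\le R(f^*)$, rerun the chain of Theorem \ref{thm:LessRejection} (including the adaptation of (\ref{eq30})) under the modified labels, and note that $\theta_{f^*}$ depends only on the unchanged marginal. You simply spell out the parameter substitution that the paper compresses into ``activate Theorem \ref{thm:LessRejection} with the relevant parameters,'' and your accounting of the inflated constants $44=4\cdot 11$ and $12=2\cdot 6$ matches Lemma \ref{lemma:radius_batch}.
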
 
		
		\begin{proof sketch}
			The proof is very similar to the proof of Lemma \ref{lemma:radius_batch}. We observe the last modification of $G_T$, and notice that the change was made according to a run of {\iLESS}, on the implied probability distribution $\mathcal{P_{X,Y}}(G_{T-1})$. Then we simply activate Theorem \ref{thm:LessRejection} with the relevant parameters plugged into it. 
			
			Note that by Lemma \ref{lemma:f*in}, all $f^*$ of the original distribution reside within $G_t$ for all $t$, and thus, by Lemma \ref{lemma:f_star_best}, they are all true risk minimizers of $\mathcal{P_{X,Y}}(G_{T-1})$. This also implies that $R(f^*) \geq R_{\mathcal{P_{X,Y}}(G_{t-1})}(f^{*})$ and thus can be used to bound Equation (\ref{eq30}) of the original theorem that was proven for {\LESS}. $\theta_{f}$ is independent of $\mathcal{P_{Y|X}}$ for all $f$, and thus the change of the labels does not affect it.
			
		\end{proof sketch}

		
		\appendix

		\bibliography{S_PAC}
	\end{document}